\newcommand{\BS}{\color{magenta}}
\newcommand{\cmark}{\ding{51}}%
\newcommand{\xmark}{\ding{55}}%
\def\tsc#1{\csdef{#1}{\textsc{\lowercase{#1}}\xspace}}
\newcommand{\new}[1]{\textcolor{black}{#1}}
\newcommand{\del}[1]{\iffalse {#1} \fi}
\newtheorem{remark}{Remark}
\newtheorem{proposition}{Proposition}
\newtheorem{corollary}{Corollary}
\begin{document}
\let\WriteBookmarks\relax
\def\floatpagepagefraction{1}
\def\textpagefraction{.001}
\shorttitle{Towards Scalable Physically Consistent Neural Networks: an Application to Data-driven Multi-zone Thermal Building Models}
%\shorttitle{Multi-zone Physically Consistent Neural Networks: Towards fully Data-driven Control-oriented Building Temperature Models}
\shortauthors{Di Natale et~al.}

%\begin{frontmatter}

\title [mode = title]{Towards Scalable Physically Consistent Neural Networks: an Application to Data-driven Multi-zone Thermal Building Models}
%\tnotemark[1]%,2]

%\tnotetext[1]{This research was supported by the Swiss National Science Foundation under NCCR Automation, grant agreement 51NF40\_180545.}

%\tnotetext[2]{The second title footnote which is a longer text matter
 %  to fill through the whole text width and overflow into
  % another line in the footnotes area of the first page.}

%%%%%%%%%%%%%%%%%%%%%%%%%%%%%%%%%%%%%%%%%%%%%%%%%%%%%%%%
%% Authors
%%%%%%%%%%%%%%%%%%%%%%%%%%%%%%%%%%%%%%%%%%%%%%%%%%%%%%%%

\author[1,2]{Di Natale L.}[orcid=0000-0002-3295-412X]
\cormark[1]
\credit{Conceptualization, Methodology, Software, Validation, Formal analysis, Data Curation, Visualization, Writing - Original Draft}

\author[1]{Svetozarevic B.}
\credit{Conceptualization, Methodology, Writing - Review \& Editing,  Supervision }

\author[1]{Heer P.}
\credit{Writing - Review \& Editing, Resources, Funding acquisition}

\author[2]{Jones C.N.}
\credit{Conceptualization, Methodology, Writing - Review \& Editing, Supervision}

\address[1]{Urban Energy Systems Laboratory, Swiss Federal Laboratories for Materials Science and Technology (Empa), 8600 D\"{u}bendorf, Switzerland}
\address[2]{Laboratoire d'Automatique, EPFL, 1015 Lausanne, Switzerland}

\cortext[cor1]{Corresponding author:  \texttt{loris.dinatale@empa.ch} (L. Di Natale)}

%%%%%%%%%%%%%%%%%%%%%%%%%%%%%%%%%%%%%%%%%%%%%%%%%%%%%%%%
%% Abstract
%%%%%%%%%%%%%%%%%%%%%%%%%%%%%%%%%%%%%%%%%%%%%%%%%%%%%%%%

\begin{abstract}
With more and more data being collected, %{\BS \sout{, typically in buildings}}, 
data-driven modeling methods have been gaining in popularity in recent years. While physically sound, classical gray-box models are often cumbersome to identify and scale, and their accuracy might be hindered by their limited expressiveness. On the other hand, classical black-box methods, typically relying on Neural Networks (NNs) nowadays, often achieve impressive performance, even at scale, by deriving statistical patterns from data. However, they remain completely oblivious to the underlying physical laws, which may lead to potentially catastrophic failures if decisions for real-world physical systems are based on them. Physically Consistent Neural Networks (PCNNs) were recently developed to address these aforementioned issues, ensuring physical consistency while still leveraging NNs to attain state-of-the-art accuracy, \new{and applied to zone temperature modeling}. %{\BS \sout{ on zone temperature modeling.}}

In this work, we scale PCNNs to model \new{the temperature dynamics of buildings with several connected thermal zones}\del{building temperature dynamics} %the temperature evolution in %  {\BS thermal zones of} buildings
and propose a thorough comparison with classical gray-box and black-box methods. More precisely, we design three distinct PCNN extensions \new{with different levels of information sharing between the modeled zones}, thereby exemplifying the modularity and flexibility of the architecture, and formally prove their physical consistency. In the presented case study, PCNNs are shown to achieve state-of-the-art accuracy, even outperforming classical NN-based models despite their constrained structure. Our investigations furthermore provide a clear illustration of NNs achieving seemingly good performance while remaining completely physics-agnostic, which can be misleading in practice. %a pitfall of classical black-box approaches, which are able to fit the data well despite remaining physics-agnostic, leading to models with seemingly good performance but that are %unusable{\BS /
%misleading in practice. 
While this performance comes at the cost of computational complexity, PCNNs on the other hand show accuracy improvements of $17$ -- $35\%$ compared to all other physically consistent methods, paving the way for scalable physically consistent models with state-of-the-art performance.
\end{abstract}

\begin{keywords}
%{\LD provide a maximum of 6 keywords, using British spelling and avoiding general and plural terms and multiple concepts (avoid, for example, 'and', 'of'). Be sparing with abbreviations } \sep 
Neural Network \\
Physical consistency \\
Building modeling \\
Deep Learning \\
Physics-inspired

\end{keywords}

%%%%%%%%%%%%%%%%%%%%%%%%%%%%%%%%%%%%%%%%%%%%%%%%%%%%%%%%
%% Main part
%%%%%%%%%%%%%%%%%%%%%%%%%%%%%%%%%%%%%%%%%%%%%%%%%%%%%%%%

\maketitle

    \section{Introduction}
    \label{sec:introduction}
    
Under the pressing issue of climate change, there is a worldwide effort to decrease our global energy consumption. Being responsible for a large part of the final energy consumption and greenhouse gas emissions~\cite{iea2020buildings}, buildings are a primary target in that trend, with space heating and cooling being the main identified culprits~\cite{doi/10.2833/525486}. Interestingly, we can intervene at any stage of the life of a building to decrease its energy intensity, either designing and constructing more efficient new structures~\cite{westermann2019surrogate}, retrofitting old edifices~\cite{deb2021review}, or designing smart controllers to minimize the energy consumption of the current building stock~\cite{svetozarevic2022data}. However, while decreasing the energy consumption of buildings is the main goal of advanced control algorithms, %policies,
this cannot be done at the expense of the comfort of the inhabitants, who require the temperature to stay within a comfortable range~\cite{lei2022practical}. This calls for accurate building temperature models to close the sim2real gap of advanced control algorithms~\cite{hofer2021sim2real, kadian2020sim2real}. % since many advanced control algorithms rely on them. 
Indeed, Model Predictive Control (MPC) uses a model to predict the impact of possible power input sequences and choose the optimal one \cite{drgovna2020all}, for example, and Reinforcement Learning (RL) control policies have to be trained in simulation prior to their deployment \cite{lei2022practical, di2022near}. %{\BS It is not clear why accurate building models are necessity here. Elaborate on this. Buildings are different and complex, thus tailored made controllers have the best chance to provide optimal performance; for this we need models and the more accurate models we have the smaller is the sim2real gap when controllers are implemented}.

\subsection{Towards data-driven methods}

Since the evolution of the temperature in a thermal zone is governed by the laws of thermodynamics, the most natural way to model it is to write down the corresponding Ordinary Differential Equations (ODEs) and then use custom solvers or discretization schemes %to model the evolution of the temperature as a function of power inputs through time,
to propagate them through time, such as in~\cite{yang2021towards, dawood2022trade}. %{\BS \sout{for example}}. 
To alleviate the engineering burden of constructing the ODEs describing building temperature dynamics, % burden of ensuring the right ODEs,
allow more complex structures to be modeled, and accelerate the entire pipeline, %Even though ODEs allow for detailed modeling of complex structures, there is engineering burden associated with constructing the right ODEs. Therefore,} 
custom modeling tools are often used in practice, such as EnergyPlus, Modelica, TRNSYS, or IDA ICE~\cite{crawley2001energyplus, wetter2006modelica, mazzeo2020energyplus}. Such detailed simulation tools however still require expert knowledge and access to many design parameters that are often not directly available \cite{harb2016development}, % {\BS \sout{as inputs}}, 
%typically on the building envelope or the Heating, ventilation, and Air Conditioning (HVAC) system characteristics, 
which makes them infamously hard to calibrate~\cite{zhang2019whole}. Moreover, solving the complex underlying ODEs to simulate each time step can entail a significant computational burden at runtime~\cite{ascione2017artificial}.

%In the last decade, thanks to the growing amount of sensors installed in buildings, more and more data has been collected and stored, giving researchers the possibility to leverage data-driven control methods, bypassing the cumbersome modeling procedures required to setup purely 
In recent years, owing to the growing amount of data collected in buildings, researchers started to leverage data-driven methods, bypassing the cumbersome procedures and expert knowledge required to set up classical physics-based models~\cite{bourdeau2019modeling}. This gave rise to so-called gray-box or black-box frameworks, both of which use historical data for calibration or training purposes, as pictured in Figure~\ref{fig:data-driven models}. % {\BS (I would add one small table comparing in columns Gray-box, black-box, and PCNNs on a qualitative level. As a summary of the long introduction. Could help the reader follow better the general story imo.)} 

\begin{comment}
On the other hand, classical black-box methods do not rely on any prior knowledge, solely deriving statistical patterns from the data set to model the system, often relying on Neural Networks (NNs), e.g. in~\cite{zou2020towards, svetozarevic2022data, delcroix2021autoregressive}. In the last few years, however, a new research field dubbed \textit{physics-inspired} or \textit{physics-informed} Machine Learning (ML) emerged and started to blur the line between classical gray-box and black-box methods by introducing physics-inspired inductive biases in NNs, e.g. in~\cite{di2021physically, drgona2021physics, gokhale2022physics}.
\end{comment}

    \begin{figure*}
    \begin{center}
    \includegraphics[width=\textwidth]{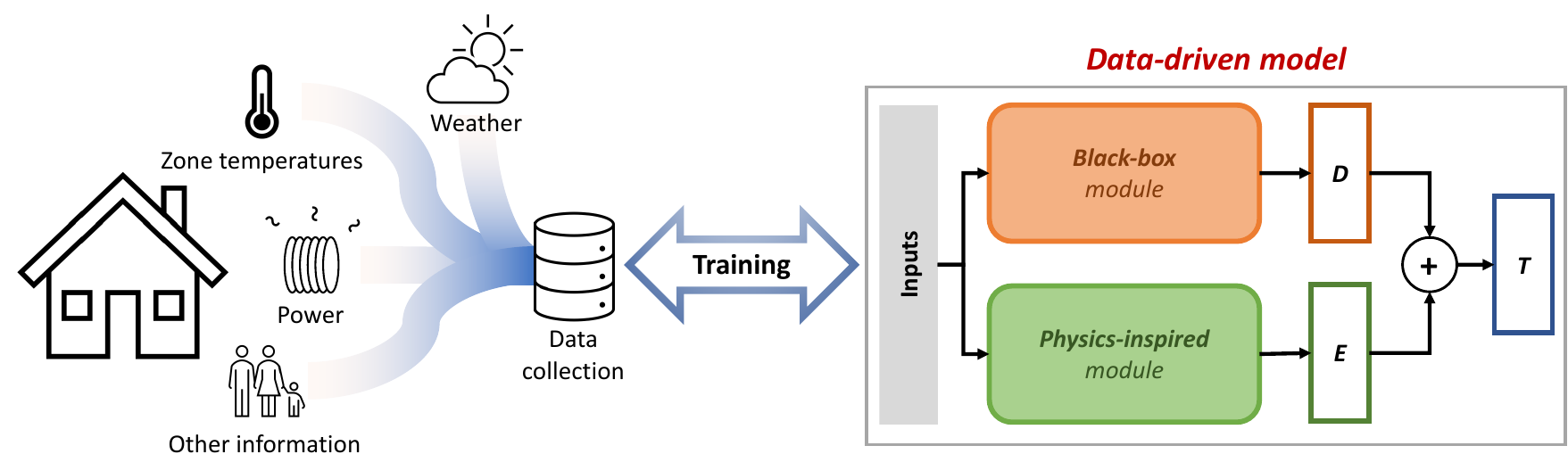}
    \caption{General pipeline of data-driven building thermal modeling frameworks.}
    \label{fig:data-driven models}
    \end{center}
    \end{figure*}

\subsection{Gray- and black-box models}

When a control-oriented thermal building model is designed, typically for MPC, data is in most cases used to identify the parameters of a simplified physics-based model~\cite{foucquier2013state}, usually a low-order Resistance-Capacitance (RC) model, such as in~\cite{maasoumy2014handling, maasoumy2014selecting, harb2016development, li2021grey, arroyo2020identification}. These models are particularly popular due to their ease of implementation, interpretability, close ties to the underlying physics, and because they often give rise to %{\BS/lead to} 
linear dynamics. The latter characteristic is indeed particularly desirable in MPC applications since an appropriate choice of objective function then renders the optimization problem to solve at each time step convex and hence tractable. Nonetheless, the parameter identification procedure of RC models is generally nontrivial and sensitive to the data quality~\cite{lin2012issues, shamsi2021feature}, which partially explains why low-order models often perform better than complex ones~\cite{shamsi2019generalization, berthou2014development}. Alternatively or additionally, when data is available, % {\BS (isn't data always available?)}, 
the residual errors of an often simplified model can be fit to improve its performance, such as in~\cite{sheng2020short}. %, leading to another type of gray-box model. 

%RC-model identification, while popular in MPC, is not trivial and also sensitive to the data used \cite{lin2012issues}, first order models perform well
%typically using Resistance-Capacitance (RC) models \cite{maasoumy2014handling, maasoumy2014selecting},

%While grey-box methods typically refer to the aforementioned procedure, it is also used in the literature to describe frameworks where a physics-based model is first identified (either in a white- or grey-box manner) and the residual errors are then fitted with a data-driven framework, typically a Gaussian Process (GP) or a NN []. These \textit{residual models}, as we will refer to them in the rest of this paper, ...

While the aforementioned gray-box methods only require limited domain knowledge since simplified ODEs are used, which ODEs to choose is not always clear~\cite{li2014review} and wrong choices might hinder the performance of the identified model.
%one still has to come up with the right physics-based architecture, which is not always clear~\cite{li2014review} and might hence hinder the quality of the final model. 
To avoid this pitfall, when enough data is available, fully data-driven black-box models might be used, such as AutoRegressive models with eXogenous inputs (ARX) or Neural Networks (NNs). Indeed, they do not rely on any prior knowledge of the system to model since they directly extract %statistical 
patterns from data to explain and predict the behavior of the system. Consequently, black-box methods are often easier and faster to deploy, more flexible, and thus often more scalable than their gray-box counterparts~\cite{royer2016towards, tien2022machine}. %This for example includes Seasonal AutoRegressive Integrated Moving Average with eXogenous Regressors (SARIMAX) models and NN-based frameworks~\cite{bourdeau2019modeling}.
Furthermore, since they do not have to follow a predefined underlying architecture, black-box methods are generally more expressive, being capable of capturing unknown nonlinear dynamics, and hence usually perform better~\cite{deb2017review}. 

    \subsection{Neural Networks and their inconsistencies}

With the recent advances in Deep Learning (DL), NN-based solutions are gaining in popularity to represent unknown and potentially highly nonlinear dynamics, making them state-of-the-art solutions for time series modeling~\cite{abbasimehr2022improving}. %\footnote{Note that we are modeling the evolution of building temperatures in this paper, which are time series.}~\cite{abbasimehr2022improving}. 
Unsurprisingly, given the broad range of applications of NNs, researchers have already applied them to model building thermal dynamics, \new{for example}\del{e.g.} in ~\cite{zou2020towards, svetozarevic2022data, delcroix2021autoregressive}. When applying NNs to physical systems, one should however keep their well-known generalization issue in mind~\cite{szegedy2013intriguing}, which is partially caused by the \textit{underspecification} plaguing DL applications~\cite{d2020underspecification}. 
This can indeed be particularly problematic for NNs modeling physical processes since they are physics-agnostic and might hence find unrealistic solutions~\cite{deb2021review}. If the training data set does not span all the operating conditions of the system, there is thus a risk for NNs to fail to generalize in a meaningful manner to new conditions, an issue that is typically expected for thermal building models since the collected data sets are generally inherently incomplete~\cite{di2021physically}. 

Deep NNs are indeed able to learn \textit{shortcuts}~\cite{geirhos2020shortcut}, which means they might fit the training data well without fundamentally understanding the problem, hence failing to generalize. % to new data. 
They for example attain superhuman performance on image recognition tasks, and yet fail when undistinguishable noise is added~\cite{szegedy2013intriguing} or the background changes~\cite{beery2018recognition, rosenfeld2018elephant}. They are also able to generate captions without ever looking at the corresponding images~\cite{heuer2016generating}, or detect pneumonia from X-ray scans %, but only on the hospital they were trained on since they turned out to have identified and utilized 
only by looking at hospital-specific tokens %on the training images 
and each hospital's pneumonia prevalence~\cite{zech2018variable}. %, to provide good predictions~\cite{zech2018variable}. 

While these are only a few examples, % more details can be found in~\cite{geirhos2020shortcut}, 
they clearly indicate how NNs can find ways to perform extremely well without fundamentally solving the task at hand. These flawed models are however unable to generalize and cannot be deployed in real-world applications since we have no means to know how they will react to new conditions. While tasks such as object recognition and captioning might be hard to characterize in general, the case of physical system modeling is different. Indeed, we often know the underlying physical laws and can hence impose constraints on NNs that help them understand the task at hand.

%Similar conclusions were reached by Bourdeau et al.~\cite{bourdeau2019modeling}, who advised in favor of hybrid techniques to anchor black-box models in the underlying physics.

\subsection{Physics-inspired Neural Networks}

%As one possible solution to counterbalance the aforementioned brittleness of classical NNs, researchers recently proposed to design \textit{Physics-informed} or \textit{Phyiscs-inspired} NNs (PiNNs), which incorporate some knowledge of the underlying physical laws in NN training procedures to promote desired system properties~\cite{karniadakis2021physics}.
To incorporate some knowledge of the underlying physics in NN training procedures and promote desired system properties, counterbalancing the aforementioned brittleness of classical NNs, researchers recently proposed to design \textit{Physics-informed} or \textit{Phyiscs-inspired} NNs (PiNNs)~\cite{karniadakis2021physics}.
While many works modify the loss function of NNs to steer the learning towards physically meaningful solutions~\cite{daw2017physics, raissi2019physics}, these schemes cannot provide any guarantee about the final model respecting the desired constraints. More systematic approaches hence directly alter the networks' architecture to ensure the underlying physical laws are followed \textit{by design}, i.e.\new{,} at all times, such as in~\cite{greydanus2019hamiltonian, lutter2019deep, cranmer2020lagrangian, di2021physically}. Additionally, since the desired properties are hard-coded in such models, the loss function does not need to be altered, which avoids common pitfalls of classical PiNNs, such as the difficult trade-off between the accuracy and the physical consistency of the model, which can also increase the amount of data needed~\cite{hendriks2020linearly}. These specific NN architectures ensuring some physical properties are philosophically related to the celebrated convolutional \cite{lecun2010convolutional}, recurrent \cite{karpathy2015visualizing}, or graph \cite{scarselli2008graph} NN architectures, which were designed to capture spatial, timely, or neighboring relationships in the input data, respectively.

Despite the recent popularity of the field, to the best of the authors' knowledge, PiNNs were only applied to thermal building modeling in~\cite{gokhale2022physics, nagarathinam2022pacman, drgona2021physics, chen2023physics}. Gokhale et al. relied on the classical PiNN framework, augmenting the loss function of their NNs and creating latent states to include some physical intuition in otherwise standard networks~\cite{gokhale2022physics}, while Nagarathinam et al. designed a specific PiNN architecture for building control~\cite{nagarathinam2022pacman}. On the other hand, Drgo\v{n}a et al. used NNs to replace the matrices in linear models of building dynamics, which allowed them to enforce the stability and dissipativity of the learned system by constraining the eigenvalues of one of the NNs~\cite{drgona2021physics}. %effectively learning them from data, and additionally enforced the stability and dissipativity of the learned model by constraining the eigenvalues of one of some NN~\cite{drgona2021physics}. 
However, %since both rely on standard network architectures, none of 
these works cannot provide guarantees about the physical consistency of their solutions in general beyond stability and dissipativity. 

On the contrary, the Physically Consistent Neural Networks (PCNNs) developed in previous work were theoretically proven to always yield physically consistent temperatures predictions, outperformed a classical gray-box model, and attained an accuracy on par %{\BS (not better already? I think yes)} 
with pure black-box models, but were limited to single-zone temperature modeling~\cite{di2021physically}. \new{PCNNs are composed of a physics-inspired and a black-box module running in parallel, the former ensuring compliance with the underlying physical laws and the latter capturing unmodeled and potentially highly nonlinear dynamics. In a concurrent line of work, so-called PC-NODEs leveraged Irreversible port-Hamiltonian systems to ensure satisfaction of the first and second law of thermodynamics \textit{by design}, relying on a similar training procedure as PCNNs~\cite{zakwan2022physically}. Despite being applied to the modeling of a three-zone building, however, this framework considered preprocessed solar gains as inputs instead of raw irradiation measurements, removing most of the nonlinear dynamics. In other words, PC-NODEs propose an alternative version of the physics-inspired module, but have yet to be applied to more complex case studies where significant nonlinearities are present.}

\subsection{Contribution}

In this work, we propose three different extensions of single-zone PCNNs~\cite{di2021physically} to model entire buildings, exemplifying how one can use the modularity of PCNNs to either expand the physics-inspired or black-box module. We then show that they all retain the desired physical consistency, both theoretically and with a numerical analysis, and investigate their performance on a three-zone building case study. Through extensive comparisons with several classical and state-of-the-art gray- and black-box methods, we demonstrate the ability of multi-zone PCNNs to leverage NNs to be more expressive than physically grounded gray-box methods and even outperform purely black-box NNs in terms of accuracy. Our investigations also clearly illustrate the phenomenon of shortcut learning on building temperature data, with NNs able to fit the data very accurately despite being completely oblivious to the underlying physics, which can be misleading in practice. % and hence unusable in practice. 
Altogether, these experiments prove the effectiveness of the proposed PCNNs as thermal building models, alleviating the need for any engineering overhead, following the underlying physical laws, and reaching state-of-the-art accuracy.

%In this paper, as a further step towards fully data-driven control-oriented building models, we propose three extensions of PCNNs to the multi-zone setting, with different levels of parameter sharing between the zones. We show they all retain the desired physical consistency, both theoretically and with a numerical analysis, and investigate their performance on a three-zone building case study. Through extensive comparisons with several classical and state-of-the-art gray- and black-box methods, we demonstrate the ability of multi-zone PCNNs to leverage NNs to be more expressive than physically grounded gray-box methods and even outperform purely black-box NNs in terms of accuracy. Altogether, these experiments prove the effectiveness of the proposed architectures as control-oriented thermal building models, alleviating the need of any engineering overhead, following the underlying physical laws, and reaching state-of-the-art accuracy.
%where the physical consistency of the proposed architecture is still guaranteed without hindering their expressiveness. More precisely, we design three distinct extensions of single-zone PCNNs to model entire buildings, with different levels of parameter sharing between each zone. All the proposed schemes retain the desired physical consistency by design and show similar performance on a three zone residential building case study. These models are then compared to classical grey- and black-box frameworks, as well as a PiNN, to clearly demonstrate their ability to capture the temperature dynamics of multi-zone buildings.

The remainder of this paper is structured as follows. Section~\ref{sec:backround} first\del{ly} defines the notion of physical consistency and recalls the main principles behind PCNNs before the proposed multi-zone architectures are introduced and theoretically analyzed in Section~\ref{sec:methods}. Section~\ref{sec:case study} then details the case study, implementation considerations, and baseline models. %, and the results are analyzed in Section~\ref{sec:results}. %After a discussion on the implications of our findings in Section~\ref{sec:discussion}, 
Finally, the results are analyzed in Section~\ref{sec:results} and Section~\ref{sec:conclusion} concludes the paper.

    \section{Background}
    \label{sec:backround}
    
%This section introduces a few prerequisites to understand the methods proposed in this work. It firstly clarifies some definitions, then introduces the notion of physical consistency%used throughout this work
%, and finally recalls the design of single-zone PCNNs. 

In this section, we recall a few prerequisites required to understand the methods proposed in this work, clarifying some definitions, formally introducing the notion of physical consistency, and recalling the design of single-zone PCNNs.

    \subsection{Definitions}

Two zones are said to be \textit{adjacent} if they share at least one common wall in a building, and the collection of zones adjacent to a given zone $z$ form its \textit{neighborhood} $\mathcal{N}(z)$. Note that we consider a zone to be included in its \new{own} neighborhood, i.e.\new{,} $z\in\mathcal{N}(z)$. Similarly, a zone is connected with the outside if it \new{comprises}\del{is composed of} at least one external wall. To generalize the notion of neighborhood, we define the \textit{$n$-hop neighborhood} $\mathcal{N}^n(z)$ as the set of zones that can be reached in $n$ steps from zone $z$, moving to an adjacent zone at each step. Note that, by definition, we have $\mathcal{N}^1(z)= \mathcal{N}(z)$, %so that the one-hop neighborhood will usually simply be referred to as the neighborhood of a zone. Note that, by definition, 
and $y\in\mathcal{N}^n(z)\iff z\in\mathcal{N}^n(y)$.

Throughout this work, we assume the building %model 
to be \textit{connected}, i.e.\new{,} there is no zone (or group of zones) isolated from the rest. This assumption is trivial in practice as one can easily train several separate models if this condition is not met.
    
    \subsection{Physical consistency}
    \label{sec:phys}
    
%Before defining PCNNs in detail, let us quickly recall is meant by a \textit{physically consistent} zone temperature model, which we also inherit from our previous work in~\cite{di2021physically} and slightly modify. 
In this paper, we deem the temperature model of a building $\mathcal{B}$ with $m$ thermal zones to be \textit{physically consistent} if the following conditions are met for each zone $z\in\mathcal{B}$:
\begin{align}
    \frac{\partial T^z_{k+i}}{\partial u^z_{k+j}} &> 0 & \forall &0 \leq j<i, \label{equ:input consistency} \\ 
    \frac{\partial T^z_{k+i}}{\partial T^{out}_{k+j}} &> 0 & \forall & 0\leq j<i, \label{equ:outside consistency} \\
    \frac{\partial T^z_{k+i}}{\partial T^{y}_{k+j}} &> 0 & \forall & 0\leq j<i,\ \forall y\in\mathcal{N}^{i-j}(z), \label{equ:neighbors consistency} 
\end{align}
where $T^z_k$ is the temperature in zone $z$ at time step $k$, $u$ its heating/cooling power input, %$\mathcal{N}(z)$ the set of zones connected to zone $z$, i.e. in its neighborhood, 
and $T^{out}$ represents the outside temperature. 
%In words, the temperature predicted at time $k+j$ in zone $z$ should increase if heating is applied, or the temperature outside or in other connected zones augment at time $k+i$, or the initial temperature is higher. 

For example, \eqref{equ:input consistency} implies that %, keeping all other inputs constant, 
applying more heating power $u^z_{k+j}$ %to a zone $z$ 
at time step $k+j$ leads to higher temperatures $T^z_{k+i}$ for all subsequent time steps $i>j$. %, compared to the case when no heating is applied. 
In other words, heating a zone has the expected and intuitive impact of increasing its temperature, following the laws of thermodynamics. \new{Note that cooling powers are defined to be negative in this work, hence inducing lower temperatures when more cooling is applied, as expected.} Similarly, \new{\eqref{equ:outside consistency} ensures that} higher ambient temperatures induce higher temperatures inside, and \new{\eqref{equ:neighbors consistency} guarantees that} higher temperatures in zone $y\in\mathcal{N}^n(z)$ lead to higher temperatures in zone $z$ after $n$ steps.
%In words, to follow the laws of thermodynamics, if more heating is applied in a zone $z$ or the temperature %in any adjacent zone $y$ or 
%outside raises, subsequent temperatures in zone $z$ should be higher. Similarly, an increase in the temperature of any zone $y\in\mathcal{N}^n(z)$ should lead to higher temperatures in zone $z$ after $n$ steps. %Finally, a higher initial temperature $T^z_k$ should lead to higher temperatures down the horizon. 
\del{Note that cooling powers are defined to be negative in this work, hence inducing lower %the expected decrease in predicted 
temperatures when more cooling is applied, as expected.}

%\begin{assumption}[Connectivity]
%\label{ass:connectivity}
%Throughout this work, we assume the building to model to be connected, i.e. the is no zone (or group of zones) isolated from the rest. This assumption is trivial in practice, as one can easily train several separated models if this condition is not met.
%\end{assumption}

\begin{comment}
\begin{remark}[Consistent heat propagation]
Since the building is assumed to be connected, Equation~\eqref{equ:neighbors consistency} is a sufficient condition to ensure heat transfers between non-adjacent zones also follow the underlying physical laws through recursive heat propagation, as detailed in Section~\ref{sec:consistency}.
%one can always recursively apply Equation~\eqref{equ:neighbors consistency} to propagate heat according to the laws of thermodynamics between all the zones, even if they are not adjacent, as detailed in Section~\ref{sec:consistency}.
%Since the building is assumed to be connected, one can always recursively apply Equation~\eqref{equ:neighbors consistency} to propagate heat according to the laws of thermodynamics between all the zones, even if they are not adjacent, as detailed in Section~\ref{sec:consistency}.
%by recursive application of Equation~\eqref{equ:neighbors consistency}, we also have that any increase in temperature in one zone will ultimately positively impact any other zone of the building, as expected, through heat transfers, as detailed in Section~\ref{sec:consistency}.
\end{remark}
\end{comment}

\begin{remark}[Generalization of the approach]\label{rem: generalization}
Note that the definition of physical consistency proposed in \eqref{equ:input consistency}-\eqref{equ:neighbors consistency} can easily be extended for applications where additional criteria need to be met by the learned model, to enforce physically consistent temperature predictions with respect to solar gains, for example. 
Interestingly, these conditions can also be seamlessly adapted to other fields beyond building modeling where simple physical rules can be encoded in a similar fashion. One can then construct a PCNN architecture following the principles presented in Sections~\ref{sec:PCNN} and~\ref{sec:methods} to ensure the learned model respect these desired criteria.
%Even though one might also want to ensure models to be physically consistent with respect to solar irradiation inputs, we discard this requirement in this paper for the same reason mentioned in the original paper~\cite[Section 6.3]{di2021physically}, namely that it is not strictly needed for control-oriented models. Indeed, the control algorithms subsequently using such models only rely on physically consistent models with respect to control inputs, i.e. power inputs in our case, to ensure they do not take spurious decisions. However, the definition of physical consistency proposed in Equations~\eqref{equ:input consistency}-\eqref{equ:new consistency} can easily be extended for applications where additional criteria need to be met by the learned model.
%cannot modify the solar irradiation but only the power inputs. While the physical consistency with respect to the outside temperature is similarly not required for control-oriented models, we keep it in this paper since it is straightforward to satisfy.
\end{remark}

    \subsection{Single-zone PCNNs}
    \label{sec:PCNN}

\begin{comment}
    \begin{figure}
    \begin{center}
    \includegraphics[width=\columnwidth]{}
    \caption{Visualization of PCNN predictions at each step, where the inputs are processed in parallel by a black-box and a physics-based module and then summed.}
    \label{fig:PCNN}
    \end{center}
    \end{figure}
\end{comment}
    
Conceptually, PCNNs are composed of a black-box and a physics-inspired module running in parallel to compute the next output at each step, as depicted \new{on the right of}\del{in} Figure~\ref{fig:data-driven models}. The former captures potentially complex nonlinearities while the latter ensures that predefined rules are respected, which typically represent physical laws and can be encoded by conditions similar to the ones proposed in \eqref{equ:input consistency}-\eqref{equ:neighbors consistency}. 
In the case of PCNNs modeling the evolution of the temperature in a single thermal zone $z$ while respecting the criteria in \eqref{equ:input consistency}-\eqref{equ:neighbors consistency}, they can be mathematically described as follows~\cite{di2021physically}: %previous work showed that such an architecture can be mathematically described as follows~\cite{di2021physically}:
\begin{align}
    D_{k+1} &= D_k + f(x_k, D_k), \label{equ:PCNN-D}\\
    E_{k+1} &= E_k + a_h\max{\{u_k, 0\}} + a_c\min{\{u_k,0\}} \nonumber \\
            &\quad - b(T_k - T^{out}_k) - \sum_{z'\in\mathcal{N}(z)}{c_{z'}(T_k - T^{z'}_k)}, \label{equ:PCNN-E} \\
    T_{k+1} &= D_{k+1} + E_{k+1},  \label{equ:PCNN-T} \\
    D_k &= T(k), \nonumber \\
    E_k &= 0, \nonumber 
\end{align}
where $D\in\mathbb{R}$ represents the evolution of the black-box module based on a freely parametrized function $f:\mathbb{R}^{d+1}\to\mathbb{R}$, typically composed of NNs, and $E\in\mathbb{R}$ is the energy accumulator, i.e.\new{,} the physics-inspired module. The latter is influenced by the power inputs $u\in\mathbb{R}$ and heat transfers to the outside and adjacent zones. % $z'$ in the neighborhood $\mathcal{N}(z)$ of the modelled zone $z$.
On the other hand, $x\in\mathbb{R}^d$ regroups all the \new{exogenous} inputs that are not \new{included}\del{treated} in $E$, \new{such as}\del{typically} time information and solar irradiation \del{measurements}. Finally, the constants $a_h$, $a_c$, $b$, and $\{c_{z'}\}_{z'\in\mathcal{N}(z)}$ capture the impact of heating, cooling, and heat losses to the outside and the neighboring zones on the modeled zone temperature, respectively. Applying \eqref{equ:PCNN-D}-\eqref{equ:PCNN-T} recursively over the prediction horizon, starting from the measured temperature $T(k)$ at time $k$, PCNNs can predict the evolution of the temperature while satisfying the criteria in \eqref{equ:input consistency}-\eqref{equ:neighbors consistency}. We refer the reader to the original paper for additional details~\cite{di2021physically}.

One important key to the effectiveness and generality of PCNNs comes from the fact that all the parameters $a_h$, $a_c$, $b$, $\{c_{z'}\}_{z'\in\mathcal{N}(z)}$, and $f$ are learned simultaneously using automatic BackPropagation Through Time (BPTT)~\new{\cite{di2021physically}}\del{\cite{werbos1990backpropagation} (Section~\ref{sec:implementations})}. As mentioned in Remark~\ref{rem: generalization}, the very generic structure of PCNNs can also be applied to model complex phenomena beyond thermal modeling, typically where only part of the physics is well understood. Indeed, it is always possible to adapt the structure of the physics-inspired module, which might also include nonlinearities, let the black-box module capture completely unknown dynamics in parallel, and seamlessly learn everything simultaneously in an end-to-end pipeline.

\begin{comment}
One important key to the effectiveness and generality of PCNNs comes from the fact that all the parameters $a_h$, $a_c$, $b$, and $\{c_{z'}\}_{z'\in\mathcal{N}(z)}$, and $f$ are learned simultaneously using automatic BackPropagation Through Time (BPTT)~\cite{werbos1990backpropagation}, as detailed further in Section~\ref{sec:implementations}. Remarkably, this means additional nonlinearities or unknown functions can be inserted in the proposed framework and seamlessly fitted along with the rest of the parameters. One can hence easily include a function to process control inputs into {\BS heating/cooling} power inputs, or incorporate time-varying parameters, for example.
%In the case of building models, for example, one sometimes only has access to the heat pump electricity consumption $v$ and not to the thermal power inputs to each zone, which would then need to be computed as a potentially highly nonlinear function $u:=g(v)$. Similarly, one could for example consider the parameters of the physics-inspired module to be time-varying or complex functions of external inputs. Both could easily be added to Equation~\eqref{equ:PCNN-E} and learned from data using the proposed BPTT scheme.

More importantly, we want to stress here that the very generic structure of PCNNs can also be applied to model complex phenomena beyond thermal modeling, typically where only part of the physics is well understood. Indeed, it is always possible to incorporate learning components inside the physics-inspired module, adapt it to respect given rules such as the ones in Equations~\eqref{equ:input consistency}-\eqref{equ:neighbors consistency}, and let the black-box one capture completely unknown dynamics in parallel. %While we demonstrate how to apply this technique to model the thermal behavior of buildings in this work, it is thus not limited to this application.
\end{comment}

\begin{remark}[Consistency with respect to initial conditions] \label{rem:initial cond}
Since we want to model several zones in this paper, the condition \eqref{equ:neighbors consistency} is different from the one enforced in~\cite{di2021physically}. In particular, it includes the case $y=z$, $j=0$\new{:}\del{, i.e.} we want the derivative of any zone temperature with respect to its initial temperature to be positive to avoid spurious effects. This was not considered in the original paper on single-zone modeling and requires a slight modification of \eqref{equ:PCNN-D}, as discussed in the following section.
%Since the temperature of the modeled zone was assumed to  the PCNN architecture from~\cite{di2021physically} did not enforce the condition in Equation~\eqref{equ:new consistency}. However, this might lead to spurious effects in general, as discussed in Section~\ref{sec:consistency}, and we will hence slightly modify Equation~\eqref{equ:PCNN-D} in the proposed multi-zone extensions to enforce the physical consistency of all our models with respect to initial temperatures. % in Section~\ref{sec:methods}.
%This last condition was not directly enforced in the original PCNN paper~\cite{di2021physically}, but was necessary to avoid spurious behavior of multi-zone PCNNs during our implementations for this paper, as discussed in Section~\ref{sec:consistency}.
\end{remark}

%\begin{remark}[Unknown power inputs]
%As detailed in the original paper~\cite{di2021physically}, this model can be augmented to the case when we do not have a direct access to power inputs to the thermal zone, but only to a controllable variable $v$, through a function $g$ computing power inputs, i.e. $g(v)=u$, which might be learned or engineered.
%\end{remark}

\begin{comment}
\begin{remark}[A flexible end-to-end learning pipeline] \label{rem:bptt}
One important key to the effectiveness and generality of PCNNs comes from the fact that all the parameters $a_h$, $a_c$, $b$, $\{c_{z'}\}_{z'\in\mathcal{N}(z)}$, and $f$ are learned simultaneously using automatic BackPropagation Through Time (BPTT)~\cite{werbos1990backpropagation}, as detailed further in Section~\ref{sec:implementations}. As mentioned in Remark~\ref{rem: generalization}, the very generic structure of PCNNs can also be applied to model complex phenomena beyond thermal modeling, typically where only part of the physics is well understood. Indeed, it is always possible to adapt the structure of the physics-inspired module, which might also include nonlinearities, let the black-box module capture completely unknown dynamics in parallel, and seamlessly learn everything simultaneously in an end-to-end pipeline.
\end{remark}
\end{comment}

    \section{Modeling the thermal behavior of buildings}
    \label{sec:methods}
    
This section proposes three different extensions of the single-zone PCNN architecture\new{, depicted in Figure~\ref{fig:PCNNs},} %of Section~\ref{sec:PCNN} 
to model an entire building %$\mathcal{B}$ with $m$ thermal zones 
and then discusses their physical consistency.
    
    \subsection{Extensions to multi-zone PCNNs}
    \label{sec:multizone pcnns}

\begin{figure*}
     \centering
     \begin{subfigure}[b]{\columnwidth}
         \centering
         \includegraphics[width=\columnwidth]{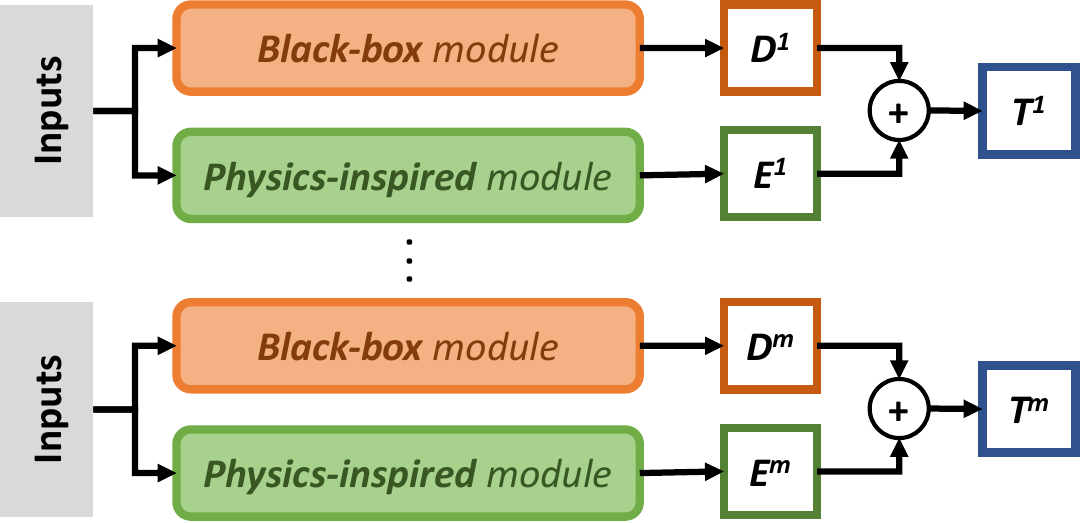}
         \caption{\new{\textbf{X-PCNN}: the temperature of each of the $m$ zones is predicted separately.}}
         \label{fig:X-PCNN}
     \end{subfigure}
     \hfill
     \begin{subfigure}[b]{\columnwidth}
         \centering
         \includegraphics[width=\columnwidth]{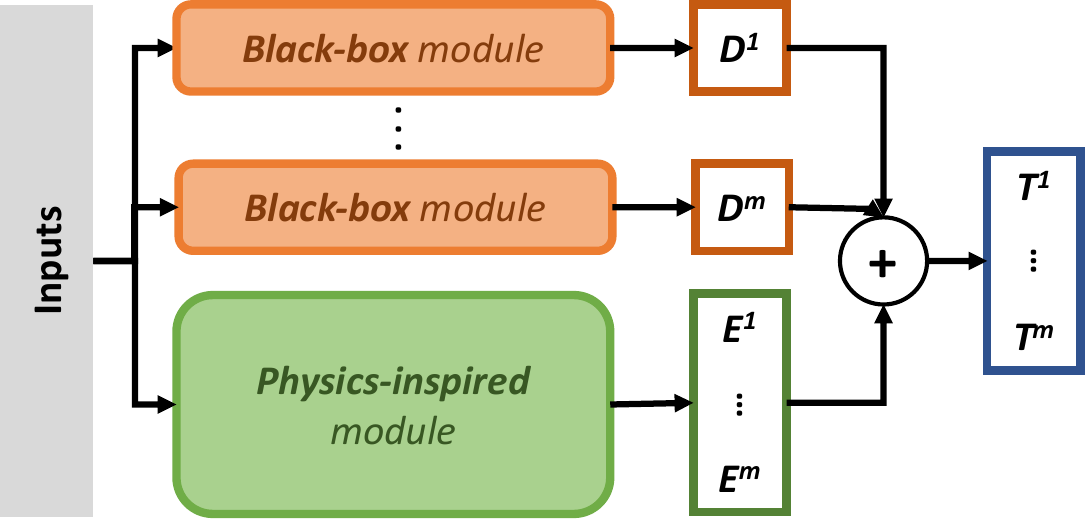}
         \caption{\new{\textbf{M-PCNN}: the physics-inspired module is shared but multiple black-box modules are learned, one for each zone.}}
         \label{fig:M-PCNN}
     \end{subfigure}
     \hfill
     \vfill\vspace{0.25cm}
     \begin{subfigure}[b]{\columnwidth}
         \centering
         \includegraphics[width=\columnwidth]{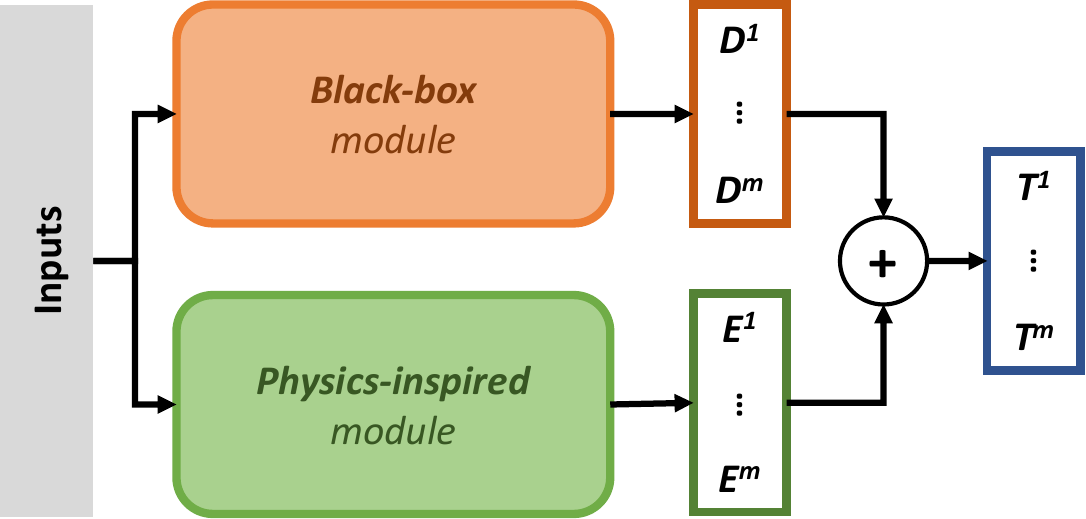}
         \caption{\new{\textbf{S-PCNN}: both the black-box and physics-inspired modules are shared by all the zones.}}
         \label{fig:S-PCNN}
     \end{subfigure}
        \caption{\new{The three PCNN architectures proposed in this work, with different levels of information sharing between the modeled thermal zones.}}
        \label{fig:PCNNs}
\end{figure*}

While the PCNN architecture described in Section~\ref{sec:PCNN} was shown to work well for single-zone modeling, this work proposes three possible extensions of this framework to simultaneously capture the evolution of the temperature in several interconnected zones exchanging energy, i.e.\new{,} in a whole building. The only additional information required is the \textit{topology} of the modeled building, i.e.\new{,} we assume to know which zones are adjacent and which have an external wall, and then learn its thermal behavior from data without engineering overhead. 

\begin{remark}[Topology]
%Note, however, that 
If the topology is unknown, one can also assume each pair of zones to be adjacent and every zone to have an external wall and then learn to put non-existing connection parameters to zero from data. 
\end{remark}

%The main philosophical difference between each method is the amount of parameter sharing between the zones.

%Each of the proposed methods have a different level of parameter sharing between the distinct zones to analyze the impact of information sharing on the model accuracy.

%\begin{definition}[Topology]
%The connections between the zones and with the outside compose the building topology.
%\end{definition}

    \subsubsection{\new{X-PCNNs: l}\del{L}earning several single-zone PCNNs}
    
The most natural and straightforward extension %to several zones %analyzed in this paper
is to separately learn one PCNN for each of the zones to model, as depicted in Figure~\ref{fig:X-PCNN}. Since this method involves duplicating the original structure for each zone and fitting them independently, we will refer to the final model of the building as the \textit{X-PCNN} architecture. %in the rest of this paper.
Mathematically, for a given zone $z$, we can write the corresponding equations as follows:
\begin{align}
    D^z_{k+1} &= D^z_k + f^z(x^z_k), \label{equ:X-PCNN-D}\\
    E^z_{k+1} &= E^z_k + a^z_h\max{\{u^z_k, 0\}} + a^z_c\min{\{u^z_k,0\}} \nonumber \\
            &\ \ \ - b(T^z_k - T^{out}_k) - \sum_{y\in\mathcal{N}(z)}{\tilde{c}^z_{y}(T^z_k - T^{y}_k)}, \label{equ:X-PCNN-E} \\
    T^z_{k+1} &= D^z_{k+1} + E^z_{k+1},  \label{equ:X-PCNN-T} \\
    D^z_k &= T^z(k), \nonumber \\
    E^z_k &= 0, \nonumber
\end{align}
where the superscript $z$ denotes zone-dependent information and all the variables have the same meaning as in Section~\ref{sec:PCNN}.

\begin{comment}
    \begin{figure}
    \begin{center}
    \includegraphics[width=\columnwidth]{Figures/X-PCNN-crop.pdf}
    \caption{Predictions of the proposed X-PCNN architecture, where the temperature of each of the $m$ zones is predicted separately.}
    \label{fig:X-PCNN}
    \end{center}
    \end{figure}
\end{comment}

To retain the physical consistency of this model, however, one needs to ensure that %{\BS heat transfer coefficients} 
$\tilde c^z_{y}=\tilde c^{y}_z$ for each pair of adjacent zones $y$ and $z$, so that the amount of energy flowing from $z$ to $y$ always equals the amount of energy received by $y$ from $z$, and vice versa. Since each zone is modeled and trained separately in this case, such a condition cannot be imposed during the learning phase, and we thus rely on a heuristic to correct the parameters and enforce this desired property \textit{a posteriori}. Once the models have been trained, for every pair of adjacent zones $z$ and $y$, we compute the average value identified by both PCNNs and define:
\begin{align}
    c^{zy} = c^{yz} = \frac{\tilde c^z_{y} + \tilde c^{y}_z}{2}. \label{equ:c correction}
\end{align}
%for every pair of adjacent zones $z$ and $y$ 
For every zone $z$, we then replace $\tilde{c}^z_{y}$ with $c^{zy}$ in \eqref{equ:X-PCNN-E} for all $y\in\mathcal{N}(z)$. %. Hence, at inference time, $\tilde c^m_z$ is used instead of $c^m_z$ to propagate the temperature evolution in Equation~\eqref{equ:X-PCNN-E} for the PCNN of both zones $m$ and $z$.

%\textcolor{red}{CNJ: Any reason that we would expect cy and cz to end up at the same value on their own? It seems that they could be very different, which would result in very different behaviour of both zones once the average is taken.} {\BS BS: I agree to Colin's comment is valid. However, an interesting aspect that we shall still emphasize is what could be discovered automatically from the data, imo.}

\begin{remark}[Independence of $f^z$ from $D^z$] \label{rem:D}
Note the difference between \eqref{equ:X-PCNN-D} and \eqref{equ:PCNN-D}, with $D^z$ not appearing in $f^z$ in \eqref{equ:X-PCNN-D}. Following Remark~\ref{rem:initial cond}, this ensures that condition \eqref{equ:neighbors consistency} is respected at all times, as analyzed in Section~\ref{sec:consistency} and Remark~\ref{rem:indep f}. %This is necessary to maintain physical consistency throughout the model since modelled zones are now exchanging energy \textit{during training} through the shared physics-based module, which brings additional constraints on the structure of $D$, as discussed in Section~\ref{sec:consistency}.
\end{remark}

    \subsubsection{\new{M-PCNNs: s}\del{S}haring the physics-inspired module}

\begin{comment}
    \begin{figure}
    \begin{center}
    \includegraphics[width=\columnwidth]{Figures/M-PCNN-crop.pdf}
    \caption{Predictions of the proposed M-PCNN architecture, where the physics-inspired module is shared but multiple black-box modules are learned, one for each zone.} %each learns its own black-box module.}
    \label{fig:M-PCNN}
    \end{center}
    \end{figure}
\end{comment}

To avoid the hand-crafted correction \eqref{equ:c correction}, which might significantly impact the parameters learned by each PCNN, one can fuse all the physics-inspired modules together, again leveraging our prior knowledge of the underlying physical laws. This gives rise to the so-called \textit{M-PCNN} architecture, pictured in Figure~\ref{fig:M-PCNN}, where distinct black-box modules are assigned to each zone, but the physics-inspired module is shared and outputs a vector $\bm{E}\in\mathbb{R}^m$ containing the energy accumulated in each zone at each step:
\begin{align}
    \bm{E}_{k+1} &= \bm{E}_k + \bm{a}_h\odot\max{\{\bm{u}_k, \bm{0}\}} \nonumber \\
            &\quad+ \bm{a}_c\odot\min{\{\bm{u}_k,\bm{0}\}} \label{equ:M-PCNN-E} \\
            &\quad - \bm{b}\odot(\bm{T}_k - \new{\bm{T^{out}_k}}\del{T^{out}_k}) - \bm{\Delta T}_k, \nonumber \\
    \bm{E}_{k} &= \bm{0}, \nonumber
\end{align}
where the bold notations correspond to vectorized quantities in $\mathbb{R}^m$, one dimension for each zone, i.e.\new{,} $\bm{a_h} = [a_h^1, ..., a_h^m]^T$, and similarly for $\bm{a_c}$, $\bm{b}$, and $\bm{u_k}$, and $\odot$ stands for the element-wise product of two vectors. \new{Since there is a unique ambient temperature impacting all the zones, we furthermore define $\bm{T^{out}} = [T^{out}, ..., T^{out}]^T \in \mathbb{R}^m$.} Finally, $\bm{\Delta T}_k\in\mathbb{R}^m$ corresponds to energy transfer between each zone and its neighborhood:
\begin{align}
    \bm{\Delta T}_k^z &= \sum_{y\in\mathcal{N}(z)}{c^{zy}(T^z_k - T^{y}_k)}, \qquad\forall z\in\mathcal{B} \label{equ:deltaT}
\end{align}
where the subscript $z$ denotes the $z$-th entry of a vector. By definition, we know $c^{zy}=c^{yz}$ if $y$ and $z$ are adjacent since both represent the same heat transfer coefficient, %between zone $z$ and $z'$, which 
which is easily enforced during training since all the zones are now modeled simultaneously, avoiding the \textit{a posteriori} correction \eqref{equ:c correction} required for X-PCNNs.

Each dimension of $\bm{T}\in\mathbb{R}^m$, i.e.\new{,} the temperature in each zone $z$, % denoted with a subscript $z$, 
is then computed as the sum of the physics-inspired and black-box modules, as before:
\begin{align}
    \bm{T}_{k+1}^z &= D^z_{k+1} + \bm{E}_{k+1}^z, \label{equ:M-PCNN-T} \\
    D^z_{k+1} &= D^z_k + f^z(x^z_k), \label{equ:M-PCNN-D} \\
    D^z_{k} &= T^z(k). \nonumber
\end{align}

    \subsubsection{\new{S-PCNNs: s}\del{S}haring both modules}
    
To reduce the computational complexity of the model and introduce parameter sharing between the zones \new{---}\del{--} which are typically similar in the same building \new{---}\del{--}, we propose a third architecture, dubbed \textit{S-PCNN}, where both the black-box and physics-inspired modules are shared. In practice, this means that the black-box module, typically consisting of NNs, now has $m$ outputs corresponding to the main dynamics of each of the zones, as pictured in Figure~\ref{fig:S-PCNN}. Using the vectorized notations as before, i.e.\new{,} $\bm{D}\in\mathbb{R}^m$, we can write the equations of this architecture as follows:
\begin{align}
    \bm{D}_{k+1} &= \bm{D}_k + \bm{\tilde f}(\bm{\tilde x}_k), \label{equ:S-PCNN-D}\\
    \bm{E}_{k+1} &= \bm{E}_k + \bm{a}_h\odot\max{\{\bm{u}_k, \bm{0}\}} \nonumber \\
            &\quad+ \bm{a}_c\odot\min{\{\bm{u}_k,\bm{0}\}} \label{equ:S-PCNN-E} \\
            &\quad - \bm{b}\odot(\bm{T}_k - \new{\bm{T^{out}_k}}\del{T^{out}_k}) - \bm{\Delta T}_k, \nonumber \\
    \bm{T}_{k+1} &= \bm{D}_{k+1} + \bm{E}_{k+1},  \label{equ:S-PCNN-T} \\
    \bm{D}_k &= \bm{T}(k), \nonumber \\
    \bm{E}_k &= \bm{0}, \nonumber 
\end{align}
where the physics-inspired module is the same as for the \text{M-PCNN} but we now only have one shared nonlinear function $\bm{\tilde f}:\mathbb{R}^{d'}\to\mathbb{R}^{m}$ transforming the inputs $\bm{\tilde x}\in\mathbb{R}^{d'}$. Throughout this work, we only consider external inputs that are shared by all the zones, i.e. $d'=d$ and $\bm{\tilde x}:=x^z,\ \forall z\in\mathcal{B}$.

\begin{comment}
    \begin{figure}
    \begin{center}
    \includegraphics[width=\columnwidth]{Figures/S-PCNN-crop.pdf}
    \caption{Predictions of the proposed S-PCNN architecture, sharing both the physics-inspired and black-box modules. }% are shared.}% for each zone.}
    \label{fig:S-PCNN}
    \end{center}
    \end{figure}
\end{comment}

\begin{remark}[Zone-dependent inputs]
If some measurements differ zone by zone, %i.e. $\bm{\tilde x} \neq x^z$, 
one can either stack them in a vector $\bm{\tilde x} = [(x^1)^\top, ..., (x^m)^\top]^\top$ and use \eqref{equ:S-PCNN-D} as is or for example design a shared function % {\BS to do/include/modify ...} 
$\bm{\tilde f}:\mathbb{R}^{d}\to\mathbb{R}$ and %run each zone inputs through it at each time step, 
modify \eqref{equ:S-PCNN-D} to $\bm{D}_{k+1}^z = \bm{D}_k^z + \bm{\tilde f}(x^z_k)$, $\forall z\in\mathcal{B}$.
\end{remark}

    \subsection{Thermodynamical consistency}
    \label{sec:consistency}
    
Relying on the transformation \eqref{equ:c correction} ensuring that heat transfer coefficients between each adjacent zones are equal in the corresponding single-zone PCNNs, one can vectorize the X-PCNN physics-inspired module \eqref{equ:X-PCNN-E}, putting the parameters of each zone in vectors, to get:
\begin{align}
    \bm{E}_{k+1} &= \bm{E}_k + \bm{a}_h\odot\max{\{\bm{u}_k, \bm{0}\}} \nonumber \\
            &\quad+ \bm{a}_c\odot\min{\{\bm{u}_k,\bm{0}\}} \label{equ:common-E} \\
            &\quad - \bm{b}\odot(\bm{T}_k - \new{\bm{T^{out}_k}}\del{T^{out}_k}) - \bm{\Delta T}_k, \nonumber
\end{align}
using the definition of $\bm{\Delta T}$ from \eqref{equ:deltaT}. As can be seen directly, this expression is the same as the ones describing physics-inspired modules of the M-PCNN and S-PCNN architectures in \eqref{equ:M-PCNN-E} and \eqref{equ:S-PCNN-E}. This means all the proposed multi-zone PCNNs rely on the same physical model at inference time, with however possibly different parameter values learned during training. This is intuitively expected since they all model the same thermal effects and hence have to follow the same physical principles.

Similarly, we can rewrite the black-box modules of the X-PCNN and M-PCNN architectures in vectorized form as:
\begin{align}
    \bm{D}_{k+1} &= \bm{D}_k + \bm{\bar f}(\bm{\bar x}_k), \label{equ:common-D}
\end{align}
where  $\bm{\bar f}=[f^1(x^1), ..., f^m(x^m)]^T$ and $\bm{\bar x}=[(x^1)^\top, ..., (x^m)^\top]^\top$ groups the different inputs. %This reformulation clarifies the relation between the separate and shared black-box modules (compare with Equation~\eqref{equ:S-PCNN-D}), showing how both have the same residual structure despite different forms of nonlinear functions $\bm{f}$.

For the three proposed architectures, putting \eqref{equ:common-E} and~\eqref{equ:common-D} together, we hence get:
\begin{align}
    \bm{T}_{k+1} &= \bm{D}_{k+1} + \bm{E}_{k+1} \nonumber\\
                &= \bm{T}_k + \bm{f}(\bm{x}_k) + \bm{a}_h\odot\max{\{\bm{u}_k, \bm{0}\}} \nonumber \\
                &\quad + \bm{a}_c\odot\min{\{\bm{u}_k,\bm{0}\}} \label{equ:common-T} \\
                &\quad - \bm{b}\odot(\bm{T}_k - \new{\bm{T^{out}_k}}\del{T^{out}_k}) - \bm{\Delta T}_k,  \nonumber \\
        \bm{D}_0&=\bm{T}(k), \nonumber \\
        \bm{E}_0&=\bm{0},  \nonumber
\end{align}
where $\bm{f}(\bm{x}_k)$ stands for $\bm{\tilde f}(\bm{\tilde x}_k)$ or $\bm{\bar f}(\bm{\bar x}_k)$ for S-PCNNs, respectively X- and M-PCNNs. The only structural difference between the three proposed models (once the heat transfer coefficients of the X-PCNN have been adjusted) hence comes from the form of $\bm{f}(\bm{x})$. Remarkably, however, this does not impact their physical consistency, as demonstrated in the following two propositions. %We are now ready to present the main properties of the proposed PCNNs.

\begin{proposition}[Heat propagation] \label{prop:heat propagation}
Independently of the structure of $\bm{f}$ and $\bm{x}$, any model of the form \eqref{equ:common-T} satisfies:
\begin{align}
    \frac{\partial\bm{T}_{k+i}^z}{\partial\bm{T}_{k+j}^y} &\geq 0 \qquad\forall z,y\in\mathcal{B},\ \forall 0\leq j<i, \label{equ:prop1}
\end{align}
with equality if and only if $y\notin\mathcal{N}^{(i-j)}(z)$, as long the following conditions hold:
\begin{align}
    %1-b^z-\sum_{y\in\mathcal{N}(z)}c^{zy} &> 0, &\forall z&\in\mathcal{B}, \label{equ:prop1-cond1}\\
    b^z+\sum_{y\in\mathcal{N}(z)}c^{zy} &< 1, &\forall z&\in\mathcal{B}, \label{equ:prop1-cond1}\\
    c^{zy}&>0, &\forall z&\in\mathcal{B},\ \forall y\in\mathcal{N}(z). \label{equ:prop1-cond2}
\end{align} 
\end{proposition}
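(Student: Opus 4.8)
The plan is to differentiate the shared one-step map \eqref{equ:common-T} with respect to $\bm{T}_k$, observe that the resulting Jacobian is a \emph{constant} entrywise-nonnegative matrix whose sparsity pattern coincides exactly with the adjacency structure of the building (with a self-loop at every zone), and then read off \eqref{equ:prop1} together with its equality characterisation from the combinatorial interpretation of the entries of the powers of that Jacobian.

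First I would note that in all three architectures the black-box increment $\bm{f}(\bm{x}_k)$, the heating/cooling input terms, and $T^{out}_k$ are exogenous with respect to $\bm{T}_k$; this is exactly where the design choice of Remark~\ref{rem:D} enters, since dropping $\bm{D}$ from the argument of $\bm{f}$ ensures that $\bm{T}_{k+1}$ depends on $\bm{T}_k$ only through the terms written explicitly in \eqref{equ:common-T}. Differentiating \eqref{equ:common-T} and using that the $y=z$ summand of \eqref{equ:deltaT} vanishes identically, one finds that $A:=\partial\bm{T}_{k+1}/\partial\bm{T}_k$ is time-invariant, with $A_{zy}=1-b^z-\sum_{w\in\mathcal{N}(z)}c^{zw}$ when $y=z$, $A_{zy}=c^{zy}$ when $y\in\mathcal{N}(z)$ and $y\neq z$, and $A_{zy}=0$ otherwise. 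Conditions \eqref{equ:prop1-cond1} and \eqref{equ:prop1-cond2} then state precisely that $A_{zy}>0$ for every $y\in\mathcal{N}(z)$ (in particular on the diagonal) and $A_{zy}=0$ for $y\notin\mathcal{N}(z)$, so $A$ is entrywise nonnegative with a strictly positive diagonal.

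Since \eqref{equ:common-T} is a time-invariant recursion in $\bm{T}$ driven only by exogenous signals, the chain rule yields $\partial\bm{T}_{k+i}/\partial\bm{T}_{k+j}=A^{i-j}$ for every $0\le j<i$; being a product of entrywise-nonnegative matrices, $A^{i-j}$ is entrywise nonnegative, which proves \eqref{equ:prop1}. For the equality statement I would expand $(A^{n})_{zy}=\sum A_{w_0 w_1}A_{w_1 w_2}\cdots A_{w_{n-1}w_n}$ over all sequences $z=w_0,\dots,w_n=y$, each factor being strictly positive exactly when $w_{\ell-1}$ and $w_\ell$ are equal or adjacent. Because all summands are nonnegative there is no cancellation, hence $(A^{n})_{zy}>0$ if and only if at least one length-$n$ walk from $z$ to $y$ exists in the building graph with self-loops; and because the diagonal of $A$ is positive, a shortest adjacency path of length $\ell\le n$ can be padded with $n-\ell$ self-loops into such a walk, so a length-$n$ walk exists if and only if $y\in\mathcal{N}^{n}(z)$. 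Taking $n=i-j$ gives equality in \eqref{equ:prop1} precisely when $y\notin\mathcal{N}^{(i-j)}(z)$.

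The points that need care are (i) making sure that $\bm{T}_{k+1}$ carries no dependence on $\bm{T}_k$ beyond the explicit terms of \eqref{equ:common-T}, which rests entirely on $\bm{f}$ not seeing $\bm{D}$ (Remarks~\ref{rem:initial cond} and~\ref{rem:D}) --- otherwise the Jacobian would pick up uncontrolled terms $\partial\bm{f}/\partial\bm{D}$ of indeterminate sign and \eqref{equ:prop1} could fail --- and (ii) the walk-counting argument, i.e. making rigorous that a nonnegative matrix with positive diagonal and adjacency-type off-diagonal pattern has $(A^n)_{zy}>0$ exactly on the $n$-hop neighbourhood $\mathcal{N}^n(z)$. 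Everything else is a routine differentiation and an application of the chain rule.
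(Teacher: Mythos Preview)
Your proof is correct and follows essentially the same approach as the paper's: both compute the one-step Jacobian of \eqref{equ:common-T}, verify it is entrywise nonnegative with strictly positive diagonal under \eqref{equ:prop1-cond1}--\eqref{equ:prop1-cond2}, and then compose via the chain rule, with the equality case reduced to the existence of a length-$(i-j)$ walk in the adjacency graph. The only difference is presentational: the paper carries out the composition as an explicit induction on $i$, whereas you package it more compactly as $\partial\bm{T}_{k+i}/\partial\bm{T}_{k+j}=A^{i-j}$ and invoke the standard walk-counting interpretation of powers of a nonnegative matrix.
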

\begin{proof}
See Appendix~\ref{app:proof heat propagation}.
\end{proof}

In words, Proposition~\ref{prop:heat propagation} means that heat propagates from any zone $y$ to all the other zones $z$ as physically expected, i.e.\new{,} higher temperatures in a given zone $y$ will \del{eventually }lead to higher temperatures in all the other % other zones unless they are "too far away" to be impacted in 
zones\del{, i.e.} after $(i-j)$ steps. % that are close enough, i.e. in all zones $z\in \mathcal{N}^{(i-j)}(y)$. 

Proposition~\ref{prop:heat propagation} can then be used to prove the following proposition stating that heating or cooling any zone ultimately increases, respectively decreases, the temperature in the whole building through heat transfers and that higher and lower ambient temperatures also impact the building as expected.

%\textcolor{red}{CNJ: Is this what we want? What if the current temperature of y is lower than z, but the rate of change of y is increasing. Then we would still want the partial of z wrt y to be negative, no?}

\begin{proposition}[Physical consistency with respect to inputs] \label{prop:inputs consistency}
Independently of the structure of $\bm{f}$ and $\bm{x}$, any model of the form \eqref{equ:common-T} satisfies:
\begin{align}
    \frac{\partial\bm{T}_{k+i}^z}{\partial\bm{u}_{k+j}^y} &\geq 0, & \forall &z,y\in\mathcal{B},\ \forall 0\leq j<i,\label{equ:prop2-1}
\end{align}
with equality if and only if $y\notin\mathcal{N}^{(i-j-1)}(z)$, and
\begin{align}
    \frac{\partial\bm{T}_{k+i}^z}{\partial T^{out}_{k+j}} &> 0, & \forall &z\in\mathcal{B},\ \forall 0\leq j<i, \label{equ:prop2-2}
\end{align}
as long as \eqref{equ:prop1-cond1}-\eqref{equ:prop1-cond2} hold and:
\begin{align}
    a^z_h, a^z_c, b^z &>0, & \forall &z\in\mathcal{B}. \label{equ:prop2-cond} 
\end{align} 
\end{proposition}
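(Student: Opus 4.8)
The plan is to differentiate the common dynamics \eqref{equ:common-T} and exploit the structural similarity between how $\bm{u}_k$ and $\bm{T}_k$ enter the recursion, so that the heat-propagation result of Proposition~\ref{prop:heat propagation} can be re-used almost verbatim. First I would unroll the recursion one step: since $\bm{T}_{k+j+1}$ depends on $\bm{u}_{k+j}$ only through the term $\bm{a}_h\odot\max\{\bm{u}_k,\bm{0}\} + \bm{a}_c\odot\min\{\bm{u}_k,\bm{0}\}$, and this term is applied componentwise, we get for $i = j+1$
\begin{align}
    \frac{\partial \bm{T}_{k+j+1}^z}{\partial \bm{u}_{k+j}^y}
    = \delta_{zy}\bigl(a_h^y \mathbbm{1}\{u_{k+j}^y > 0\} + a_c^y \mathbbm{1}\{u_{k+j}^y < 0\}\bigr),
\end{align}
which is $\geq 0$ under \eqref{equ:prop2-cond} and strictly positive exactly when $z = y$, i.e. when $y \in \mathcal{N}^{0}(z) = \mathcal{N}^{(i-j-1)}(z)$. (The non-differentiability of $\max/\min$ at $u = 0$ is a measure-zero event and I would handle it with a subgradient remark, as is standard for ReLU-type nonlinearities.) For $i > j+1$, the chain rule gives
\begin{align}
    \frac{\partial \bm{T}_{k+i}^z}{\partial \bm{u}_{k+j}^y}
    = \sum_{w\in\mathcal{B}} \frac{\partial \bm{T}_{k+i}^z}{\partial \bm{T}_{k+j+1}^w}\,
      \frac{\partial \bm{T}_{k+j+1}^w}{\partial \bm{u}_{k+j}^y},
\end{align}
and since the second factor is $\geq 0$ and nonzero only for $w = y$, this collapses to $(a_h^y\mathbbm{1}\{\cdot\} + a_c^y\mathbbm{1}\{\cdot\})\,\partial \bm{T}_{k+i}^z/\partial \bm{T}_{k+j+1}^y$. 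Applying Proposition~\ref{prop:heat propagation} with the shifted index (replacing $j$ by $j+1$), this is $\geq 0$ with equality iff $y \notin \mathcal{N}^{(i-j-1)}(z)$, which is exactly \eqref{equ:prop2-1}.

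For \eqref{equ:prop2-2}, I would proceed analogously. The outside temperature $T^{out}_{k+j}$ enters $\bm{T}_{k+j+1}$ through $+\bm{b}\odot(\bm{T}_k - T^{out}_k)$ — wait, with a sign: the term is $-\bm{b}\odot(\bm{T}_k - T^{out}_k)$, so $\partial \bm{T}_{k+j+1}^z / \partial T^{out}_{k+j} = b^z > 0$ for every zone $z$ by \eqref{equ:prop2-cond}. Hence for $i = j+1$ every entry is strictly positive, and for $i > j+1$ the chain rule gives
\begin{align}
    \frac{\partial \bm{T}_{k+i}^z}{\partial T^{out}_{k+j}}
    = \sum_{w\in\mathcal{B}} \frac{\partial \bm{T}_{k+i}^z}{\partial \bm{T}_{k+j+1}^w}\, b^w.
\end{align}
Since the building is connected, for large enough $i-j-1$ the coefficients $\partial \bm{T}_{k+i}^z/\partial \bm{T}_{k+j+1}^w$ are strictly positive for at least one $w$ (in fact, by Proposition~\ref{prop:heat propagation}, for all $w \in \mathcal{N}^{(i-j-1)}(z)$, which always contains $w = z$ via $z \in \mathcal{N}^0(z)$ — I should double-check that $\mathcal{N}^0(z) = \{z\}$ is covered, or alternatively note $z \in \mathcal{N}^n(z)$ for all $n$ in a connected building since one can step out and back). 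Combined with $b^w > 0$ and all summands nonnegative, the sum is strictly positive, giving \eqref{equ:prop2-2}.

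The main obstacle I anticipate is bookkeeping the index shift correctly and verifying the edge cases: that the "equality iff $y \notin \mathcal{N}^{(i-j-1)}(z)$" claim holds at the boundary $i = j+1$ (where $\mathcal{N}^0(z)$ must be interpreted as $\{z\}$), and that $\partial\bm{T}^z_{k+i}/\partial T^{out}_{k+j}$ is genuinely strictly positive for \emph{every} $i > j$ and not just eventually — this requires that for each zone $z$ there is always some reachable $w$ with strictly positive sensitivity, which follows from $z \in \mathcal{N}^{(i-j-1)}(z)$ in a connected building (stepping to a neighbor and back, possible for any horizon since $\mathcal{N}(z) \ni z$). The rest is a routine induction piggy-backing on Proposition~\ref{prop:heat propagation}; no new estimates are needed beyond the already-assumed conditions \eqref{equ:prop1-cond1}--\eqref{equ:prop1-cond2} and \eqref{equ:prop2-cond}.
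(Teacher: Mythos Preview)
Your proposal is correct and follows essentially the same route as the paper: compute the one-step derivatives directly from \eqref{equ:common-T}, then use the chain rule through $\bm{T}_{k+j+1}$ so that the sum over intermediate zones collapses (for $\bm{u}$) or reduces to a nonnegative combination with at least one strictly positive term (for $T^{out}$), and invoke Proposition~\ref{prop:heat propagation} with the shifted index. The paper handles the strict-positivity for $T^{out}$ by noting that $z\in\mathcal{N}(z)$ forces the $w=z$ summand to be strictly positive, which is exactly the argument you arrive at after your edge-case discussion; your additional remarks about $\mathcal{N}^0(z)$ and the non-differentiability at $u=0$ are sound refinements that the paper leaves implicit.
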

\begin{proof}
See Appendix~\ref{app:proof inputs consistency}.
\end{proof}
%Proposition~\ref{prop:inputs consistency} hence ensures that heating or cooling any room ultimately increases, respectively decreases, the temperature in the whole building through heat transfers. %Note the slight difference in the condition imposed on $y$ in Equation~\eqref{equ:prop2-1}, which follows from the assumption that heating or cooling a zone only has a direct impact on this zone's temperature.

\begin{corollary}[Physical consistency of PCNNs] \label{cor:consistency}
Independently of the structure of $\bm{f}$ and $\bm{x}$, any model of the form \eqref{equ:common-T}
%Any model described by Equation~\eqref{equ:common-T} 
respects the physical consistency criteria~\eqref{equ:input consistency}-\eqref{equ:neighbors consistency} if:
\begin{align}
    b^z+\sum_{y\in\mathcal{N}(z)}c^{zy} &< 1, &\forall z&\in\mathcal{B}, \label{equ:cor-cond1}\\
    a^z_h, a^z_c, b^z, c^{zy}&>0, &\forall z&\in\mathcal{B},\ \forall y\in\mathcal{N}(z). \label{equ:cor-cond2}
\end{align} 
\end{corollary}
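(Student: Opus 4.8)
The plan is to derive Corollary~\ref{cor:consistency} directly from Propositions~\ref{prop:heat propagation} and~\ref{prop:inputs consistency}, which already contain all the analytical work; the only thing left is a bookkeeping argument that matches the three consistency criteria \eqref{equ:input consistency}--\eqref{equ:neighbors consistency} to the conclusions of the two propositions and checks that the hypotheses line up.

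First I would observe that the assumptions \eqref{equ:cor-cond1}--\eqref{equ:cor-cond2} of the corollary are exactly the union of the hypotheses \eqref{equ:prop1-cond1}--\eqref{equ:prop1-cond2} of Proposition~\ref{prop:heat propagation} and \eqref{equ:prop2-cond} of Proposition~\ref{prop:inputs consistency}: \eqref{equ:cor-cond1} coincides with \eqref{equ:prop1-cond1}, the positivity of every $c^{zy}$ in \eqref{equ:cor-cond2} is \eqref{equ:prop1-cond2}, and the positivity of $a_h^z$, $a_c^z$, $b^z$ in \eqref{equ:cor-cond2} is \eqref{equ:prop2-cond}. Hence both propositions apply to any model of the form \eqref{equ:common-T} under the corollary's hypotheses, and it remains to extract the three criteria from their statements.

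Next I would read them off one by one. Criterion \eqref{equ:outside consistency} is literally \eqref{equ:prop2-2}, so nothing further is needed. For the neighbour criterion \eqref{equ:neighbors consistency}, Proposition~\ref{prop:heat propagation} gives $\partial\bm{T}^z_{k+i}/\partial\bm{T}^y_{k+j}\geq 0$ with equality \emph{if and only if} $y\notin\mathcal{N}^{(i-j)}(z)$; therefore, whenever $y\in\mathcal{N}^{(i-j)}(z)$ the inequality is strict, which is exactly \eqref{equ:neighbors consistency}. For the input criterion \eqref{equ:input consistency}, Proposition~\ref{prop:inputs consistency} gives $\partial\bm{T}^z_{k+i}/\partial\bm{u}^y_{k+j}\geq 0$ with equality if and only if $y\notin\mathcal{N}^{(i-j-1)}(z)$; specializing to $y=z$, I would invoke the convention $z\in\mathcal{N}(z)$ from Section~\ref{sec:backround}, which propagates to $z\in\mathcal{N}^n(z)$ for every $n\geq 0$ (one may always ``stay'' at $z$, since $z$ counts as adjacent to itself, so $\mathcal{N}^0(z)=\{z\}$ and, inductively, $z\in\mathcal{N}^{n+1}(z)$ whenever $z\in\mathcal{N}^n(z)$). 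Hence $z\in\mathcal{N}^{(i-j-1)}(z)$ for all $0\le j<i$ and the derivative is strictly positive, giving \eqref{equ:input consistency}.

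The argument has no genuine obstacle beyond being careful with the $n$-hop-neighbourhood bookkeeping — in particular the edge case $i-j-1=0$, where $\mathcal{N}^0(z)=\{z\}$ and the strictness of \eqref{equ:prop2-1} for $y=z$ must still be asserted; this is immediate from the ``if and only if'' in Proposition~\ref{prop:inputs consistency}. All the real difficulty (unrolling the recursion \eqref{equ:common-T}, controlling the signs of the Jacobian entries, and characterizing combinatorially when a mixed partial vanishes) is confined to the proofs of the two propositions in Appendices~\ref{app:proof heat propagation} and~\ref{app:proof inputs consistency}, which I would simply cite.
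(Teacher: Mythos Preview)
Your proposal is correct and mirrors the paper's own proof essentially step for step: the paper also notes that the corollary's hypotheses activate both propositions, then reads off \eqref{equ:input consistency} from \eqref{equ:prop2-1} by setting $y=z$ and using $z\in\mathcal{N}(z)$, \eqref{equ:outside consistency} directly from \eqref{equ:prop2-2}, and \eqref{equ:neighbors consistency} from the strictness clause in Proposition~\ref{prop:heat propagation}. If anything, your version is slightly more explicit about the $n$-hop bookkeeping and the $i-j-1=0$ edge case than the paper's one-paragraph argument.
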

\begin{proof}
Assuming that \eqref{equ:cor-cond1} and \eqref{equ:cor-cond2} hold, we can apply Propositions~\ref{prop:heat propagation} and \ref{prop:inputs consistency}. Setting $z=y$ in \eqref{equ:prop2-1} and recalling that any zone is in its own neighborhood \new{---}\del{--} which implies \new{strict} positiveness of \eqref{equ:prop2-1} \new{---}\del{--}, PCNNs satisfy \eqref{equ:input consistency}. % by Proposition~\ref{prop:inputs consistency}. 
The satisfaction of \eqref{equ:outside consistency} directly follows from the second part of Proposition~\ref{prop:inputs consistency}. Finally, according to Proposition~\ref{prop:heat propagation}, \eqref{equ:prop1} is strictly positive if and only if $y\in\mathcal{N}^{(i-j)}(z)$, satisfying \eqref{equ:neighbors consistency}.
\end{proof}

This corollary thus proves that each of the proposed PCNN architectures remains physically consistent %under Conditions~\eqref{equ:cor-cond1}-\eqref{equ:cor-cond2}, which require 
as long as all the parameters $\bm{a_h}$, $\bm{a_c}$, $\bm{b}$, and $\bm{c}$ are small positive constants. Note that this makes intuitive sense since all these parameters correspond to inverses of resistances and capacitances, hence small positive numbers, in real buildings. Interestingly, these conditions can easily be enforced during the training procedure without modifying the classical \new{backpropagation through time}\del{BPTT} algorithm, hence allowing us to rely on well-developed tools to train our models, as detailed in Section~\ref{sec:implementations}.
%Interestingly, such conditions can easily be implemented without modifying the learning algorithm, which can hence still rely on classical tools while ensuring the learned model follows the underlying physics at all time during the training procedure, as detailed in Section~\ref{sec:implementations}.

%\textcolor{red}{CNJ: What's BPTT?}

Remarkably, the strength of our approach lies in the fact that all the models will remain consistent whatever the structure of $\bm{f}$ is, being shared or not\new{,}\footnote{This is the main difference between the M-PCNN and S-PCNN architectures in our case, for example.}\del{,} composed of NNs or other nonlinearities. %T, and 
This gives the user complete freedom in the design of the black-box module without jeopardizing the physical consistency of the model. Similarly, all the parameters of the physics-inspired module might for example be time-varying or computed as nonlinear functions of external inputs without impacting the consistency of the model as long as they stay small and positive at all times.

\begin{remark}[Inputs of $\bm{f}$] \label{rem:indep f}
While the structure of $\bm{f}$ %does not impact the results of this Section,
does not impact the validity of Propositions \ref{prop:heat propagation} and \ref{prop:inputs consistency}, its inputs do. In particular, $\bm{f}$ has to be independent of $\{T, u, T^{out}\}$ for the first step of the proofs of both propositions to hold in general (Appendix~\ref{app:proofs}). If $\bm{f}=\bm{f}(\bm{x},\bm{D})$ for example, it would modify the case $z=y$ in equation~\eqref{equ:base} in Appendix~\ref{app:proof heat propagation}, and the satisfaction of \eqref{equ:prop1-cond1} would then not be sufficient to guarantee the required nonnegativity of the partial derivatives in \eqref{equ:prop1}. %Furthermore, since Proposition~\ref{prop:heat propagation} is subsequently used to prove Proposition~\ref{prop:inputs consistency} in Appendix~\ref{app:proof inputs consistency}, any $\bm{f}$ failing to satisfy this condition would also annihilate the consistency of the model with respect to power inputs and ambient temperatures.
\end{remark}

\begin{remark}[A control perspective]
    The multi-zone PCNNs \eqref{equ:common-T} are power input-affine. This makes such models interesting in control applications, typically for MPC schemes aimed at decreasing the energy consumption of buildings.
\end{remark}

    \section{Case study}
    \label{sec:case study}
    
To assess the quality of the multi-zone PCNN architectures detailed in Section~\ref{sec:methods}, we carry out an extensive performance analysis on a case study, where the objective is to predict the temperature dynamics over three-day-long horizons with \SI{15}{\minute} time steps. This section presents the building where the data was collected, implementation details, and then introduces the other gray- and black-box methods used as benchmarks.

    \subsection{Dataset}
    
    \begin{figure}
    \begin{center}
    \includegraphics[width=\columnwidth]{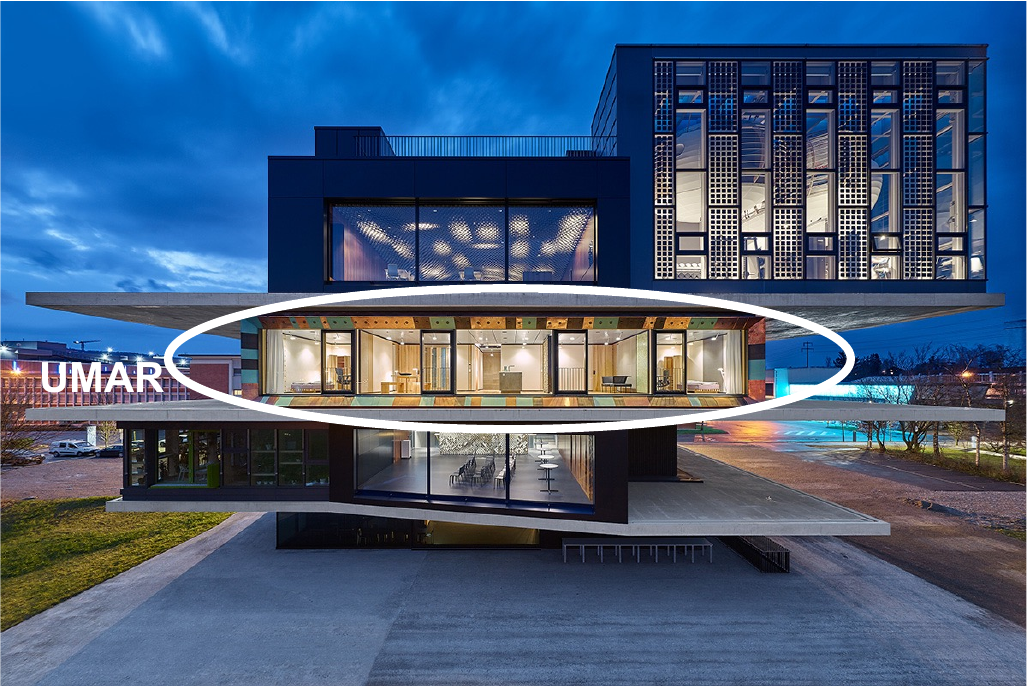}
    \caption{NEST, Duebendorf, Switzerland, with UMAR highlighted in white {\copyright} Zooey Braun, Stuttgart.}
    \label{fig:Nest}
    \end{center}
    \end{figure}
    
The data used in this work was collected in NEST, a vertically integrated district composed of several units~\cite{nest}, located in Duebendorf, Switzerland, and pictured in Figure~\ref{fig:Nest}. %An extensive sensor network collects and stores data from the various units with a sampling time of \SI{1}{\minute}~\cite{nest}. 
In this paper, we focus on the Urban Mining and Recycling (UMAR) unit, circled in white in Figure~\ref{fig:Nest}, an apartment with two bedrooms and a living room in between them. We are thus modeling three thermal zones arranged in a line in this work, i.e.\new{,} Zone $1$ is connected to Zone $2$, and Zone $2$ is also connected to Zone $3$, and each of them has at least one external wall. In heating mode, all the zones are heated by letting hot water flow through ceiling panels. During the cooling season, on the other hand, cold water can flow through the panels to cool down the zones.
%All zones are heated, respectively cooled, by letting hot, respectively cold, water flow through ceiling panels, depending on the status of the system being in heating or cooling mode.

%We use %data collected in UMAR between 
We rely on three years of data collected between May 2019 and May 2022 and preprocessed as explained in~\cite[App. C]{di2021physically}. This involved downsampling the data to \SI{15}{\minute} intervals, smoothing the time series, and disaggregating %the thermal power measurement of the unit into individual zone consumption. 
the thermal power consumption of UMAR into the consumption of each zone. Besides these computed thermal power inputs, the dataset also contains measurements of the temperature in each zone and outside, the horizontal solar irradiation on-site, and the status of the system, i.e.\new{,} if it is in heating or cooling mode. 
%The final dataset used to train all the models in Section~\ref{sec:results} contains measurements of the temperature and power input of the three thermal zones, the ambient temperature, the horizontal solar irradiation on-site, and the status of the system. 
To facilitate the learning process of the NNs in the black-box modules, we completed the data with additional time information, i.e.\new{,} the day of the week and the sine and cosine transformations of the time of the day and the month~\cite[App. D]{di2021physically}. Finally, we split the data in a training and a validation set, respectively denoted $\mathcal{D}_{t}$ and $\mathcal{D}_{v}$, containing possibly overlapping time series of up to three days of data, as detailed in Appendix~\ref{app:processing}.

\begin{comment}
\begin{remark}[Neglected zones]
Two small bathrooms are not modeled in this work. Firstly, they are much smaller than the other rooms, hence only having a small impact on the general dynamics of the unit. Secondly, due to repeated sensor failures, significantly fewer data points are available for these rooms, which would have drastically decreased the size of the dataset and largely impacted the quality of the analyses performed in Section~\ref{sec:results}.
\end{remark}
\end{comment}

    \subsection{Implementation details}
    \label{sec:implementations}
    
To ensure the physical consistency of the proposed multi-zone PCNNs, i.e.\new{,} to fulfill the conditions \eqref{equ:cor-cond1} and~\eqref{equ:cor-cond2}, we parametrize the log-value of each parameter, i.e.\new{,} we learn $\tilde a^z_h, \tilde a^z_c, \tilde b^z, \tilde c^{zy}$, $\forall \new{z}\del{y}\in\mathcal{B}$, $\forall y\in\mathcal{N}(z)$ and define:
\begin{align}
    %a^z_h&=\bar{a}^z_h\text{exp}(\tilde a^z_h), &a^z_c&=\bar{a}^z_c\text{exp}(\tilde a^z_c), \\
    %b^z&=\bar{b}^z\text{exp}(\tilde b^z), &c^{zy}&=\bar{c}^{zy}\text{exp}(\tilde c^{zy}).
    s &= s_0\exp{(\tilde s)}, &\forall s=\{a^z_h, a^z_c, b^z, c^{zy}\}
\end{align}
where $s_0$ is the initial value of the parameter, defined using the same rules of thumb as in~\cite{di2021physically}. Starting from $\tilde{s}=0$, PCNNs hence learn to scale the initial value $s_0$ instead of modifying it directly, which is more numerically stable and ensures that $s$ stays small enough, while the exponential function keeps all the parameters positive at all times. \new{Note that these parameters are learned simultaneously to the parameters in the black-box module: when backpropagation is used to update the parameters of the NNs, we also leverage the propagated gradients to update the parameters of the physics-inspired module.}

\begin{remark}[Upperbound on $s$]
While $\tilde{b}^z$ or $\tilde{c}^{zy}$ can in principle grow uncontrollably and lead to a violation of the necessary condition \eqref{equ:cor-cond1}, % by modifying their corresponding initial value by several orders of magnitudes,
this was not an issue in our experiments. %, where initial parameter values were never scaled more than an order of magnitude. 
Nonetheless, one can always introduce bounds on the learned values $\tilde{s}$ if required, typically leveraging activation functions like the sigmoid or hyperbolic tangent to control the range of learned values.
\end{remark}
%This forces them to be strictly positive by design during the training and inference phases while running unconstrained gradient descent on the log-values. Furthermore, since these parameters are very small in practice -- and need to be to satisfy Condition~\eqref{equ:cor-cond1} --, we scale them with small constants defined using the same rules of thumb as in~\cite{di2021physically}.
%Since all these parameters need to remain small enough to satisfy Condition~\eqref{equ:cor-cond1}, we additionally %learn the inverse of their values
%learn a scaled versionfor numerical stability, and initialize them to plausible values using the same rules of thumb as in~\cite{di2021physically}.

In this paper, each function $\bm{f}$ has the same encoder-LSTM-decoder architecture, which is repeated when several modules are required for X-PCNNs and M-PCNNs. %We use an encoder-LSTM-decoder structure, where 
Both the encoder and decoder are feedforward NNs with $32$ hidden units and the LSTM comprises two layers with dimension $64$ and is followed by a normalization layer. \new{This architecture was selected over larger ones since we did not observe any significant decrease in performance.}\del{These numbers of neurons were chosen over larger ones since we did not observe any significant increase in performance with larger architectures.} The input of each black-box module, $\bm{x}\in\mathbb{R}^6$, gathers the solar irradiation on a horizontal surface and the time information. NN\del{-based model}s share a common learning rate of $5\new{\texttt{e}}\del{e}{-4}$\new{, manually selected small enough to ensure stable convergence,} and \new{a} batch size of \new{\numprint{4096}}\del{$4096$, and }\new{, to maximize the utility of the Graphical Processing Units (GPUs). }\new{Every model} minimize\new{s} the Mean Square Error (MSE) over %all the created 
%the time series $s$ with length $l_s$ in the batch $B$, and over each of the three zones:
a given batch of data $B$:
%. For the NN-based models used in this paper, we hence minimize the following training loss:
\begin{align}
    \new{\mathcal{L}_\textit{data}}\del{\mathcal{L}_{data}} &= \frac{1}{|B|}\sum_{s\in B}\left[\frac{1}{l_s}\sum_{k=0}^{l_s-1}\left[\frac{1}{m}\sum_{z=1}^{m}\xi^{z,s}_k\right]\right], \label{equ:loss_data}\\
    \xi^{z,s}_k &= \left(\bm{T}^{z,s}_{k+1} - T^{z,s}(k+1)\right)^2, \nonumber
\end{align}
where the subscript $s$ corresponds to which time series of length $l_s$ the predictions and measured temperatures are taken from. %, and $l_s$ is the length of the corresponding time series. 
To then validate the performance of a model, we rely on the Mean Absolute Error (MAE), where 
\begin{align*}
\xi^{z,s}_k &= |\bm{T}^{z,s}_{k+1} - T^{z,s}(k+1)|,
\end{align*}
\new{in~\eqref{equ:loss_data}} and the Mean Absolute Percentage Error (MAPE), with:
\begin{align*}
\xi^{z,s}_k &= \frac{\new{|}\bm{T}^{z,s}_{k+1} - T^{z,s}(k+1)\new{|}}{T^{z,s}(k+1)}.
\end{align*}
The PCNNs are implemented in PyTorch~\cite{NEURIPS2019_9015} %, rely on automatic BPTT~\cite{werbos1990backpropagation} to minimize this loss function, 
and were trained on NVIDIA P100 \new{GPUs}\del{Graphics Processing Units (GPUs)}. The code and data %used in this work 
\new{are}\del{will be made} available on \url{https://gitlab.nccr-automation.ch/loris.dinatale/multi-zone-pcnns}.

\begin{comment}
computed for each model as follows:
\begin{align}
    \text{MAE} &= \frac{1}{|\mathcal{D}_v|}\sum_{s\in\mathcal{D}_{v}}\left[\frac{1}{l_s}\sum_{k=0}^{l_s-1}\left[\frac{1}{m}\sum_{z=1}^{m}\left|\xi^{(s)}_z\right|\right]\right], \label{equ:mae}\\
    \text{MAPE} &= \frac{1}{|\mathcal{D}_v|}\sum_{s\in\mathcal{D}_{v}}\left[\frac{1}{l_s}\sum_{k=0}^{l_s-1}\left[\frac{1}{m}\sum_{z=1}^{m}\left|\tilde\xi^{(s)}_z\right|\right]\right], \label{equ:mape} \\
    \tilde\xi^{(s)}_z &= \frac{\xi^{(s)}_z}{(T^{(s)}(k+1))_z}, \nonumber
\end{align}
where $\xi^{(s)}_z$ is defined as in Equation~\eqref{equ:loss_data}. 
\end{comment}

    \subsection{Benchmark models}
    \label{sec:benchmark}
    
To analyze the performance of the proposed PCNN architectures, we perform an extensive ablation study and compare them to state-of-the-art gray- and black-box methods. An overview of all the models used in this work, and whether they are physically consistent, can be found in Table~\ref{tab:accuracy}\del{Authors: Old Table 1 has been deleted}. Note that the linear and LSTM models correspond to learning only the physics-inspired module of S-PCNNs, respectively the black-box one. Furthermore, \textit{Res-cons} is equivalent to fitting both modules of S-PCNNs sequentially, showcasing the importance of learning all the parameters of PCNNs simultaneously to attain state-of-the-art accuracy.

%\textcolor{red}{CNJ: Is "ablation" a standard term used for NN studies? It seems a bit weird here from an English perspective...}

\begin{comment}
\begin{table}[]
    \centering
    \begin{tabular}{l|l|c} \hline
                        &                           & \textbf{Physically} \\
    \textbf{Category}   & \textbf{Model}            & \textbf{consistent} \\ \hline
                        & \textit{Linear}           & \textcolor{OliveGreen}{\cmark} \\
    Gray-box            & \textit{Res}              & \textcolor{red}{\xmark} \\
                        & \textit{Res-cons}         & \textcolor{OliveGreen}{\cmark} \\ \hline 
                        & \textit{ARX}              & \textcolor{red}{\xmark} \\
    Black-box           & \textit{ARX-KF}           & \textcolor{red}{\xmark} \\
                        & \textit{LSTM}             & \textcolor{red}{\xmark} \\
                        & \textit{PiNN}             & \textcolor{red}{\xmark} \\ \hline 
                        & \textit{X-PCNN} (Ours)    & \textcolor{OliveGreen}{\cmark} \\
    PCNNs               & \textit{M-PCNN} (Ours)    & \textcolor{OliveGreen}{\cmark} \\
                        & \textit{S-PCNN} (Ours)    & \textcolor{OliveGreen}{\cmark} \\ \hline 
    \end{tabular}
    \caption{Summary of the models compared in this work, and whether they follow the underlying physical laws.}
    \label{tab:models}
\end{table}
\end{comment}
    
    \subsubsection{Linear gray-box model}
    \label{sec:linear}
    
First\del{ly}, it makes intuitive sense to investigate the accuracy of the physics-inspired module of PCNNs on its own, leading to the following linear gray-box model, hereafter referred to as the \textit{Linear} model:
\begin{align}
    \bm{T}_{k+1} &= \bm{T}_k + \bm{a}_h\odot\max{\{\bm{u}_k, \bm{0}\}} \nonumber \\
                &\quad + \bm{a}_c\odot\min{\{\bm{u}_k,\bm{0}\}} \label{equ:linear} \\
                &\quad - \bm{b}\odot(\bm{T}_k - T^{out}_k) - \bm{\Delta T}_k  + \bm{e}\odot\bm{Q}^{win}_k, \nonumber
\end{align}
where $\bm{Q}^{win}_k$ gathers the solar irradiation on the windows of each zone in a vector, engineered from the measured irradiation on a horizontal surface. % as detailed in Appendix~\ref{app:solar preprocessing}. 
Since there is no black-box module taking care of the impact of the sun on building temperatures in this model, we indeed need to include it manually. This can be done efficiently for UMAR but does not generalize to arbitrary buildings, limiting the applications of such linear models, as detailed in Appendix~\ref{app:solar preprocessing}. As for the other heat gains, $\bm{e}$ gathers the trainable scaling parameters reflecting the impact of solar gains on each zone temperature in a vector. 

Since the classical least squares parameter identification gave rise to physically inconsistent parameters, we chose to identify $\bm{a}_h,\bm{a}_c,\bm{b},\bm{c},\bm{e}$ for each zone using Bayesian Optimization (BO), as detailed in Appendix~\ref{app:linear}. As for X-PCNNs, the heat transfer coefficients between two adjacent thermal zones were then averaged based on \eqref{equ:c correction}. 
%Since the classical least squares parameter identification~\cite[App. A.2]{di2021physically} gave rise to physically inconsistent parameters, as detailed in Appendix~\ref{app:linear}, we chose to identify them using Bayesian Optimization (BO) and the \texttt{bayes\_opt} Python library~\cite{bayesopt}, which allowed us to extensively search for the best parameters for each zone over a five-step prediction horizon. As for the X-PCNN models, the heat transfer coefficients between two thermal zones were then averaged based on \eqref{equ:c correction}. 

\begin{comment}
\begin{remark}[Solar irradiation preprocessing]\label{rem:sun}
The preprocessing function presented in Appendix~\ref{app:solar preprocessing} solely captures the impact of the sun depending on its elevation and azimuth angles. While this works very well for unobstructed facades when its orientation is known, it cannot be used when for example other buildings or trees exist in front of the windows and create shading patterns. %that are not captured by the proposed engineered function. 
In that case, one has to rely on architectures like PCNNs, which are able to automatically process horizontal solar irradiation measurements depending on time information. Nonetheless, we can use it in this paper since UMAR is not obstructed, leading to a very efficient computation of the true solar irradiation patterns on the windows of each zone.
\end{remark}
\end{comment}

    \subsubsection{Residual models}
    \label{sec:residual}

A natural extension of the aforementioned linear model is to consider \textit{residual models}, where the idea is to fit the errors of the linear model predictions with a black-box module to improve its performance. Assuming the linear model in \eqref{equ:linear} to provide predictions $\bm{\hat T}_{k+1}$, %at each step, 
a residual model fits a function $\new{f_\textit{Res}}\del{f_{Res}}:\mathbb{R}^{d'+2m+1}\to\mathbb{R}^m$, typically modeled with NNs, to the residual errors, i.e.\new{,} it minimizes:
\begin{align}
    %\bm{\hat T}_{k+1} &= \bm{\hat T}_k + \bm{a}_h^T\max{\{\bm{u}_k, \bm{0}\}} + \bm{a}_c^T\min{\{\bm{u}_k,\bm{0}\}} \nonumber \\
    %        &\quad - \bm{b}^T(\bm{T}_k - T^{out}_k) - \bm{\Delta T}_k \\
    \new{\mathcal{L}_\textit{Res}}\del{\mathcal{L}_{Res}} &= \frac{1}{|B|}\sum_{s\in B}\left[\frac{1}{l_s}\sum_{k=0}^{l_s-1}\left[\frac{1}{m}\sum_{z=1}^{m}(\epsilon^{z,s}_k)^2\right]\right], \label{equ:res} \\
    \epsilon^{z,s}_k &= \new{f_\textit{Res}^z}\del{f_{Res}^z}(\bm{T}_k^{s}, \bm{x}_k^{s}, \bm{u}_k^{s}, T^{out,s}_k) \nonumber \\
    &\qquad- \left(\bm{T}^{z,s}(k+1) - \bm{\hat T}_{k+1}^{z,s}\right), \nonumber
\end{align}
and then predicts temperatures as follows:%where $f^{Res}$ is typically modeled with NNs and the final model predictions correspond to:
\begin{align}
    \bm{T}_{k+1} &= \bm{\hat T}_{k+1} + f_{Res}(\bm{T}_k, \bm{x}_k, \bm{u}_k, T^{out}_k). \label{equ:res 1}
\end{align}
This model is dubbed \textit{Res} in the rest of this paper, and $f_{Res}$ has the same encoder-LSTM-decoder structure as the proposed PCNNs for fair comparisons. 
%For a fair comparison with the proposed PCNN architectures, the same black-box module is used, i.e. $f^{Res}$ has the same structure as $\bm{f}$. Once the physics-based parameters have been identified as discussed in Section~\ref{sec:linear}, $f^{Res}$ is hence fitted to the data using Equation~\eqref{equ:loss_data}. The main difference with the proposed PCNN architectures is hence the fact that the physics-inspired and black-box modules are identified separately.

Remarkably, such residual models cannot be ensured to respect the underlying physical laws in general since $\new{f_\textit{Res}}\del{f_{Res}}$ is not independent of zone temperatures, power inputs, and ambient temperatures. Since they are composed of a physics-inspired base model and a black-box module running in parallel, as the proposed PCNN architectures, we can indeed use similar arguments to prove their physical consistency (see Remark~\ref{rem:indep f}). %, as pointed out in Remark~\ref{rem:indep f}. 
Consequently, we also investigate the performance of a physically consistent residual model in this work, dubbed \textit{Res-cons}, where the black-box function learning the residuals only depends on $\bm{x}$, as PCNNs. %, inspired from the proposed PCNN architectures. 
This model hence fits a function $\new{f_\textit{Res-cons}}\del{f_{Res-cons}}:\mathbb{R}^{d'}\to\mathbb{R}^m$ to the residuals, trained similarly to its physically inconsistent counterpart, with the following temperature predictions: % After a similar training procedure as the other residual model, the predictions of this model hence reads:
\begin{align}
    \bm{T}_{k+1} &= \bm{\hat T}_{k+1} + \new{f_\textit{Res-cons}}\del{f_{Res-cons}}(\bm{x}_k). \label{equ:res 2}
\end{align}

Note that residual models first fit the base model to the data and then use black-box methods to fit the residual errors while PCNNs learn both modules together. This also implies that the physics-inspired module reflects the main dynamics of residual models while it only ensures the physical consistency of PCNN architectures, letting more expressive functions like NNs capture the main system dynamics. 

\begin{comment}
\begin{remark}[Difference with PCNNs]
%Note that such an architecture resembles PCNNs greatly, with a physics-inspired base model and a black-box module on top, 
Since it is composed of a physics-inspired base model and a black-box module running in parallel, residual models resemble PCNNs greatly.
%Since the aforementioned linear gray-box model cannot be expected to capture nonlinearities, a natural extension is to consider \textit{residual models}, where the idea is to fit the residuals of the linear model predictions with a black-box module, hence coming close to the proposed PCNN architectures. 
However and importantly, residual models first %assume the underlying physics-based model to be true, 
fit the base model to the data and then use black-box methods to fit the residuals, while PCNNs learn both modules together. This also implies that the physics-inspired module reflects the main dynamics of residual models %the black-box module only fits residual errors while only residual errors being fitted with a black-box module. On the other hand, the physics-inspired module
while it only ensures the physical consistency of PCNN architectures, letting more expressive functions like NNs capture the main system dynamics. 
\end{remark}
\end{comment}

    \subsubsection{Autoregressive model with exogenous inputs}
    
As a first black-box method, we analyze the performance of an ARX model, %on another extension of the linear gray-box method, namely an AutoRegressive model with eXogenous inputs (ARX), 
where autoregressive lags of the states and inputs are used to predict the next state:
\begin{align}
    \bm{T}_{k+1} &= \bm{\alpha}_0\bm{T}_{k} + \bm{\alpha}_1\bm{T}_{k-1} + \hdots + \bm{\alpha}_\delta\bm{T}_{k-\delta} \nonumber \\
    &\quad + \bm{\beta}_0 \bm{\hat{x}}_k + \bm{\beta}_1 \bm{\hat{x}}_{k-1} + \hdots + \bm{\beta}_\delta\bm{\hat{x}}_{k-\delta}, \\
    \bm{\hat{x}}_k &= [\bm{u}_k, T^{out}_k, Q^{sun}_k]^T, \nonumber %, \bm{\tilde{T}}^{y_1}_k, \bm{\tilde{T}}^{y_2}_k, \hdots, %\bm{\tilde{T}}^{y_{|\mathcal{N}(z)|}}_k]^T, \nonumber
    %\bm{\tilde{T}}^{n}_k &= \text{vec}(\bm{T}_k^y)^T \qquad \forall y\in\mathcal{N}(z)
\end{align}
where $Q^{sun}_k\in\mathbb{R}$ is the solar irradiation measurement on a horizontal surface, and the parameters $\bm{\alpha}_0, \hdots, \bm{\alpha}_\delta \in\mathbb{R}^{m\text{x}m}$, $\bm{\beta}_0, \hdots, \bm{\beta}_\delta \in\mathbb{R}^{m\text{x}(m+2)}$ are identified through least square regression using the \texttt{scikit-learn} library~\cite{scikit-learn}. \new{For a fair comparison}\del{In this work}, we set $\delta = 11$, i.e.\new{,} we use information from the last \SI{3}{\hour} to define the next temperatures, similarly to the warm start period of the proposed PCNN architectures.

For comparison purposes, we also implemented an advanced ARX model relying on the \texttt{statsmodels} package~\cite{seabold2010statsmodels}, which includes Kalman smoothing and filtering operations out-of-the-box and sets $\bm{\beta}_1, \hdots, \bm{\beta}_\delta=0$ so that only current information on external inputs is used. Since the identification procedure was harder in that case, we identified each zone $z$ separately, with:
\begin{align*}
    \bm{\hat{x}}_k^z &= [\bm{u}_k^z, T^{out}_k, Q^{sun}_k, \bm{T}^{y_1}_k, \bm{T}^{y_2}_k, \hdots, \bm{T}^{y_{|\mathcal{N}(z)|}}_k]^T,
\end{align*}
where $y_1, \hdots, y_{|\mathcal{N}(z)|} \in \mathcal{N}(z)$ are the zones adjacent to $z$, and the final model is dubbed \textit{ARX-KF}. Note that ARX models cannot be enforced to be physically consistent in general. % -- in fact, the \textit{ARX-KF} learns a negative scaling parameter for the power input to Zone $2$, meaning that heating this zone will lead to lower temperatures, and similar issues can be observed for the parameters of the \textit{ARX} model.

\begin{remark}[Kalman filtering and smoothing]
Note that these operations could be included for any other model as well, potentially impacting their performance. In this work, we focus on methods working on unfiltered data, typically involving NNs, but we also provide this \textit{ARX-KF} as an example of what can be achieved with existing toolboxes on a laptop compared to NN-based methods \new{that might require}\del{requiring} access to GPUs for training.
\end{remark}

\begin{comment}
where autoregressive lags of the current room temperatures are used. Mathematically, an ARX predicts the next temperature in zone $z$ as follows:
\begin{align}
    (\bm{T}_{k+1})_z &= \alpha_0^z(\bm{T}_{k})_z + \alpha_1^z(\bm{T}_{k-1})_z + \hdots + \alpha_\delta^z(\bm{T}_{k-\delta})_z \nonumber \\
    &\quad + (\bm{\beta}^z)^T [\bm{x}_k^T, \bm{u}_k, T^{out}_k, (\bm{T}^{neigh}_k)_z^T]^T, \\
    (\bm{T}^{neigh}_k)_z &= \text{vec}((\bm{T}_k)_{z'}) \qquad \forall z'\in\mathcal{N}(z)
\end{align}
where $\mathcal{D}_t$ is used to identify the parameters $\alpha_0, ..., \alpha_\delta, \beta$. In this work, we set $\delta = 11$, i.e. we use information from the last \SI{3}{\hour} to define the next temperatures, similarly to the warm start period of the proposed PCNN architectures, and rely on the statsmodel Python library~\cite{seabold2010statsmodels} to identify its parameters. Note that ARX models cannot be enforced to be physically consistent in general -- in fact, the model identified on our data set gives a negative scaling parameter for the control input for Zone 2, meaning that heating this zone will lead to lower temperatures.
\end{comment}

    \subsubsection{Neural network models}

As another natural ablation of PCNNs, %to investigate the quality of the black-box module on itself, we compare the proposed PCNNs to a classical black-box architecture using the same NN-based structure, i.e. the same parametrization. 
we also investigate the quality of the black-box module alone. Instead of treating the power inputs and temperatures in a separate module, all the inputs are fed in the black-box function $\new{f_\textit{LSTM}}\del{f_{LSTM}}:\mathbb{R}^{d'+2m+1}\to\mathbb{R}^m$, leading to the \textit{LSTM} model: %, composed of feedforward and LSTM networks in this paper (Section~\ref{sec:implementations}). 
%This model, dubbed the \textit{LSTM}, is mathematically represented as follows:
\begin{align}
    \bm{T}_{k+1} &= \bm{T}_{k} + \new{f_\textit{LSTM}}\del{f_{LSTM}}(\bm{T}_{k}, \bm{u}_{k}, \bm{x}_k, T^{out}_k).
\end{align}
As expected, such classical NN-based methods are naturally physically inconsistent and might fail to capture the underlying physical laws even if they fit the data well (Section~\ref{sec:introduction}).
%As expected for classical black-box methods and discussed in Section~\ref{sec:introduction}, this model naturally loses all physical consistency guarantees.

Finally, %to analyze the full potential of the proposed physics-inspired networks, 
we also compare PCNNs to a standard physics-informed NN, hereafter the \textit{PiNN} model, again relying on the same architecture as the black-box modules of PCNNs and the LSTM model. However, as is classically done, its loss function is modified to steer the learning toward physically meaningful solutions, with:
\begin{align}
    \new{\mathcal{L}_\textit{PiNN}}\del{\mathcal{L}_{PiNN}} &= \new{\mathcal{L}_\textit{data}}\del{\mathcal{L}_{data}} + \lambda \new{\mathcal{L}_\textit{phys}}\del{\mathcal{L}_{phys}},
\end{align}
where $\lambda$ is a tuning hyperparameter. Since the purpose of this additional loss term $\new{\mathcal{L}_\textit{phys}}\del{\mathcal{L}_{phys}}$ is to capture physical inconsistencies and penalize them, we naturally design it to bias the model towards solutions satisfying the desired properties \eqref{equ:input consistency} and~\eqref{equ:outside consistency}. Consequently, we penalize negative gradients of the final predicted temperatures, i.e.\new{,} at time $l_s$, with respect to control inputs and ambient temperatures observed along the horizon:
\begin{align}
    \new{\mathcal{L}_\textit{phys}}\del{\mathcal{L}_{phys}}&= \frac{1}{|B|}\sum_{s\in B}\left[\frac{1}{l_s}\sum_{k=0}^{l_s-1}\left[\frac{1}{m}\sum_{z=1}^{m}g^{z,s}_k\right]\right], \label{equ:phys loss}\\
    %g^{z,s}_k &= \sum_{z'=1}^m\left[\max{\{-\frac{\partial (\bm{T}^{(s)}_{l(s)})_z}{\partial (\bm{u}^{(s)}_{k})_{z'}}, 0\}} \right] \nonumber \\
     %& \quad + \max{\{-\frac{\partial (\bm{T}^{(s)}_{l(s)})_z}{\partial T^{out,(s)}_{k}}, 0\}}. \label{equ:gradients} \\
     g^{z,s}_k &= \sum_{y=1}^m\left[r\left(-\frac{\partial \bm{T}^{z,s}_{l_s}}{\partial \bm{u}^{y,s}_{k}}\right)\right] + r\left(-\frac{\partial \bm{T}^{z,s}_{l_s}}{\partial T^{out,s}_{k}}\right), \label{equ:gradients}
\end{align}
where $r(x)=\max\{x,0\}$, also known as the Rectified Linear Unit (ReLU) function. Since we are interested in physically consistent models in this work, we empirically fixed $\lambda=100$, which ensures the loss term is dominated by the physical inconsistencies, thereby steering the PiNN towards interesting solutions.

\new{Note that, in building temperature modeling, one can also augment the outputs of NNs to predict not only zone temperatures but also the temperatures of their respective thermal mass, for example, and then penalize deviations of the latter from the predictions of a physics-based model in $\mathcal{L}_\textit{phys}$ to incorporate prior knowledge in PiNNs~\cite{chen2023physics}. However, this requires access to a physics-based model, introducing engineering overhead. Moreover, it can enforce unwanted biases since the physics-based model might be inaccurate and steer PiNN predictions away from the truth. Consequently, in this work, we penalize the gradients of the temperature predictions instead, according to our definition of physical consistency in Section~\ref{sec:phys}, which bypasses the need for a physics-based model and only relies on measured quantities while still incorporating knowledge about the underlying laws of physics in PiNNs.}

\begin{remark}[Computational complexity] \label{rem:complexity}
To ensure a model is following the underlying physical laws at all times, one should check the gradients throughout the prediction horizon, and not only for the last predictions, as proposed in \eqref{equ:phys loss}. However, since each gradient computation requires one forward and one backward pass of the data, the computational complexity grows linearly with the number of predictions to analyze. Consequently, we only compute the gradients of the last predictions with respect to all the control inputs and ambient temperatures observed along the horizon to steer PiNNs, alleviating the associated computational burden. %, which already gives a strong indication of the model's consistency.
\end{remark}

    \section{Results}
    \label{sec:results}
    
This section provides a comprehensive analysis of the models presented in Table~\ref{tab:accuracy}. \new{All the results discussed hereafter were obtained by comparing the multi-step prediction performance of the different models on more than $750$ three-day-long time series from the validation set. Each model is recursively applied to predict the temperature in all the zones for $288$ steps, i.e., three days,\footnote{Given a warm start of \SI{3}{\hour} for the models based on LSTMs.} assuming knowledge of all the inputs, and compared to the true measured temperatures. The \textit{error} of a model is then defined as its average performance over the three zones.}

\begin{table}[]
    \centering
    \begin{tabular}{l|l|c|c|c} \hline
                        &                           & \textbf{Phys.} & &\\
    \textbf{Category}   & \textbf{Model}            & \textbf{cons.} & \textbf{MAE} & \textbf{MAPE}\\ \hline
                & \textit{Linear}                   & \textcolor{OliveGreen}{\cmark} & $1.79$ & $7.5\%$ \\
    Gray-box    & \textit{Res}                      & \textcolor{red}{\xmark} & $1.79$ & $7.7\%$ \\
                & \textit{Res-cons}                 & \textcolor{OliveGreen}{\cmark} & $1.50$ & $6.4\%$ \\ \hline 
                & \textit{ARX}                      & \textcolor{red}{\xmark} & $1.68$ & $7.1\%$ \\
    Black-box   & \textit{ARX-KF}                   & \textcolor{red}{\xmark} & $1.35$ & $5.6\%$ \\
                & \textit{LSTM}                     & \textcolor{red}{\xmark} & $1.27$ & $5.5\%$ \\
                & \textit{PiNN}                     & \textcolor{red}{\xmark} & $1.37$ & $5.8\%$ \\ \hline 
                & \textbf{\textit{X-PCNN} (Ours)}   & \textcolor{OliveGreen}{\cmark} & $\bm{1.17}$ & $\bm{4.9\%}$ \\
    PCNNs       & \textit{M-PCNN} (Ours)            & \textcolor{OliveGreen}{\cmark} & $1.25$ & $5.3\%$ \\
                & \textit{S-PCNN} (Ours)            & \textcolor{OliveGreen}{\cmark} & $1.22$ & $5.1\%$ \\ \hline 
    \end{tabular}
    \caption{Physical consistency, MAE, and MAPE of the methods investigated in this work.} %, average over the three thermal zone, the three-day long horizon, and more than $750$ time series.}
    \label{tab:accuracy}
\end{table}

\del{In }Section~\ref{sec:accuracy} \new{starts with a discussion on}\del{, we start by discussing} the performance of the different models \del{ on the validation data.} In Section~\ref{sec:visualization}, we then provide a visual and qualitative discussion of the NN-based model \new{ predictions}\footnote{The LSTM, PiNN, and PCNN architectures. Despite also being composed of an NN, residual models are not considered as \textit{NN-based} models in this work since their main dynamics are still captured by the underlying linear model, and not the NN.} \del{,} before numerically investigating their physical consistency more in-depth in Section~\ref{sec:gradients}.
% After the performance of the different models on the validation data is discussed in Section~\ref{sec:accuracy}, we provide a visual and qualitative discussion of the NN-based models\footnote{The LSTM, PiNN, and PCNN architectures. Despite also being composed of an NN, residual models are not considered as \textit{NN-based} models in this work since their main dynamics are still captured by the underlying linear model, and not the NN.} in Section~\ref{sec:visualization}, before numerically investigating their physical consistency more in depth in Section~\ref{sec:gradients}. 
Finally, Section~\ref{sec:complexity} concludes with a brief overview of the computational complexity associated with all the models. Altogether, this will allow us to understand the trade-offs between the physical consistency, accuracy, and computational complexity of the various data-driven building modeling methods examined in this work. 

Note that the NN-based models were run with several random seeds, and, unless stated otherwise, the results discussed throughout this Section were obtained using the best-performing seed in each case. Remarkably, however, this does not impact our conclusions significantly since all the architectures proved to be robust to the choice of random seed, with standard deviations in the range of $0.01$ -- $0.04$ and $0$ -- $0.2\%$ for the MAE and MAPE, respectively, as detailed in Appendix~\ref{app:seed}. 
%While the best model for each seed was chosen as the one attaining the lowest MSE on $D_v$, we present the results of each model over all exactly three-day long validation time series throughout this Section\footnote{More than $750$ possibly overlapping sequences.},
%the original objective of this work. 
%discarding shorter sequences for ease of presentation. This still leaves us with more than $750$ possibly overlapping validation sequences.

%Before diving into the results, we would like to point out here that the identification of the ARX-KF model for example assigned a negative scaling parameter for the power input to Zone $2$, meaning that heating this zone will lead to lower temperatures. We could observe similar issues with the parameters of the classical ARX model, indeed confirming that ARX models might not be physically consistent in practice, as claimed in Section~\ref{sec:benchmark}.

    \subsection{Performance analysis}
    \label{sec:accuracy}
    
%Since this work is focusing on control-oriented building models, which need to respect the underlying physics to then be used in control schemes, we should concentrate our analysis on the physically consistent methods from Table~\ref{tab:models}. Nonetheless, for the sake of completeness, we also provide most of the results for the inconsistent methods, showing that their black-box data-driven nature actually hinders their performance compared to the more constrained physics-inspired PCNNs.

The best performance of all the analyzed methods in terms of MAE and MAPE \del{on more than $750$ three-day-long time series from the validation set} is reported in Table~\ref{tab:accuracy}. %, where the errors are obtained by averaging either the absolute or absolute percentage error over all the three-day long time series of the validation data set, the three thermal zones, and the prediction horizon. 
As can be observed, all the proposed PCNN architectures attain state-of-the-art accuracy, both in terms of MAE and MAPE. They are followed by physically inconsistent black-box methods, especially the ones relying on very expressive NNs. % to extract statistical patterns from data. % which are able to fit patterns in the data even without following the underlying physical laws (see Section~\ref{sec:visualization}) and still obtain very good results. 
As expected, the least expressive class of methods, gray-box models, performs the worst. % on the proposed case study.
%On the other hand, gray-box methods are not expressive enough to compare with the proposed PCNNs. 
Combining these results with the physical consistency of each method, we can conclude that the proposed PCNN architectures take the best out of both  gray- and black-box methods, attaining state-of-the-art performance while respecting the underlying physical laws by construction without trade-off, making them ideal thermal building models.% {\BS Why not combining Table 1 and Table 2? Or brining the column of physical consistency in Table 2 again.I think it would help readers see the important difference, as table 2 is a bit boring without having in mind the column on the physical consistency. }

\new{While X-PCNNs achieve the best performance in Table~\ref{tab:accuracy}, we suspect these results to be influenced by the analyzed case study. Indeed, the temperature dynamics in UMAR are strongly impacted by solar gains, which reduces the importance of energy exchanges between the zones. This might explain why it is possible to fit the overall building dynamics well even when independently training one model for each zone, as for X-PCNNs, and why the \textit{post hoc} correction~\eqref{equ:c correction} only has a little impact on the final model performance. We suspect this required correction might have a stronger influence on multi-zone buildings where temperature dynamics are less impacted by weather conditions and more governed by energy exchanges between the zones, which might, in turn, decrease the quality of X-PCNNs.}
%\new{While X-PCNNs achieve the best performance in Table~\ref{tab:accuracy}, we suspect these results to be influenced by the analyzed case study. Indeed, the temperature dynamics in UMAR are governed by solar gains, which dominate %the impact of energy exchanges between the zones. This might explain why it is possible to fit the overall building dynamics well even when independently training one model for each zone, as for X-PCNNs, % Moreover, this might also justify 
%and why the \textit{post hoc} correction~\eqref{equ:c correction} only has a little impact on the final model performance. We suspect this required correction might have a stronger influence on multi-zone buildings where temperature dynamics are less impacted by weather conditions and more governed by energy exchanges between the zones, which might, in turn, decrease the quality of X-PCNNs.}

Remarkably, enforcing the physical consistency of LSTMs, as in PCNNs, seems to improve their accuracy in this case study despite the introduced constraints. This confirms the ongoing trend in ML research to include prior knowledge in NN architectures. Even if it might intuitively seem that introducing structural constraints should hinder the expressiveness of LSTMs, these results suggest that it can on the contrary be helpful. Moreover, one can draw similar conclusions with the two residual models investigated in this work, with \textit{Res-cons} clearly outperforming its physically inconsistent counterpart despite both models relying on the same linear basis. %This again hints that designing architectures consistent with the underlying physics can help the learning process.
While these results are not reported here, ensuring the black-box modules to be independent of $D^z$, as mentioned in Remark~\ref{rem:D}, also increased the performance of X-PCNNs. Altogether, these results point towards performance benefits of grounding NN architectures in the underlying physics, ensuring that they learn meaningful solutions. 

As shown in Table~\ref{tab:accuracy}, the residual models (\textit{Res} and \textit{Res-cons}), which are conceptually close to PCNNs\new{,}\footnote{Especially \textit{Res-cons}, where the only difference in architecture comes from the solar irradiation processing.}\del{,} are unable to attain similar performance to PCNNs. This hints towards the benefits of learning all the parameters together in an end-to-end fashion instead of first identifying the linear part and then fitting the residual errors. %Furthermore, the proposed PCNN architectures alleviate the need of a good linear model as basis. %, and we could see it plaguing the results of the \textit{Res-cons} model, as heating and cooling are essentially neglected by the underlying linear model.

PCNNs are on average $30$ -- $35\%$ and $17$ -- $22\%$ more accurate than the other physically consistent methods, namely the \textit{Linear} and \textit{Res-cons} model, respectively. To visualize the error propagation of these methods over three days, their MAE is plotted in Figure~\ref{fig:error propagation}. %along the horizon of three days and on the same validation time series. 
% This strengthens the results in Table~\ref{tab:accuracy}, showing  
This shows that the proposed \text{PCNNs} not only perform better on average, but along the entire prediction horizon, except during the first few hours, where the linear and residual model attain similar performance. The main reason behind this behavior is the warm start of PCNNs, which often gives erroneous first predictions, but they quickly make up for it and show much stronger performance in the long run. At the end of the horizon, PCNNs indeed show an error $34$ -- $41\%$ and \text{$10$ -- $18\%$} lower than \textit{Linear} and \textit{Res-cons}, respectively, with the best performance again achieved by the X-PCNN.

%It is also noteworthy to recall here that the Linear model has access to preprocessed solar irradiation data. %(Remark~\ref{rem:sun}). %, as computed in Appendix~\ref{app:solar preprocessing} and discussed in Remark~\ref{rem:sun}. 
%Once the solar gains through the windows are calculated, however, the thermal behavior of the building is usually well approximated by linear models. %, which allows linear models to perform well in general. 
The results of our case study suggest that the black-box modules of PCNNs are able to process the raw solar irradiation data %in their black-box module 
and infer its impact on the zone temperatures. Indeed, they outperform gray-box models, which have access to engineered solar gains (Appendix~\ref{app:solar preprocessing}). % allowing PCNNs to outperform the Linear model.
Interestingly, since the impact of the sun is implicitly computed by their black-box modules, PCNNs could easily be applied to any building, even when shading comes into play, making the engineered preprocessing of solar data required for gray-box models much more complex. Furthermore, while the only nonlinear gains considered in this paper come from solar irradiation, the flexibility of the black-box module would also allow it to learn other gains, such as the ones stemming from occupants.
%Consequently, this ability of the proposed methods to deal with highly nonlinear gains, such as solar ones, but which could also encompass the impact of the inhabitants for example, is another argument in their favor.

%Finally, to visualize the error propagation of all the physically consistent methods over three days (see Table~\ref{tab:models}), their MAE is plotted in Figure~\ref{fig:error propagation}. %along the horizon of three days and on the same validation time series. 
%This strengthens the results in Table~\ref{tab:accuracy}, showing that the proposed PCNNs not only perform better on average, but along the entire prediction horizon, except during the first few hours, where the linear and residual model attain similar performance. The main reason behind this behavior is the warm start of PCNNs, which often give erroneous first predictions, but they quickly make up for it and show much stronger performance in the long run. Indeed, at the end of the prediction horizon, the different PCNNs show an error $34$-$41\%$ lower than \textit{Linear}, and $10$-$18\%$ improvement upon \textit{Res-cons}, with the best perofmrnace achieved by the X-PCNN. % {\BS I would prefer to see this paragraph before the previous one that discusses the impact of solar irradiation. My strategy is that always first show all the results in a structured way, and then go into options and derivations of it, like the discussion in the previous paragraph on solar irradiation and what and how to do with it.}

    \begin{figure}
    \begin{center}
    \includegraphics[width=\columnwidth]{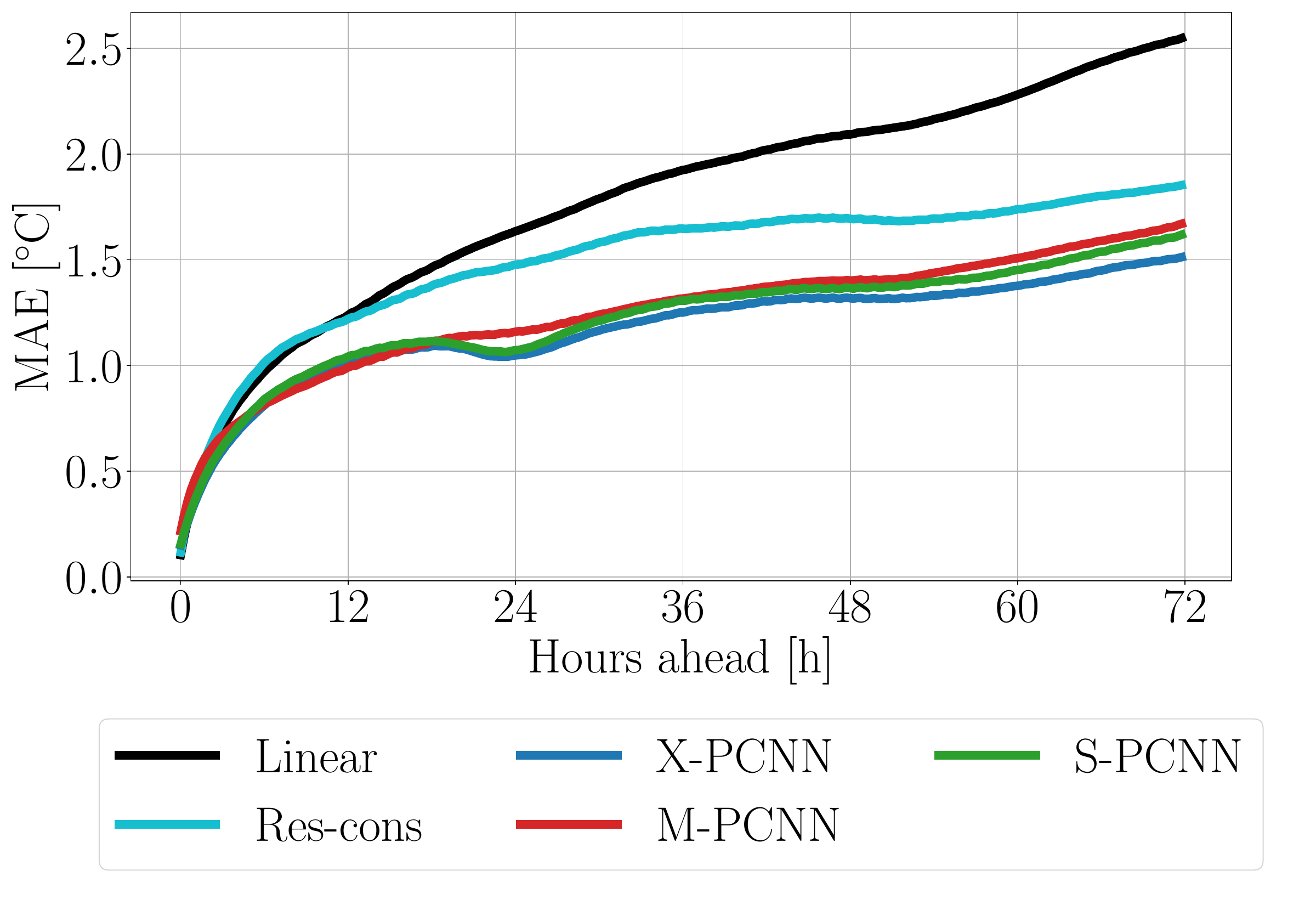}
    \caption{MAE of all the physically consistent methods \new{over the prediction horizon} averaged over the three zones and the time series of three days in the validation data set.}
    \label{fig:error propagation}
    \end{center}
    \end{figure}

    \subsection{The necessity of physical consistency}
    \label{sec:visualization}
    
Now that Section~\ref{sec:accuracy} established that PCNNs attain state-of-the-art performance in terms of accuracy, even outperforming pure black-box methods, let us visualize the behavior of one S-PCNN, one PiNN, and one LSTM for a given random seed in Figure~\ref{fig:propagation}. %thermal properties of the proposed S-PCNN against the classical PiNN architecture 
%predictions of an S-PCNN and a PiNN in Figure~\ref{fig:propagation}, where 
%To that end, we turn the thermal power off in Zone $1$ and $2$ and examine the impact of heating (red), cooling (blue), or doing nothing (black) in Zone $3$. %compared to the power being turned off in the entire unit (black). %Over a period of five days in March, we apply the power input at the bottom of the plot, where 
To that end, the thermal power is turned off in Zone $1$ and $2$ and we examine the impact of heating (red), cooling (blue), or providing no power input (black) in Zone $3$. Note that the heating pattern corresponds to the true power inputs measured in Zone $3$ in March $2021$, which we mirror to create the cooling pattern. 

\begin{figure*}
    \begin{center}
    \includegraphics[width=\textwidth]{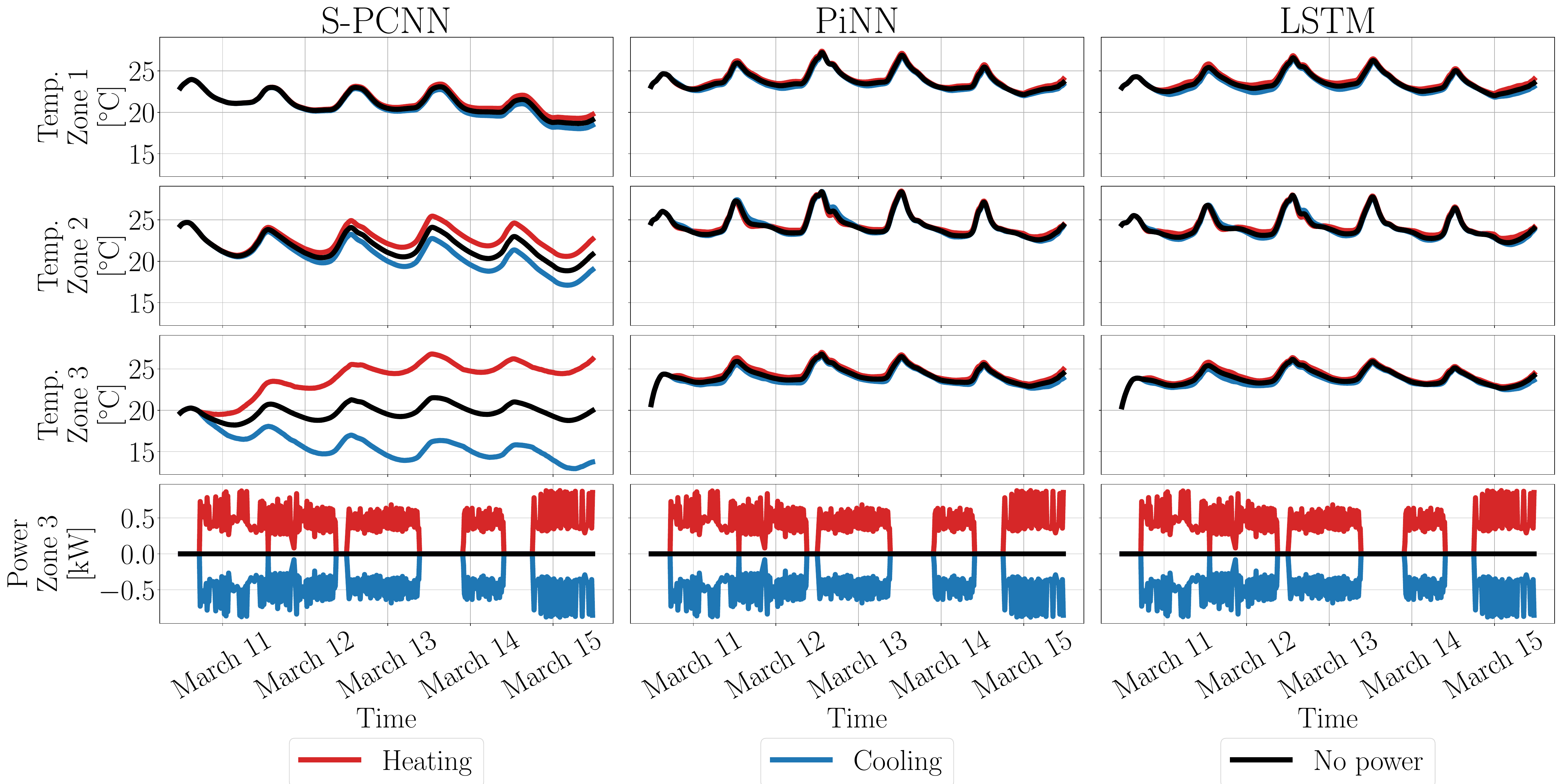}
    \caption{Visualization of heat propagation for the proposed S-PCNN architecture on the left compared to a PiNN in the middle and an LSTM on the right. The bottom plots show the heating (red) and cooling (blue) patterns applied to Zone~$3$ while the power is turned off in Zone~$1$ and $2$, compared to the situation when no power is applied (black). The other plots depict the corresponding temperature predictions of each model in each of the three zones. % {\BS As I mentioned in the discussion, how do we know that the magnitude of the heating and cooling evaluations in the S-PCNN left figure are correct? I would suggest using linear physically consistent RC models to validate and demonstrate that the magnitude of S-PCNN graphs makes sense. Ideally, you should use an already validated model of UMAR, like the digital twin of Fazel.}
    }
    %we see that heating Zone~$3$ (red) in the proposed S-PCNN firstly increases the temperature in Zone~$3$, and heat is then propagated to the neighboring Zone $2$, and finally to Zone $1$ as well. %, while the classical PiNN fails to capture this expected behavior. 
    %Similarly, cooling Zone $3$ (blue) ultimately decreases the temperature in all the zones with some delay compared to the situation when all the power inputs are cut off (black). On the other hand, heating and cooling are essentially indistinguishable for the PiNN and the LSTM.}
    \label{fig:propagation}
    \end{center}
    \end{figure*}

As one can immediately realize, following the laws of thermodynamics, heating or cooling Zone~$3$ increases, respectively decreases its temperature in the S-PCNN model. This effect is then propagated to the adjacent Zone~$2$, and later to Zone~$1$, impacting their temperatures even though they are neither heated nor cooled, illustrating the effect of Corollary~\ref{cor:consistency} ensuring physically consistent predictions. % the power input to these zone is kept off. Finally, since Zone~$2$ gets hotter, respectively colder, %as result of thermal transfer with Zone $3$, 
%the heating/cooling effect also reaches Zone~$1$. %, slightly modifying its temperature, as expected.
Note that while only the temperature predictions of one S-PCNN are pictured in Figure~\ref{fig:propagation}, similar effects were observed for \text{X-PCNNs} and M-PCNNs. This is expected since all of them share the same properties, i.e.\new{,} the same physics-inspired module, to ensure they follow the criteria \eqref{equ:input consistency}-\eqref{equ:neighbors consistency}. %Similarly, if we only depicted the behavior of the PiNN for the sake of clarity, LSTM temperature predictions also turn out to be essentially undistinguishable. 
%On the other hand, we can sometimes even observe lower temperature predictions when heating is turned on than when the zones are cooled in LSTMs and PiNNs, %cooling is for both the PiNN and LSTM models, 
%a clear sign of physical inconsistency, e.g. in Zone $2$ at the end of March $11$ and $12$ in Figure~\ref{fig:propagation} (best seen zoomed in). %, especially when zoomed in. 
    
On the other hand, all power inputs lead to almost indistinguishable temperature predictions for the PiNN and LSTM. Despite achieving a very good fit of the data (Section~\ref{sec:accuracy}), these models are hence obviously flawed and can be misleading in practical applications. We can sometimes even observe lower temperature predictions when heating is turned on than when the zones are cooled,
a clear sign of physical inconsistency\del{, e.g. in Zone $2$ at the end of March $11$ and $12$} \new{(see Appendix~\ref{app:visualization} for zoomed in results)}. % (best seen zoomed in).
This clearly exemplifies the issue of shortcut learning in the case of thermal modeling, where NNs manage to fit the data well without understanding the underlying physics. 

Furthermore, this illustrates how PiNNs only \textit{steer} the learning towards interesting solutions without providing any guarantees concerning the actual behavior of the model. In fact, trained PiNNs always gave very similar predictions to LSTMs in our experiments, as can also be seen in Figure~\ref{fig:propagation}, hinting that modifying the loss function $\new{\mathcal{L}_{\textit{PiNN}}}\del{\mathcal{L}_{PiNN}}$ of the model did not have much impact on the final solution found despite the large $\lambda$ used. While tuning this hyperparameter might lead to better results, it is a notoriously cumbersome task and would still never guarantee the physical consistency of the final model~\cite{faroughi2022physics}. Thus, it was not considered in this work.

Very importantly, these results point out a somewhat counter-intuitive and often overlooked characteristic of NNs: contrary to physics-based models, a good fit to the data does not necessarily imply that the quality of the model is good. %Indeed, while the PiNN obtained low errors on the validation data (Table~\ref{tab:accuracy}), it is still completely oblivious to the underlying physics, as pictured in Figure~\ref{fig:propagation}, and hence cannot be used in real-world applications. 
In our case, the PiNNs and LSTMs were indeed able to fit the data well without considering the impact of heating and cooling, i.e.\new{,} solely mapping external conditions to building temperatures. One hence has to be careful when NNs are used to model physical systems and make sure the trained models do not simply find shortcuts to fit the data well without respecting the underlying physical laws. This calls for physically grounded architectures, such as PCNNs, for applications where the physical consistency of the model is critical.

We suspect that LSTMs and PiNNs were able to fit the data very well without considering the impact of heating and cooling because of the specific data used in this case study. First\del{ly}, windows cover the entire East facade of UMAR, rendering the building especially sensitive to solar gains and external weather conditions. Second\del{ly}, while different controllers have been applied during the collection period of the data set, all of them had the same objective of maintaining the building temperature in a comfortable range and hence reacted similarly to external conditions. Coupling these facts, it seems indeed plausible to accurately predict building temperatures solely based on external conditions and without considering heating and cooling inputs. \new{In other words, we suspect the very expressive LSTMs and PiNNs to have learned the \textit{closed-loop} response of the system instead of the expected \textit{open-loop} one, hence implicitly accounting for the influence of power inputs instead of explicitly modeling their effect.} This might explain \new{how they}\del{why the very expressive LSTMs and PiNNs} found non-physical shortcuts modeling the evolution of inside temperatures well. \new{Interestingly, the identified linear model also failed to capture any significant impact of heating and cooling, showing that it is also possible to fit this data well without accounting for these inputs but still following the underlying laws of physics (Appendix~\ref{app:visualization}).}

Interestingly, \textit{Res} did capture a much more significant impact of heating and cooling, but remains completely oblivious to the underlying physics, with cooling often resulting in higher temperatures than heating. This illustrates the need to also consider physical consistency when designing residual models, such as in the proposed \textit{Res-cons} architecture. In general, all these results hence suggest that physical consistency should always be considered when dealing with NNs for physical systems.

Note that the identification of the ARX-KF model assigned a negative scaling parameter for the power input to Zone~$2$, for example, meaning that heating this zone will lead to lower temperatures. We could observe similar issues with the parameters of the classical ARX model, indeed confirming that ARX models might not be physically consistent in practice, as claimed in Section~\ref{sec:benchmark}.

    \subsection{Numerical analysis of physical consistency}
    \label{sec:gradients}

This section investigates the gradients of the predictions of NN-based models numerically, to strengthen the theoretical and visual claims in Table~\ref{tab:accuracy} and Figure~\ref{fig:propagation}.
%While the theoretical physical consistency of each model is provided in Table~\ref{tab:models} and supported by the discussion in Section~\ref{sec:visualization}, we also investigate %this section aims to strengthen these claims in the case of NN-based models through a numerical analysis of their 
%the gradients of the predictions of NN-based models numerically to strengthen these claims. 
Since gradients can be retrieved automatically through the \texttt{torch.autograd} module~\cite{NEURIPS2019_9015}, it allows us to numerically assess if the models respect criteria \eqref{equ:input consistency} and \eqref{equ:outside consistency}, a necessary condition to ensure physical consistency. % We can indeed compute them automatically by leveraging the \texttt{torch.autograd} module~\cite{NEURIPS2019_9015}. % to further illustrate the flaws of classical black-box NNs.
%The physical consistency of each model analyzed in this work %based on theoretical analyses and insights (Sections~\ref{sec:consistency} and~\ref{sec:benchmark}) 
%is summarized in Table~\ref{tab:models}. To strengthen these claims in the case of NN-based models, this Section provides a numerical analysis of their gradients, %of the five NN-based models (the LSTM, the PiNN and the three proposed PCNN architectures), 
%which can easily be obtained using the \texttt{torch.autograd} module~\cite{NEURIPS2019_9015}. 
Following Remark~\ref{rem:complexity}, we investigate the gradients of the temperature predictions at the end of the three-day-long horizon 
with respect to the power inputs and ambient temperatures observed at each time step. Note that this corresponds to the gradients used to steer the learning of PiNNs in %the physical loss, i.e. the gradients used in
\eqref{equ:gradients}, except for the X-PCNN, for which fewer gradients can be computed, as detailed in Appendix~\ref{app:x-pcnn gradients}. \new{However, their magnitude does not have any physical meaning since NN-based models work with normalized data.}

    \begin{figure}
    \begin{center}
    \includegraphics[width=\columnwidth]{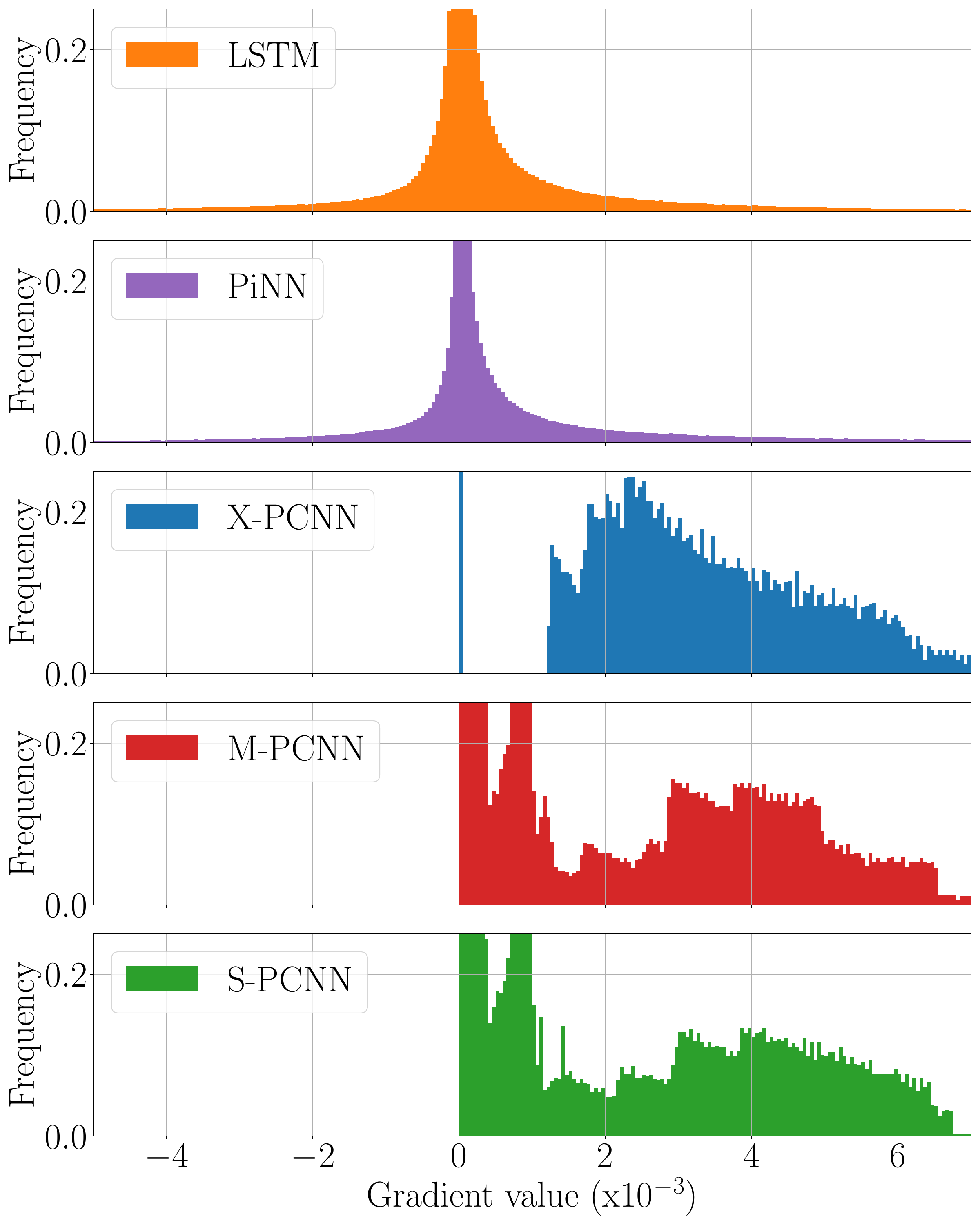}
    \caption{Distribution of the gradients of the temperatures at the end of the prediction horizon with respect to power inputs and external temperatures observed along the horizon for the NN-based models.}
    %Gradient distributions of the black-box and physics-inspired methods relying on automatic differentiation, i.e. all of them except the ARX models, with respect to power inputs and ambient temperatures providing a visual proof that the proposed PCNN architectures are indeed able to keep all the desired positive gradients (Remarks~\ref{rem:equivalent}-\ref{rem:no grad}), contrary to the classical LSTM and PiNN methods.}
    \label{fig:gradients}
    \end{center}
    \end{figure}
    
Overall, %computing the gradients of the $755$ time series of three days in $\mathcal{D}_{val}$, %of each zone temperature with each zone power inputs and ambient temperatures along the prediction horizon, leading to 
this gives us access to more than two million gradient values for each model, except the X-PCNN, with slightly over one million values\new{, as computed in Appendix~\ref{app:gradients}}. The resulting density histograms are shown in Figure~\ref{fig:gradients}, where one can directly observe negative gradients only for the two black-box models not grounded in the underlying physics. In fact, penalizing negative gradients in $\new{\mathcal{L}_{\textit{PiNN}}}\del{\mathcal{L}_{PiNN}}$ decreased the magnitude of the PiNN gradients, steering them to zero, but did not change the proportion of negative ones. In other words, it did not improve the physical consistency of PiNNs since they still violate conditions \eqref{equ:input consistency} and \eqref{equ:outside consistency} as often as classical LSTMs. \new{Remarkably, the small magnitude of the PiNN and LSTM gradients corroborate what can be seen in Figure~\ref{fig:propagation}, with very little impact of heating and cooling for these models.} On the other hand, \new{thanks to their physics-inspired module,} the proposed PCNN architectures keep all the gradients that require positivity in $\mathbb{R}_+$ \new{and with larger magnitudes}, as desired \new{and observed in Figure~\ref{fig:propagation} for the S-PCNN}, providing a numerical argument supporting their physical consistency. %the claimed physical consistency of PCNNs through
%Corollary~\ref{cor:consistency}.

\begin{comment}
\begin{remark}[Gradients of residual models]
While the computation of the residuals of both \textit{Res} and \textit{Res-cons} are also implemented in PyTorch using the same architecture as the other models, the total gradient of the temperature predictions with respect to power inputs and ambient temperatures of the residual models also depend on the gradients of the \textit{Linear} one, which provides the base predictions of both models (Equations~\eqref{equ:res 1}-\eqref{equ:res 2}). It is consequently impossible to analyze the gradients of the residual models automatically without engineering overhead to combine the gradients of the linear and NN part over the prediction horizon. Nonetheless, we know that \textit{Res-cons} is physically consistent by construction in Section~\ref{sec:residual} while \textit{Res} was shown to provide erroneous predictions, as discussed in Section~\ref{sec:visualization}, and suffer from the same issues as generic black-box models such as the LSTM and PiNN analyzed in Figure~\ref{fig:gradients}.
\end{remark}
\end{comment}

    \subsection{Computational complexity}
    \label{sec:complexity}
    
As final comparison metric between the models, Figure~\ref{fig:time} presents the time required by each model per training iteration. Importantly, these numbers are subject to implementation considerations and hence have to be taken with a grain of salt since we did not optimize the models. Nonetheless, all of them used the same backbone architecture, which allows relative comparisons, \new{for example}\del{e.g.} between the three proposed PCNNs, between the two residuals models, or between LSTMs and PiNNs. Note that the linear and ARX models are not considered here since their ``training'' procedure is very different: it does not require access to a GPU and does not rely on gradient descent

First\del{ly}, as expected, PiNNs take more time to run than classical LSTMs since each batch has to be forwarded and backwarded through the networks twice, once to compute the predictions used in $\new{\mathcal{L}_{\textit{data}}}\del{\mathcal{L}_{data}}$ %and the corresponding gradients, 
and another time to calculate the gradients in $\new{\mathcal{L}_{\textit{phys}}}\del{\mathcal{L}_{phys}}$. Second\del{ly}, residual models need access to the predictions of the underlying linear model at each step to compute the residual errors before fitting them, which also entails a clear computational overhead compared to LSTMs. % on top of the fitting step.
%Secondly, the residual models require access to the predictions of the base linear model, which are computed at runtime in the current implementations. However, this is calculated in a \textit{for}-loop since we need to recursively predict the next temperatures using the last ones, which entails a large computational overhead on top of the residual fitting step, which corresponds to fitting the LSTM model. 
Finally, the PCNN architectures all require to compute both the black-box module output $D_k$ and the physics-inspired module predictions $E_k$ at each step $k$ along the horizon, which also entails additional overhead on top of classical black-box models. Interestingly, this is comparable to what happens in residual models, explaining to some extent why the latter and S-PCNNs require similar amounts of resources. 

Compared to S-PCNNs, M-PCNNs and \text{X-PCNNs} are significantly more computationally intensive.
%As mentioned, while the difference with LSTMs is significant in Figure~\ref{fig:time}, better PCNN implementations might be able to close this gap to some extent. More interestingly, however, we can see S-PCNNs being significantly faster than M-PCNNs, themselves less computationally intensive than X-PCNNs. 
This intuitively follows from the shared black-box module of \text{S-PCNNs} reducing the number of parameters to fit compared to \text{M-PCNNs}. On the other hand, \text{X-PCNNs} require learning several models separately instead of everything together, which duplicates the computational overhead needed to create and move data to the GPU at each iteration and leads to an increased computational burden compared to \text{M-PCNNs}. Stemming from these remarks, we would expect these differences to grow if we were to apply PCNNs to larger buildings with more thermal zones.
%Intuitively, we would then expect these computational complexity differences between the PCNN architectures to grow if we were to apply it to larger buildings with more thermal zones.
    
\begin{figure}
    \begin{center}
    \includegraphics[width=\columnwidth]{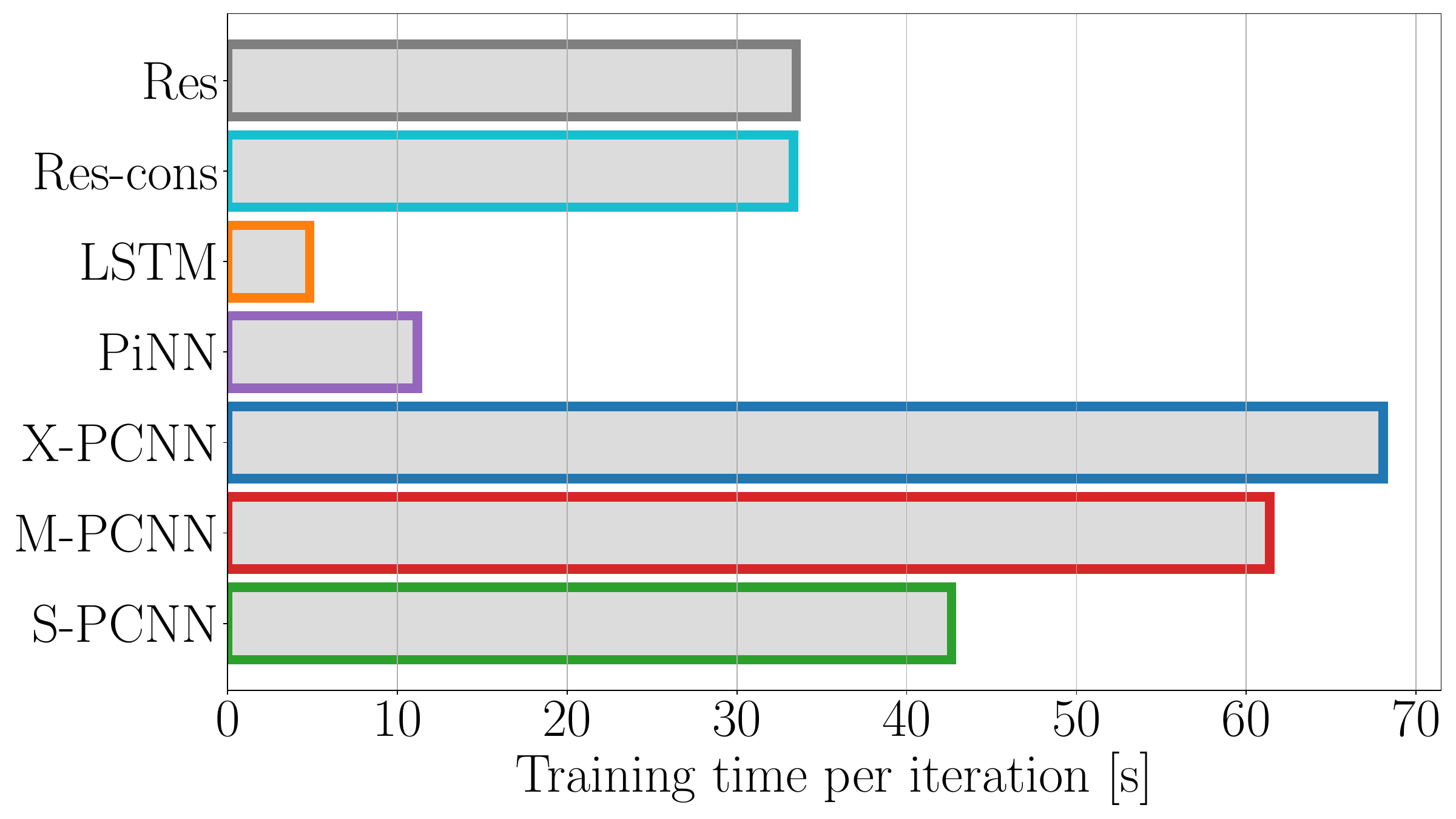}
    \caption{Training time per iteration of the methods relying on a GPU.}
    \label{fig:time}
    \end{center}
    \end{figure}

%\textcolor{red}{CNJ: Can we assume from Figure 10 that the number of iterations is about the same for all methods?}

%\textcolor{red}{CNJ: Wasn't the point that the X- version should have been the fastest, since it trains each zone seperately? From the earlier results, it looks like all the PCNN's have the same performance, and so the only thing that really differentiates them is the computation time (?)}

\begin{remark}[Parallelizing X-PCNNs]
    The training times reported here correspond to the total time required to train each model for one iteration, i.e.\new{,} the sum of training times of single-zone PCNNs in the case of X-PCNNs, to represent the total amount of computations needed. In practice, however, the different single-zone PCNNs can easily be trained in parallel since they are independent, which can significantly decrease the effective training time of X-PCNNs (dividing it approximately by three in our setting with three zones). This would make them the fastest multi-zone PCNNs to deploy but at the cost of additional computational complexity.
\end{remark}

\section{Conclusion}
    \label{sec:conclusion}
    
This work presented extensions of single-zone PCNNs to the multi-zone setting, thereby providing fully data-driven control-oriented multi-zone building thermal models. The main idea of PCNNs is to let a physics-inspired and a black-box module run in parallel, the former guaranteeing the compliance of the output with the underlying physical laws \new{---}\del{--} the laws of thermodynamics in the case of building temperature modeling \new{---}\del{--} and the latter capturing unknown nonlinear dynamics, typically relying on NNs.
%treat parts of the inputs in a physics-inspired module to ensure given conditions are respected while the main dynamics are captured by a black-box module running in parallel and typically composed of NNs. 

The proposed multi-zone PCNNs %are theoretically grounded and 
respect the underlying physics by design and at all times despite requiring little engineering, contrary to classical physically consistent methods. On the other hand, they outperformed state-of-the-art black-box methods in terms of accuracy on a case study, hinting that the constrained structure introduced to ensure they follow some ground rules does not hinder their expressiveness. Our analyses showed little difference between S-, M-, and X-PCNNs in general, with \text{S-PCNNs} entailing the least computational complexity and X-PCNNs attaining the best accuracy \new{on the analyzed case study}. Remarkably, all of them showed significantly better performance than classical physically consistent data-driven methods, with accuracy improvements of $30$ -- $35\%$ and $17$ -- $22\%$ compared to a linear and a residual model, respectively. While these results were obtained on a specific building, the performance gap suggests that this trend would be observed for other applications. PCNNs should thus remain the best modeling choice in general, even if classical black-box methods might attain a better accuracy on different data sets. 

Our investigations also illustrated a well-known pitfall of classical PiNNs and LSTMs, which can find shortcuts to fit the data well without respecting the underlying physical laws. This exemplifies the need to not solely consider the fit to the data as a measure of the quality of NNs but also ensure that their predictions make sense from a physical point of view. Our findings hence support the current trend to incorporate inductive biases, i.e.\new{,} prior knowledge, in NNs to alleviate their infamous %brittleness and 
generalization issues, leading to more principled architectures like the proposed PCNNs.

%While these results were obtained on a specific building, PCNNs were able to significantly improve upon the other physically consistent methods so that we can expect similar results %PCNNs to outperform the other physically consistent methods 
%on various case studies. PCNNs should thus remain the best modeling choice in general, even if classical black-box methods might attain a better accuracy on different data sets. 

In light of these results, PCNNs pave the way for NN-based methods that can simultaneously provide state-of-the-art performance \textit{and} physical guarantees. Furthermore, while only solar irradiation measurements and time information were fed to the black-box module of \text{PCNNs} throughout this study, showcasing the ability of the proposed approach to handle highly nonlinear effects, other inputs could be integrated straightforwardly. 

Thanks to their flexibility, we hence believe PCNNs to be an essential step towards the safe deployment of NNs in real-world applications, closing the sim2real gap of advanced control algorithms, and hope to spark an interest both in the building modeling community and beyond. It would indeed be interesting to investigate the capabilities of PCNNs to model different buildings, incorporate additional nonlinearities in their black-box modules or rules in the physics-inspired ones \new{--- for example leveraging Irreversible port-Hamiltonian dynamics \cite{zakwan2022physically}---}, and tackle other complex physical systems.

    \section*{Acknowledgements}
    
This research was supported by the Swiss National Science Foundation under NCCR Automation, grant agreement 51NF40\_180545, and in part by the Swiss Data Science Center, grant agreement C20-13.

    \section*{Declaration of competing interests}
    
The authors declare that they have no known competing financial interests or personal relationships that could have appeared to influence the work reported in this paper.

\printcredits

%% Loading bibliography style file
\bibliographystyle{els-cas-templates/model1-num-names}
%\bibliographystyle{elsarticle-harv}
%\bibliographystyle{unsrtnat}
%\bibliographystyle{cas-model2-names}

% Loading bibliography database
\bibliography{biblio.bib}

\begin{thebibliography}{65}
\expandafter\ifx\csname natexlab\endcsname\relax\def\natexlab#1{#1}\fi
\providecommand{\bibinfo}[2]{#2}
\ifx\xfnm\relax \def\xfnm[#1]{\unskip,\space#1}\fi
%Type = Misc
\bibitem[{{International Energy Agency (IEA)}(2020)}]{iea2020buildings}
\bibinfo{author}{{International Energy Agency (IEA)}},
  \bibinfo{title}{{Tracking Buildings 2020}},
  \bibinfo{howpublished}{\url{https://www.iea.org/reports/tracking-buildings-2020},
  Accessed: 2021.05.28}, \bibinfo{year}{2020}.
%Type = Book
\bibitem[{Commission et~al.(2022)Commission, for Energy, Kranzl, Fallahnejad,
  Büchele, Müller, Hummel, Fleiter, Mandel, Bagheri, Deac, Bernath, Miosga,
  Kiefer, Fragoso, Braungardt, Bürger, Spasova, Viegand, Naeraa, and
  Forthuber}]{doi/10.2833/525486}
\bibinfo{author}{E.~Commission}, \bibinfo{author}{D.-G. for Energy},
  \bibinfo{author}{L.~Kranzl}, \bibinfo{author}{M.~Fallahnejad},
  \bibinfo{author}{R.~Büchele}, \bibinfo{author}{A.~Müller},
  \bibinfo{author}{M.~Hummel}, \bibinfo{author}{T.~Fleiter},
  \bibinfo{author}{T.~Mandel}, \bibinfo{author}{M.~Bagheri},
  \bibinfo{author}{G.~Deac}, \bibinfo{author}{C.~Bernath},
  \bibinfo{author}{J.~Miosga}, \bibinfo{author}{C.~Kiefer},
  \bibinfo{author}{J.~Fragoso}, \bibinfo{author}{S.~Braungardt},
  \bibinfo{author}{V.~Bürger}, \bibinfo{author}{D.~Spasova},
  \bibinfo{author}{J.~Viegand}, \bibinfo{author}{R.~Naeraa},
  \bibinfo{author}{S.~Forthuber}, \bibinfo{title}{{Renewable space heating
  under the revised Renewable Energy Directive : ENER/C1/2018-494 : final
  report}}, \bibinfo{publisher}{Publications Office}, \bibinfo{year}{2022}.
%Type = Article
\bibitem[{Westermann and Evins(2019)}]{westermann2019surrogate}
\bibinfo{author}{P.~Westermann}, \bibinfo{author}{R.~Evins},
\newblock \bibinfo{title}{{Surrogate modelling for sustainable building
  design--a review}},
\newblock \bibinfo{journal}{Energy and Buildings} \bibinfo{volume}{198}
  (\bibinfo{year}{2019}) \bibinfo{pages}{170--186}.
%Type = Article
\bibitem[{Deb and Schlueter(2021)}]{deb2021review}
\bibinfo{author}{C.~Deb}, \bibinfo{author}{A.~Schlueter},
\newblock \bibinfo{title}{{Review of data-driven energy modelling techniques
  for building retrofit}},
\newblock \bibinfo{journal}{Renewable and Sustainable Energy Reviews}
  \bibinfo{volume}{144} (\bibinfo{year}{2021}) \bibinfo{pages}{110990}.
%Type = Article
\bibitem[{Svetozarevic et~al.(2022)Svetozarevic, Baumann, Muntwiler, Di~Natale,
  Zeilinger, and Heer}]{svetozarevic2022data}
\bibinfo{author}{B.~Svetozarevic}, \bibinfo{author}{C.~Baumann},
  \bibinfo{author}{S.~Muntwiler}, \bibinfo{author}{L.~Di~Natale},
  \bibinfo{author}{M.~N. Zeilinger}, \bibinfo{author}{P.~Heer},
\newblock \bibinfo{title}{{Data-driven control of room temperature and
  bidirectional EV charging using deep reinforcement learning: Simulations and
  experiments}},
\newblock \bibinfo{journal}{Applied Energy} \bibinfo{volume}{307}
  (\bibinfo{year}{2022}) \bibinfo{pages}{118127}.
%Type = Article
\bibitem[{Lei et~al.(2022)Lei, Zhan, Ono, Peng, Zhang, Hasama, and
  Chong}]{lei2022practical}
\bibinfo{author}{Y.~Lei}, \bibinfo{author}{S.~Zhan}, \bibinfo{author}{E.~Ono},
  \bibinfo{author}{Y.~Peng}, \bibinfo{author}{Z.~Zhang},
  \bibinfo{author}{T.~Hasama}, \bibinfo{author}{A.~Chong},
\newblock \bibinfo{title}{{A practical deep reinforcement learning framework
  for multivariate occupant-centric control in buildings}},
\newblock \bibinfo{journal}{Applied Energy} \bibinfo{volume}{324}
  (\bibinfo{year}{2022}) \bibinfo{pages}{119742}.
%Type = Article
\bibitem[{H{\"o}fer et~al.(2021)H{\"o}fer, Bekris, Handa, Gamboa, Mozifian,
  Golemo, Atkeson, Fox, Goldberg, Leonard et~al.}]{hofer2021sim2real}
\bibinfo{author}{S.~H{\"o}fer}, \bibinfo{author}{K.~Bekris},
  \bibinfo{author}{A.~Handa}, \bibinfo{author}{J.~C. Gamboa},
  \bibinfo{author}{M.~Mozifian}, \bibinfo{author}{F.~Golemo},
  \bibinfo{author}{C.~Atkeson}, \bibinfo{author}{D.~Fox},
  \bibinfo{author}{K.~Goldberg}, \bibinfo{author}{J.~Leonard}, et~al.,
\newblock \bibinfo{title}{{Sim2Real in robotics and automation: Applications
  and challenges}},
\newblock \bibinfo{journal}{IEEE transactions on automation science and
  engineering} \bibinfo{volume}{18} (\bibinfo{year}{2021})
  \bibinfo{pages}{398--400}.
%Type = Article
\bibitem[{Kadian et~al.(2020)Kadian, Truong, Gokaslan, Clegg, Wijmans, Lee,
  Savva, Chernova, and Batra}]{kadian2020sim2real}
\bibinfo{author}{A.~Kadian}, \bibinfo{author}{J.~Truong},
  \bibinfo{author}{A.~Gokaslan}, \bibinfo{author}{A.~Clegg},
  \bibinfo{author}{E.~Wijmans}, \bibinfo{author}{S.~Lee},
  \bibinfo{author}{M.~Savva}, \bibinfo{author}{S.~Chernova},
  \bibinfo{author}{D.~Batra},
\newblock \bibinfo{title}{{Sim2real predictivity: Does evaluation in simulation
  predict real-world performance?}},
\newblock \bibinfo{journal}{IEEE Robotics and Automation Letters}
  \bibinfo{volume}{5} (\bibinfo{year}{2020}) \bibinfo{pages}{6670--6677}.
%Type = Article
\bibitem[{Drgo{\v{n}}a et~al.(2020)Drgo{\v{n}}a, Arroyo, Figueroa, Blum,
  Arendt, Kim, Oll{\'e}, Oravec, Wetter, Vrabie et~al.}]{drgovna2020all}
\bibinfo{author}{J.~Drgo{\v{n}}a}, \bibinfo{author}{J.~Arroyo},
  \bibinfo{author}{I.~C. Figueroa}, \bibinfo{author}{D.~Blum},
  \bibinfo{author}{K.~Arendt}, \bibinfo{author}{D.~Kim}, \bibinfo{author}{E.~P.
  Oll{\'e}}, \bibinfo{author}{J.~Oravec}, \bibinfo{author}{M.~Wetter},
  \bibinfo{author}{D.~L. Vrabie}, et~al.,
\newblock \bibinfo{title}{{All you need to know about model predictive control
  for buildings}},
\newblock \bibinfo{journal}{Annual Reviews in Control} \bibinfo{volume}{50}
  (\bibinfo{year}{2020}) \bibinfo{pages}{190--232}.
%Type = Article
\bibitem[{Di~Natale et~al.(2022)Di~Natale, Svetozarevic, Heer, and
  Jones}]{di2022near}
\bibinfo{author}{L.~Di~Natale}, \bibinfo{author}{B.~Svetozarevic},
  \bibinfo{author}{P.~Heer}, \bibinfo{author}{C.~N. Jones},
\newblock \bibinfo{title}{{Near-optimal Deep Reinforcement Learning Policies
  from Data for Zone Temperature Control}},
\newblock \bibinfo{journal}{arXiv preprint arXiv:2203.05434}
  (\bibinfo{year}{2022}).
%Type = Article
\bibitem[{Yang et~al.(2021)Yang, Zhao, Li, Wu, and Zomaya}]{yang2021towards}
\bibinfo{author}{T.~Yang}, \bibinfo{author}{L.~Zhao}, \bibinfo{author}{W.~Li},
  \bibinfo{author}{J.~Wu}, \bibinfo{author}{A.~Y. Zomaya},
\newblock \bibinfo{title}{{Towards healthy and cost-effective indoor
  environment management in smart homes: A deep reinforcement learning
  approach}},
\newblock \bibinfo{journal}{Applied Energy} \bibinfo{volume}{300}
  (\bibinfo{year}{2021}) \bibinfo{pages}{117335}.
%Type = Article
\bibitem[{Dawood et~al.(2022)Dawood, Hatami, and Homod}]{dawood2022trade}
\bibinfo{author}{S.~M. Dawood}, \bibinfo{author}{A.~Hatami},
  \bibinfo{author}{R.~Z. Homod},
\newblock \bibinfo{title}{{Trade-off decisions in a novel deep reinforcement
  learning for energy savings in HVAC systems}},
\newblock \bibinfo{journal}{Journal of Building Performance Simulation}
  \bibinfo{volume}{15} (\bibinfo{year}{2022}) \bibinfo{pages}{809--831}.
%Type = Article
\bibitem[{Crawley et~al.(2001)Crawley, Lawrie, Winkelmann, Buhl, Huang,
  Pedersen, Strand, Liesen, Fisher, Witte et~al.}]{crawley2001energyplus}
\bibinfo{author}{D.~B. Crawley}, \bibinfo{author}{L.~K. Lawrie},
  \bibinfo{author}{F.~C. Winkelmann}, \bibinfo{author}{W.~F. Buhl},
  \bibinfo{author}{Y.~J. Huang}, \bibinfo{author}{C.~O. Pedersen},
  \bibinfo{author}{R.~K. Strand}, \bibinfo{author}{R.~J. Liesen},
  \bibinfo{author}{D.~E. Fisher}, \bibinfo{author}{M.~J. Witte}, et~al.,
\newblock \bibinfo{title}{{EnergyPlus: creating a new-generation building
  energy simulation program}},
\newblock \bibinfo{journal}{Energy and buildings} \bibinfo{volume}{33}
  (\bibinfo{year}{2001}) \bibinfo{pages}{319--331,
  \url{https://doi.org/10.1016/S0378--7788(00)00114--6}}.
%Type = Article
\bibitem[{Wetter and Haugstetter(2006)}]{wetter2006modelica}
\bibinfo{author}{M.~Wetter}, \bibinfo{author}{C.~Haugstetter},
\newblock \bibinfo{title}{{Modelica versus TRNSYS--A comparison between an
  equation-based and a procedural modeling language for building energy
  simulation}},
\newblock \bibinfo{journal}{Proceedings of SimBuild} \bibinfo{volume}{2}
  (\bibinfo{year}{2006}).
%Type = Article
\bibitem[{Mazzeo et~al.(2020)Mazzeo, Matera, Cornaro, Oliveti, Romagnoni, and
  De~Santoli}]{mazzeo2020energyplus}
\bibinfo{author}{D.~Mazzeo}, \bibinfo{author}{N.~Matera},
  \bibinfo{author}{C.~Cornaro}, \bibinfo{author}{G.~Oliveti},
  \bibinfo{author}{P.~Romagnoni}, \bibinfo{author}{L.~De~Santoli},
\newblock \bibinfo{title}{{EnergyPlus, IDA ICE and TRNSYS predictive simulation
  accuracy for building thermal behaviour evaluation by using an experimental
  campaign in solar test boxes with and without a PCM module}},
\newblock \bibinfo{journal}{Energy and Buildings} \bibinfo{volume}{212}
  (\bibinfo{year}{2020}) \bibinfo{pages}{109812,
  \url{https://doi.org/10.1016/j.enbuild.2020.109812}}.
%Type = Article
\bibitem[{Harb et~al.(2016)Harb, Boyanov, Hernandez, Streblow, and
  M{\"u}ller}]{harb2016development}
\bibinfo{author}{H.~Harb}, \bibinfo{author}{N.~Boyanov},
  \bibinfo{author}{L.~Hernandez}, \bibinfo{author}{R.~Streblow},
  \bibinfo{author}{D.~M{\"u}ller},
\newblock \bibinfo{title}{{Development and validation of grey-box models for
  forecasting the thermal response of occupied buildings}},
\newblock \bibinfo{journal}{Energy and Buildings} \bibinfo{volume}{117}
  (\bibinfo{year}{2016}) \bibinfo{pages}{199--207}.
%Type = Article
\bibitem[{Zhang et~al.(2019)Zhang, Chong, Pan, Zhang, and Lam}]{zhang2019whole}
\bibinfo{author}{Z.~Zhang}, \bibinfo{author}{A.~Chong},
  \bibinfo{author}{Y.~Pan}, \bibinfo{author}{C.~Zhang}, \bibinfo{author}{K.~P.
  Lam},
\newblock \bibinfo{title}{{Whole building energy model for HVAC optimal
  control: A practical framework based on deep reinforcement learning}},
\newblock \bibinfo{journal}{Energy and Buildings} \bibinfo{volume}{199}
  (\bibinfo{year}{2019}) \bibinfo{pages}{472--490}.
%Type = Article
\bibitem[{Ascione et~al.(2017)Ascione, Bianco, De~Stasio, Mauro, and
  Vanoli}]{ascione2017artificial}
\bibinfo{author}{F.~Ascione}, \bibinfo{author}{N.~Bianco},
  \bibinfo{author}{C.~De~Stasio}, \bibinfo{author}{G.~M. Mauro},
  \bibinfo{author}{G.~P. Vanoli},
\newblock \bibinfo{title}{{Artificial neural networks to predict energy
  performance and retrofit scenarios for any member of a building category: A
  novel approach}},
\newblock \bibinfo{journal}{Energy} \bibinfo{volume}{118}
  (\bibinfo{year}{2017}) \bibinfo{pages}{999--1017}.
%Type = Article
\bibitem[{Bourdeau et~al.(2019)Bourdeau, qiang Zhai, Nefzaoui, Guo, and
  Chatellier}]{bourdeau2019modeling}
\bibinfo{author}{M.~Bourdeau}, \bibinfo{author}{X.~qiang Zhai},
  \bibinfo{author}{E.~Nefzaoui}, \bibinfo{author}{X.~Guo},
  \bibinfo{author}{P.~Chatellier},
\newblock \bibinfo{title}{{Modeling and forecasting building energy
  consumption: A review of data-driven techniques}},
\newblock \bibinfo{journal}{Sustainable Cities and Society}
  \bibinfo{volume}{48} (\bibinfo{year}{2019}) \bibinfo{pages}{101533,
  \url{https://doi.org/10.1016/j.scs.2019.101533}}.
%Type = Article
\bibitem[{Foucquier et~al.(2013)Foucquier, Robert, Suard, St{\'e}phan, and
  Jay}]{foucquier2013state}
\bibinfo{author}{A.~Foucquier}, \bibinfo{author}{S.~Robert},
  \bibinfo{author}{F.~Suard}, \bibinfo{author}{L.~St{\'e}phan},
  \bibinfo{author}{A.~Jay},
\newblock \bibinfo{title}{{State of the art in building modelling and energy
  performances prediction: A review}},
\newblock \bibinfo{journal}{Renewable and Sustainable Energy Reviews}
  \bibinfo{volume}{23} (\bibinfo{year}{2013}) \bibinfo{pages}{272--288}.
%Type = Article
\bibitem[{Maasoumy et~al.(2014{\natexlab{a}})Maasoumy, Razmara, Shahbakhti, and
  Vincentelli}]{maasoumy2014handling}
\bibinfo{author}{M.~Maasoumy}, \bibinfo{author}{M.~Razmara},
  \bibinfo{author}{M.~Shahbakhti}, \bibinfo{author}{A.~S. Vincentelli},
\newblock \bibinfo{title}{{Handling model uncertainty in model predictive
  control for energy efficient buildings}},
\newblock \bibinfo{journal}{Energy and Buildings} \bibinfo{volume}{77}
  (\bibinfo{year}{2014}{\natexlab{a}}) \bibinfo{pages}{377--392}.
%Type = Inproceedings
\bibitem[{Maasoumy et~al.(2014{\natexlab{b}})Maasoumy, Razmara, Shahbakhti, and
  Vincentelli}]{maasoumy2014selecting}
\bibinfo{author}{M.~Maasoumy}, \bibinfo{author}{M.~Razmara},
  \bibinfo{author}{M.~Shahbakhti}, \bibinfo{author}{A.~S. Vincentelli},
\newblock \bibinfo{title}{{Selecting building predictive control based on model
  uncertainty}},
\newblock in: \bibinfo{booktitle}{2014 American Control Conference},
  \bibinfo{organization}{IEEE}, pp. \bibinfo{pages}{404--411}.
%Type = Article
\bibitem[{Li et~al.(2021)Li, O'Neill, Zhang, Chen, Im, and DeGraw}]{li2021grey}
\bibinfo{author}{Y.~Li}, \bibinfo{author}{Z.~O'Neill},
  \bibinfo{author}{L.~Zhang}, \bibinfo{author}{J.~Chen},
  \bibinfo{author}{P.~Im}, \bibinfo{author}{J.~DeGraw},
\newblock \bibinfo{title}{{Grey-box modeling and application for building
  energy simulations-A critical review}},
\newblock \bibinfo{journal}{Renewable and Sustainable Energy Reviews}
  \bibinfo{volume}{146} (\bibinfo{year}{2021}) \bibinfo{pages}{111174}.
%Type = Article
\bibitem[{Arroyo et~al.(2020)Arroyo, Spiessens, and
  Helsen}]{arroyo2020identification}
\bibinfo{author}{J.~Arroyo}, \bibinfo{author}{F.~Spiessens},
  \bibinfo{author}{L.~Helsen},
\newblock \bibinfo{title}{{Identification of multi-zone grey-box building
  models for use in model predictive control}},
\newblock \bibinfo{journal}{Journal of Building Performance Simulation}
  \bibinfo{volume}{13} (\bibinfo{year}{2020}) \bibinfo{pages}{472--486}.
%Type = Inproceedings
\bibitem[{Lin et~al.(2012)Lin, Middelkoop, and Barooah}]{lin2012issues}
\bibinfo{author}{Y.~Lin}, \bibinfo{author}{T.~Middelkoop},
  \bibinfo{author}{P.~Barooah},
\newblock \bibinfo{title}{{Issues in identification of control-oriented thermal
  models of zones in multi-zone buildings}},
\newblock in: \bibinfo{booktitle}{2012 IEEE 51st IEEE conference on decision
  and control (CDC)}, \bibinfo{organization}{IEEE}, pp.
  \bibinfo{pages}{6932--6937}.
%Type = Article
\bibitem[{Shamsi et~al.(2021)Shamsi, Ali, Mangina, and
  O’Donnell}]{shamsi2021feature}
\bibinfo{author}{M.~H. Shamsi}, \bibinfo{author}{U.~Ali},
  \bibinfo{author}{E.~Mangina}, \bibinfo{author}{J.~O’Donnell},
\newblock \bibinfo{title}{{Feature assessment frameworks to evaluate
  reduced-order grey-box building energy models}},
\newblock \bibinfo{journal}{Applied Energy} \bibinfo{volume}{298}
  (\bibinfo{year}{2021}) \bibinfo{pages}{117174,
  \url{https://doi.org/10.1016/j.apenergy.2021.117174}}.
%Type = Article
\bibitem[{Shamsi et~al.(2019)Shamsi, Ali, and
  O’Donnell}]{shamsi2019generalization}
\bibinfo{author}{M.~H. Shamsi}, \bibinfo{author}{U.~Ali},
  \bibinfo{author}{J.~O’Donnell},
\newblock \bibinfo{title}{{A generalization approach for reduced order
  modelling of commercial buildings}},
\newblock \bibinfo{journal}{Journal of Building Performance Simulation}
  \bibinfo{volume}{12} (\bibinfo{year}{2019}) \bibinfo{pages}{729--744,
  \url{https://doi.org/10.1080/19401493.2019.1641554}}.
%Type = Article
\bibitem[{Berthou et~al.(2014)Berthou, Stabat, Salvazet, and
  Marchio}]{berthou2014development}
\bibinfo{author}{T.~Berthou}, \bibinfo{author}{P.~Stabat},
  \bibinfo{author}{R.~Salvazet}, \bibinfo{author}{D.~Marchio},
\newblock \bibinfo{title}{{Development and validation of a gray box model to
  predict thermal behavior of occupied office buildings}},
\newblock \bibinfo{journal}{Energy and Buildings} \bibinfo{volume}{74}
  (\bibinfo{year}{2014}) \bibinfo{pages}{91--100,
  \url{https://doi.org/10.1016/j.enbuild.2014.01.038}}.
%Type = Inproceedings
\bibitem[{Sheng and Jia(2020)}]{sheng2020short}
\bibinfo{author}{F.~Sheng}, \bibinfo{author}{L.~Jia},
\newblock \bibinfo{title}{{Short-term load forecasting based on SARIMAX-LSTM}},
\newblock in: \bibinfo{booktitle}{2020 5th International Conference on Power
  and Renewable Energy (ICPRE)}, \bibinfo{organization}{IEEE}, pp.
  \bibinfo{pages}{90--94}.
%Type = Article
\bibitem[{Li and Wen(2014)}]{li2014review}
\bibinfo{author}{X.~Li}, \bibinfo{author}{J.~Wen},
\newblock \bibinfo{title}{{Review of building energy modeling for control and
  operation}},
\newblock \bibinfo{journal}{Renewable and Sustainable Energy Reviews}
  \bibinfo{volume}{37} (\bibinfo{year}{2014}) \bibinfo{pages}{517--537}.
%Type = Inproceedings
\bibitem[{Royer et~al.(2016)Royer, Thil, and Talbert}]{royer2016towards}
\bibinfo{author}{S.~Royer}, \bibinfo{author}{S.~Thil},
  \bibinfo{author}{T.~Talbert},
\newblock \bibinfo{title}{{Towards a generic procedure for modeling buildings
  and their thermal zones}},
\newblock in: \bibinfo{booktitle}{2016 IEEE 16th International Conference on
  Environment and Electrical Engineering (EEEIC)},
  \bibinfo{organization}{IEEE}, pp. \bibinfo{pages}{1--6,
  \url{https://doi.org/10.1109/EEEIC.2016.7555567}}.
%Type = Article
\bibitem[{Tien et~al.(2022)Tien, Wei, Darkwa, Wood, and
  Calautit}]{tien2022machine}
\bibinfo{author}{P.~W. Tien}, \bibinfo{author}{S.~Wei},
  \bibinfo{author}{J.~Darkwa}, \bibinfo{author}{C.~Wood},
  \bibinfo{author}{J.~K. Calautit},
\newblock \bibinfo{title}{{Machine Learning and Deep Learning Methods for
  Enhancing Building Energy Efficiency and Indoor Environmental Quality--A
  Review}},
\newblock \bibinfo{journal}{Energy and AI}  (\bibinfo{year}{2022})
  \bibinfo{pages}{100198}.
%Type = Article
\bibitem[{Deb et~al.(2017)Deb, Zhang, Yang, Lee, and Shah}]{deb2017review}
\bibinfo{author}{C.~Deb}, \bibinfo{author}{F.~Zhang},
  \bibinfo{author}{J.~Yang}, \bibinfo{author}{S.~E. Lee},
  \bibinfo{author}{K.~W. Shah},
\newblock \bibinfo{title}{{A review on time series forecasting techniques for
  building energy consumption}},
\newblock \bibinfo{journal}{Renewable and Sustainable Energy Reviews}
  \bibinfo{volume}{74} (\bibinfo{year}{2017}) \bibinfo{pages}{902--924}.
%Type = Article
\bibitem[{Abbasimehr and Paki(2022)}]{abbasimehr2022improving}
\bibinfo{author}{H.~Abbasimehr}, \bibinfo{author}{R.~Paki},
\newblock \bibinfo{title}{{Improving time series forecasting using LSTM and
  attention models}},
\newblock \bibinfo{journal}{Journal of Ambient Intelligence and Humanized
  Computing} \bibinfo{volume}{13} (\bibinfo{year}{2022})
  \bibinfo{pages}{673--691}.
%Type = Article
\bibitem[{Zou et~al.(2020)Zou, Yu, and Ergan}]{zou2020towards}
\bibinfo{author}{Z.~Zou}, \bibinfo{author}{X.~Yu}, \bibinfo{author}{S.~Ergan},
\newblock \bibinfo{title}{{Towards optimal control of air handling units using
  deep reinforcement learning and recurrent neural network}},
\newblock \bibinfo{journal}{Building and Environment} \bibinfo{volume}{168}
  (\bibinfo{year}{2020}) \bibinfo{pages}{106535}.
%Type = Inproceedings
\bibitem[{Delcroix et~al.(2021)Delcroix, Ny, Bernier, Azam, Qu, and
  Venne}]{delcroix2021autoregressive}
\bibinfo{author}{B.~Delcroix}, \bibinfo{author}{J.~L. Ny},
  \bibinfo{author}{M.~Bernier}, \bibinfo{author}{M.~Azam},
  \bibinfo{author}{B.~Qu}, \bibinfo{author}{J.-S. Venne},
\newblock \bibinfo{title}{{Autoregressive neural networks with exogenous
  variables for indoor temperature prediction in buildings}},
\newblock in: \bibinfo{booktitle}{Building Simulation},
  volume~\bibinfo{volume}{14}, \bibinfo{organization}{Springer}, pp.
  \bibinfo{pages}{165--178}.
%Type = Article
\bibitem[{Szegedy et~al.(2013)Szegedy, Zaremba, Sutskever, Bruna, Erhan,
  Goodfellow, and Fergus}]{szegedy2013intriguing}
\bibinfo{author}{C.~Szegedy}, \bibinfo{author}{W.~Zaremba},
  \bibinfo{author}{I.~Sutskever}, \bibinfo{author}{J.~Bruna},
  \bibinfo{author}{D.~Erhan}, \bibinfo{author}{I.~Goodfellow},
  \bibinfo{author}{R.~Fergus},
\newblock \bibinfo{title}{{Intriguing properties of neural networks}},
\newblock \bibinfo{journal}{arXiv preprint arXiv:1312.6199}
  (\bibinfo{year}{2013}).
%Type = Article
\bibitem[{D'Amour et~al.(2020)D'Amour, Heller, Moldovan, Adlam, Alipanahi,
  Beutel, Chen, Deaton, Eisenstein, Hoffman et~al.}]{d2020underspecification}
\bibinfo{author}{A.~D'Amour}, \bibinfo{author}{K.~Heller},
  \bibinfo{author}{D.~Moldovan}, \bibinfo{author}{B.~Adlam},
  \bibinfo{author}{B.~Alipanahi}, \bibinfo{author}{A.~Beutel},
  \bibinfo{author}{C.~Chen}, \bibinfo{author}{J.~Deaton},
  \bibinfo{author}{J.~Eisenstein}, \bibinfo{author}{M.~D. Hoffman}, et~al.,
\newblock \bibinfo{title}{{Underspecification presents challenges for
  credibility in modern machine learning}},
\newblock \bibinfo{journal}{arXiv preprint arXiv:2011.03395}
  (\bibinfo{year}{2020}).
%Type = Article
\bibitem[{Di~Natale et~al.(2022)Di~Natale, Svetozarevic, Heer, and
  Jones}]{di2021physically}
\bibinfo{author}{L.~Di~Natale}, \bibinfo{author}{B.~Svetozarevic},
  \bibinfo{author}{P.~Heer}, \bibinfo{author}{C.~N. Jones},
\newblock \bibinfo{title}{Physically consistent neural networks for building
  thermal modeling: theory and analysis},
\newblock \bibinfo{journal}{Applied Energy} \bibinfo{volume}{325}
  (\bibinfo{year}{2022}) \bibinfo{pages}{119806}.
%Type = Article
\bibitem[{Geirhos et~al.(2020)Geirhos, Jacobsen, Michaelis, Zemel, Brendel,
  Bethge, and Wichmann}]{geirhos2020shortcut}
\bibinfo{author}{R.~Geirhos}, \bibinfo{author}{J.-H. Jacobsen},
  \bibinfo{author}{C.~Michaelis}, \bibinfo{author}{R.~Zemel},
  \bibinfo{author}{W.~Brendel}, \bibinfo{author}{M.~Bethge},
  \bibinfo{author}{F.~A. Wichmann},
\newblock \bibinfo{title}{{Shortcut learning in deep neural networks}},
\newblock \bibinfo{journal}{Nature Machine Intelligence} \bibinfo{volume}{2}
  (\bibinfo{year}{2020}) \bibinfo{pages}{665--673}.
%Type = Inproceedings
\bibitem[{Beery et~al.(2018)Beery, Van~Horn, and Perona}]{beery2018recognition}
\bibinfo{author}{S.~Beery}, \bibinfo{author}{G.~Van~Horn},
  \bibinfo{author}{P.~Perona},
\newblock \bibinfo{title}{{Recognition in terra incognita}},
\newblock in: \bibinfo{booktitle}{Proceedings of the European conference on
  computer vision (ECCV)}, pp. \bibinfo{pages}{456--473}.
%Type = Article
\bibitem[{Rosenfeld et~al.(2018)Rosenfeld, Zemel, and
  Tsotsos}]{rosenfeld2018elephant}
\bibinfo{author}{A.~Rosenfeld}, \bibinfo{author}{R.~Zemel},
  \bibinfo{author}{J.~K. Tsotsos},
\newblock \bibinfo{title}{{The elephant in the room}},
\newblock \bibinfo{journal}{arXiv preprint arXiv:1808.03305}
  (\bibinfo{year}{2018}).
%Type = Article
\bibitem[{Heuer et~al.(2016)Heuer, Monz, and Smeulders}]{heuer2016generating}
\bibinfo{author}{H.~Heuer}, \bibinfo{author}{C.~Monz}, \bibinfo{author}{A.~W.
  Smeulders},
\newblock \bibinfo{title}{{Generating captions without looking beyond
  objects}},
\newblock \bibinfo{journal}{arXiv preprint arXiv:1610.03708}
  (\bibinfo{year}{2016}).
%Type = Article
\bibitem[{Zech et~al.(2018)Zech, Badgeley, Liu, Costa, Titano, and
  Oermann}]{zech2018variable}
\bibinfo{author}{J.~R. Zech}, \bibinfo{author}{M.~A. Badgeley},
  \bibinfo{author}{M.~Liu}, \bibinfo{author}{A.~B. Costa},
  \bibinfo{author}{J.~J. Titano}, \bibinfo{author}{E.~K. Oermann},
\newblock \bibinfo{title}{{Variable generalization performance of a deep
  learning model to detect pneumonia in chest radiographs: a cross-sectional
  study}},
\newblock \bibinfo{journal}{PLoS medicine} \bibinfo{volume}{15}
  (\bibinfo{year}{2018}) \bibinfo{pages}{e1002683}.
%Type = Article
\bibitem[{Karniadakis et~al.(2021)Karniadakis, Kevrekidis, Lu, Perdikaris,
  Wang, and Yang}]{karniadakis2021physics}
\bibinfo{author}{G.~E. Karniadakis}, \bibinfo{author}{I.~G. Kevrekidis},
  \bibinfo{author}{L.~Lu}, \bibinfo{author}{P.~Perdikaris},
  \bibinfo{author}{S.~Wang}, \bibinfo{author}{L.~Yang},
\newblock \bibinfo{title}{{Physics-informed machine learning}},
\newblock \bibinfo{journal}{Nature Reviews Physics} \bibinfo{volume}{3}
  (\bibinfo{year}{2021}) \bibinfo{pages}{422--440}.
%Type = Article
\bibitem[{Daw et~al.(2017)Daw, Karpatne, Watkins, Read, and
  Kumar}]{daw2017physics}
\bibinfo{author}{A.~Daw}, \bibinfo{author}{A.~Karpatne},
  \bibinfo{author}{W.~Watkins}, \bibinfo{author}{J.~Read},
  \bibinfo{author}{V.~Kumar},
\newblock \bibinfo{title}{{Physics-guided neural networks (pgnn): An
  application in lake temperature modeling}},
\newblock \bibinfo{journal}{arXiv preprint arXiv:1710.11431}
  (\bibinfo{year}{2017}).
%Type = Article
\bibitem[{Raissi et~al.(2019)Raissi, Perdikaris, and
  Karniadakis}]{raissi2019physics}
\bibinfo{author}{M.~Raissi}, \bibinfo{author}{P.~Perdikaris},
  \bibinfo{author}{G.~E. Karniadakis},
\newblock \bibinfo{title}{{Physics-informed neural networks: A deep learning
  framework for solving forward and inverse problems involving nonlinear
  partial differential equations}},
\newblock \bibinfo{journal}{Journal of Computational physics}
  \bibinfo{volume}{378} (\bibinfo{year}{2019}) \bibinfo{pages}{686--707}.
%Type = Article
\bibitem[{Greydanus et~al.(2019)Greydanus, Dzamba, and
  Yosinski}]{greydanus2019hamiltonian}
\bibinfo{author}{S.~Greydanus}, \bibinfo{author}{M.~Dzamba},
  \bibinfo{author}{J.~Yosinski},
\newblock \bibinfo{title}{{Hamiltonian neural networks}},
\newblock \bibinfo{journal}{Advances in neural information processing systems}
  \bibinfo{volume}{32} (\bibinfo{year}{2019}).
%Type = Article
\bibitem[{Lutter et~al.(2019)Lutter, Ritter, and Peters}]{lutter2019deep}
\bibinfo{author}{M.~Lutter}, \bibinfo{author}{C.~Ritter},
  \bibinfo{author}{J.~Peters},
\newblock \bibinfo{title}{{Deep lagrangian networks: Using physics as model
  prior for deep learning}},
\newblock \bibinfo{journal}{arXiv preprint arXiv:1907.04490}
  (\bibinfo{year}{2019}).
%Type = Article
\bibitem[{Cranmer et~al.(2020)Cranmer, Greydanus, Hoyer, Battaglia, Spergel,
  and Ho}]{cranmer2020lagrangian}
\bibinfo{author}{M.~Cranmer}, \bibinfo{author}{S.~Greydanus},
  \bibinfo{author}{S.~Hoyer}, \bibinfo{author}{P.~Battaglia},
  \bibinfo{author}{D.~Spergel}, \bibinfo{author}{S.~Ho},
\newblock \bibinfo{title}{{Lagrangian neural networks}},
\newblock \bibinfo{journal}{arXiv preprint arXiv:2003.04630}
  (\bibinfo{year}{2020}).
%Type = Article
\bibitem[{Hendriks et~al.(2020)Hendriks, Jidling, Wills, and
  Sch{\"o}n}]{hendriks2020linearly}
\bibinfo{author}{J.~Hendriks}, \bibinfo{author}{C.~Jidling},
  \bibinfo{author}{A.~Wills}, \bibinfo{author}{T.~Sch{\"o}n},
\newblock \bibinfo{title}{{Linearly constrained neural networks}},
\newblock \bibinfo{journal}{arXiv preprint arXiv:2002.01600}
  (\bibinfo{year}{2020}).
%Type = Inproceedings
\bibitem[{LeCun et~al.(2010)LeCun, Kavukcuoglu, and
  Farabet}]{lecun2010convolutional}
\bibinfo{author}{Y.~LeCun}, \bibinfo{author}{K.~Kavukcuoglu},
  \bibinfo{author}{C.~Farabet},
\newblock \bibinfo{title}{{Convolutional networks and applications in vision}},
\newblock in: \bibinfo{booktitle}{Proceedings of 2010 IEEE international
  symposium on circuits and systems}, \bibinfo{organization}{IEEE}, pp.
  \bibinfo{pages}{253--256}.
%Type = Article
\bibitem[{Karpathy et~al.(2015)Karpathy, Johnson, and
  Fei-Fei}]{karpathy2015visualizing}
\bibinfo{author}{A.~Karpathy}, \bibinfo{author}{J.~Johnson},
  \bibinfo{author}{L.~Fei-Fei},
\newblock \bibinfo{title}{{Visualizing and understanding recurrent networks}},
\newblock \bibinfo{journal}{arXiv preprint arXiv:1506.02078}
  (\bibinfo{year}{2015}).
%Type = Article
\bibitem[{Scarselli et~al.(2008)Scarselli, Gori, Tsoi, Hagenbuchner, and
  Monfardini}]{scarselli2008graph}
\bibinfo{author}{F.~Scarselli}, \bibinfo{author}{M.~Gori},
  \bibinfo{author}{A.~C. Tsoi}, \bibinfo{author}{M.~Hagenbuchner},
  \bibinfo{author}{G.~Monfardini},
\newblock \bibinfo{title}{{The graph neural network model}},
\newblock \bibinfo{journal}{IEEE transactions on neural networks}
  \bibinfo{volume}{20} (\bibinfo{year}{2008}) \bibinfo{pages}{61--80}.
%Type = Article
\bibitem[{Gokhale et~al.(2022)Gokhale, Claessens, and
  Develder}]{gokhale2022physics}
\bibinfo{author}{G.~Gokhale}, \bibinfo{author}{B.~Claessens},
  \bibinfo{author}{C.~Develder},
\newblock \bibinfo{title}{{Physics informed neural networks for control
  oriented thermal modeling of buildings}},
\newblock \bibinfo{journal}{Applied Energy} \bibinfo{volume}{314}
  (\bibinfo{year}{2022}) \bibinfo{pages}{118852}.
%Type = Inproceedings
\bibitem[{Nagarathinam et~al.(2022)Nagarathinam, Chati, Venkat, and
  Vasan}]{nagarathinam2022pacman}
\bibinfo{author}{S.~Nagarathinam}, \bibinfo{author}{Y.~S. Chati},
  \bibinfo{author}{M.~P. Venkat}, \bibinfo{author}{A.~Vasan},
\newblock \bibinfo{title}{{PACMAN: physics-aware control MANager for HVAC}},
\newblock in: \bibinfo{booktitle}{Proceedings of the 9th ACM International
  Conference on Systems for Energy-Efficient Buildings, Cities, and
  Transportation}, pp. \bibinfo{pages}{11--20}.
%Type = Article
\bibitem[{Drgo{\v{n}}a et~al.(2021)Drgo{\v{n}}a, Tuor, Chandan, and
  Vrabie}]{drgona2021physics}
\bibinfo{author}{J.~Drgo{\v{n}}a}, \bibinfo{author}{A.~R. Tuor},
  \bibinfo{author}{V.~Chandan}, \bibinfo{author}{D.~L. Vrabie},
\newblock \bibinfo{title}{{Physics-constrained deep learning of multi-zone
  building thermal dynamics}},
\newblock \bibinfo{journal}{Energy and Buildings} \bibinfo{volume}{243}
  (\bibinfo{year}{2021}) \bibinfo{pages}{110992,
  \url{https://doi.org/10.1016/j.enbuild.2021.110992}}.
%Type = Article
\bibitem[{Chen et~al.(2023)Chen, Yang, Chen, Yan, Zeng, and
  Dai}]{chen2023physics}
\bibinfo{author}{Y.~Chen}, \bibinfo{author}{Q.~Yang},
  \bibinfo{author}{Z.~Chen}, \bibinfo{author}{C.~Yan},
  \bibinfo{author}{S.~Zeng}, \bibinfo{author}{M.~Dai},
\newblock \bibinfo{title}{{Physics-informed neural networks for building
  thermal modeling and demand response control}},
\newblock \bibinfo{journal}{Building and Environment}  (\bibinfo{year}{2023})
  \bibinfo{pages}{110149}.
%Type = Article
\bibitem[{Zakwan et~al.(2022)Zakwan, Di~Natale, Svetozarevic, Heer, Jones, and
  Trecate}]{zakwan2022physically}
\bibinfo{author}{M.~Zakwan}, \bibinfo{author}{L.~Di~Natale},
  \bibinfo{author}{B.~Svetozarevic}, \bibinfo{author}{P.~Heer},
  \bibinfo{author}{C.~N. Jones}, \bibinfo{author}{G.~F. Trecate},
\newblock \bibinfo{title}{{Physically Consistent Neural ODEs for Learning
  Multi-Physics Systems}},
\newblock \bibinfo{journal}{arXiv preprint arXiv:2211.06130}
  (\bibinfo{year}{2022}).
%Type = Misc
\bibitem[{Empa(2021)}]{nest}
\bibinfo{author}{Empa}, \bibinfo{title}{{NEST}},
  \bibinfo{howpublished}{\url{https://www.empa.ch/web/nest/overview}},
  \bibinfo{year}{2021}. \bibinfo{note}{Accessed: 01.10.2022}.
%Type = Incollection
\bibitem[{Paszke et~al.(2019)Paszke, Gross, Massa, Lerer, Bradbury, Chanan,
  Killeen, Lin, Gimelshein, Antiga, Desmaison, Kopf, Yang, DeVito, Raison,
  Tejani, Chilamkurthy, Steiner, Fang, Bai, and Chintala}]{NEURIPS2019_9015}
\bibinfo{author}{A.~Paszke}, \bibinfo{author}{S.~Gross},
  \bibinfo{author}{F.~Massa}, \bibinfo{author}{A.~Lerer},
  \bibinfo{author}{J.~Bradbury}, \bibinfo{author}{G.~Chanan},
  \bibinfo{author}{T.~Killeen}, \bibinfo{author}{Z.~Lin},
  \bibinfo{author}{N.~Gimelshein}, \bibinfo{author}{L.~Antiga},
  \bibinfo{author}{A.~Desmaison}, \bibinfo{author}{A.~Kopf},
  \bibinfo{author}{E.~Yang}, \bibinfo{author}{Z.~DeVito},
  \bibinfo{author}{M.~Raison}, \bibinfo{author}{A.~Tejani},
  \bibinfo{author}{S.~Chilamkurthy}, \bibinfo{author}{B.~Steiner},
  \bibinfo{author}{L.~Fang}, \bibinfo{author}{J.~Bai},
  \bibinfo{author}{S.~Chintala},
\newblock \bibinfo{title}{{PyTorch: An Imperative Style, High-Performance Deep
  Learning Library}},
\newblock in: \bibinfo{editor}{H.~Wallach}, \bibinfo{editor}{H.~Larochelle},
  \bibinfo{editor}{A.~Beygelzimer}, \bibinfo{editor}{F.~d\textquotesingle
  Alch\'{e}-Buc}, \bibinfo{editor}{E.~Fox}, \bibinfo{editor}{R.~Garnett}
  (Eds.), \bibinfo{booktitle}{Advances in Neural Information Processing Systems
  32}, \bibinfo{publisher}{Curran Associates, Inc.}, \bibinfo{year}{2019}, pp.
  \bibinfo{pages}{8024--8035}.
%Type = Article
\bibitem[{Pedregosa et~al.(2011)Pedregosa, Varoquaux, Gramfort, Michel,
  Thirion, Grisel, Blondel, Prettenhofer, Weiss, Dubourg, Vanderplas, Passos,
  Cournapeau, Brucher, Perrot, and Duchesnay}]{scikit-learn}
\bibinfo{author}{F.~Pedregosa}, \bibinfo{author}{G.~Varoquaux},
  \bibinfo{author}{A.~Gramfort}, \bibinfo{author}{V.~Michel},
  \bibinfo{author}{B.~Thirion}, \bibinfo{author}{O.~Grisel},
  \bibinfo{author}{M.~Blondel}, \bibinfo{author}{P.~Prettenhofer},
  \bibinfo{author}{R.~Weiss}, \bibinfo{author}{V.~Dubourg},
  \bibinfo{author}{J.~Vanderplas}, \bibinfo{author}{A.~Passos},
  \bibinfo{author}{D.~Cournapeau}, \bibinfo{author}{M.~Brucher},
  \bibinfo{author}{M.~Perrot}, \bibinfo{author}{E.~Duchesnay},
\newblock \bibinfo{title}{{Scikit-learn: Machine Learning in {P}ython}},
\newblock \bibinfo{journal}{Journal of Machine Learning Research}
  \bibinfo{volume}{12} (\bibinfo{year}{2011}) \bibinfo{pages}{2825--2830}.
%Type = Inproceedings
\bibitem[{Seabold and Perktold(2010)}]{seabold2010statsmodels}
\bibinfo{author}{S.~Seabold}, \bibinfo{author}{J.~Perktold},
\newblock \bibinfo{title}{{statsmodels: Econometric and statistical modeling
  with python}},
\newblock in: \bibinfo{booktitle}{9th Python in Science Conference}.
%Type = Article
\bibitem[{Faroughi et~al.(2022)Faroughi, Pawar, Fernandes, Das, Kalantari, and
  Mahjour}]{faroughi2022physics}
\bibinfo{author}{S.~A. Faroughi}, \bibinfo{author}{N.~Pawar},
  \bibinfo{author}{C.~Fernandes}, \bibinfo{author}{S.~Das},
  \bibinfo{author}{N.~K. Kalantari}, \bibinfo{author}{S.~K. Mahjour},
\newblock \bibinfo{title}{{Physics-Guided, Physics-Informed, and
  Physics-Encoded Neural Networks in Scientific Computing}},
\newblock \bibinfo{journal}{arXiv preprint arXiv:2211.07377}
  (\bibinfo{year}{2022}).
%Type = Misc
\bibitem[{Nogueira(14  )}]{bayesopt}
\bibinfo{author}{F.~Nogueira}, \bibinfo{title}{{{Bayesian Optimization}: Open
  source constrained global optimization tool for {Python}}},
  \bibinfo{year}{2014--}.

\end{thebibliography}

\appendix
    \section*{Appendices}
    
    \section{Proofs of the main theoretical results}
    \label{app:proofs}
    
    \subsection{Proof of Proposition~\ref{prop:heat propagation}}
    \label{app:proof heat propagation}
    
The proof works by induction on $i$. %For one-step ahead predictions of a model \eqref{equ:common-T}, we have $\forall z,y\in\mathcal{B}$:
Based on \eqref{equ:common-T}, we can immediately write, $\forall z,y\in\mathcal{B}$:
\begin{align}
    \frac{\partial\bm{T}_{k+j+1}^z}{\partial\bm{T}_{k+j}^y} &= 
    \begin{cases}
    1-b^z-\sum_{y\in\mathcal{N}(z)}c^{zy}, &\text{if }y=z, \\
    c^{zy}, &\text{if }y\in\mathcal{N}(z), \label{equ:base} \\
    0, &\text{otherwise,}
    \end{cases} 
\end{align}
where we used the definition of $\bm{\Delta T}$ in \eqref{equ:deltaT}. By definition, if \eqref{equ:prop1-cond1} and \eqref{equ:prop1-cond2} hold, we hence get positive derivatives if $y=z$ or $y\in\mathcal{N}(z)$ and zeros for any other choice of $y$, satisfying \eqref{equ:prop1} and completing the base case of the induction. 

Let us now assume that: 
\begin{align}
    \frac{\partial\bm{T}_{k+h}^x}{\partial\bm{T}_{k+j}^y} \geq 0,\ \forall y,x\in\mathcal{B},\ \forall j < h < i, \label{equ:hypo}
\end{align}
with equality if and only if $y\notin\mathcal{N}^{(h-j)}(x)$, and show that the proposition holds for time step $i$. Since we know the temperature in zone $z$ at time $k+i$ is potentially impacted by the temperature in the entire building at the previous step, we can decompose the partial derivative of interest as follows:
\begin{align}
    \frac{\partial\bm{T}_{k+i}^z}{\partial\bm{T}_{k+j}^y} &= \sum_{x\in\mathcal{B}} \frac{\partial\bm{T}_{k+i}^z}{\partial\bm{T}_{k+i-1}^x} \frac{\partial\bm{T}_{k+i-1}^x}{\partial\bm{T}_{k+j}^y}, \label{equ:sum}
\end{align}
for all $y,z\in\mathcal{B}$. Since \eqref{equ:common-T} is time-invariant, we know that:
\begin{align*}
    \frac{\partial\bm{T}_{k+i}^z}{\partial\bm{T}_{k+i-1}^x} &= \frac{\partial\bm{T}_{k+j+1}^z}{\partial\bm{T}_{k+j}^x} \geq 0, 
\end{align*}
with equality if and only if $x\notin\mathcal{N}(z)$ by the base case of the induction \eqref{equ:base} if \eqref{equ:prop1-cond1} and \eqref{equ:prop1-cond2} hold. Similarly, by the induction hypothesis \eqref{equ:hypo}, we know that:
\begin{align*}
    \frac{\partial\bm{T}_{k+i-1}^x}{\partial\bm{T}_{k+j}^y} \geq 0,\ \forall y,x\in\mathcal{B},
\end{align*}
with equality if and only if $y\notin\mathcal{N}^{(i-j-1)}(x)$. Putting the last two equations together, we see that:
\begin{align*}
    \frac{\partial\bm{T}_{k+i}^z}{\partial\bm{T}_{k+j}^y} &\geq 0, 
\end{align*}
with equality only if each term of the sum in Equation~\eqref{equ:sum} is zero. By the previous arguments, this means $y\notin\mathcal{N}^{(i-j-1)}(x)$ or $x\notin\mathcal{N}(z)$ for all zones $x$. This is equivalent to say that there is no path from $y$ to $z$ in $(i-j)$ steps, i.e.\new{,} $y\notin\mathcal{N}^{(i-j)}(z)$, which concludes the inductive step.

    \subsection{Proof of Proposition~\ref{prop:inputs consistency}}
    \label{app:proof inputs consistency}

We start by noticing that $\forall y\in\mathcal{B}$, \eqref{equ:common-T} implies:
\begin{align}
    \frac{\partial\bm{T}_{k+j+1}^y}{\partial\bm{u}_{k+j}^y} &= 
    \begin{cases}
    a^y_h,          &\text{if } \bm{u}_{k+j}^y > 0, \\
    a^y_c,          &\text{if } \bm{u}_{k+j}^y < 0, \\
    0, &\text{otherwise},  %[a^y_c, a^y_h], &\text{otherwise}, 
    \end{cases} \label{equ:proof1}\\
    \frac{\partial\bm{T}_{k+j+1}^x}{\partial\bm{u}_{k+j}^y} &= 0 \ \qquad \forall x\in\mathcal{B}, x\neq y, \label{equ:proof2}\\
    \frac{\partial\bm{T}_{k+j+1}^y}{\partial T^{out}_{k+j}} &= b^y ,
\end{align}
%where we use the subdifferential when the derivative is not defined, i.e. for $u^y_{k+j}=0$. 
Note that this proves that \eqref{equ:prop2-1} and~\eqref{equ:prop2-2} for the case $i=j+1$ if $a^y_h, a^y_c, b^y > 0$, $\forall y\in\mathcal{B}$. When $i>j+1$, Proposition~\ref{prop:heat propagation} implies:
\begin{align}
    \frac{\partial\bm{T}_{k+i}^z}{\partial\bm{T}_{k+j+1}^y} \geq 0, \qquad\forall z,y\in\mathcal{B},\ \forall 0\leq j<i-1,
\end{align}
with equality if and only if $y\notin\mathcal{N}^{(i-j-1)}(z)$ if the conditions in \eqref{equ:prop1-cond1} and \eqref{equ:prop1-cond2} hold.

Relying on the fact that the temperatures at time $k+i$ are potentially influenced by the temperatures in the whole building at time $k+j+1$, we have:
\begin{align}
    \frac{\partial\bm{T}_{k+i}^z}{\partial\bm{u}_{k+j}^y} &= \sum_{x\in\mathcal{B}}\frac{\partial\bm{T}_{k+i}^z}{\partial\bm{T}_{k+j+1}^x} \frac{\partial\bm{T}_{k+j+1}^x}{\partial\bm{u}_{k+j}^y}, \\
    &= \frac{\partial\bm{T}_{k+i}^z}{\partial\bm{T}_{k+j+1}^y} \frac{\partial\bm{T}_{k+j+1}^y}{\partial\bm{u}_{k+j}^y} \geq 0,
\end{align} 
where the second equality follows from \eqref{equ:proof2} and the inequality holds as long as \eqref{equ:prop1-cond1} and \eqref{equ:prop1-cond2} are respected and $a^y_h, a^y_c > 0$, $\forall y \in\mathcal{B}$. Furthermore, by Proposition~\ref{prop:heat propagation}, equality is only reached if $y\notin\mathcal{N}^{(i-j-1)}(z)$.

Similarly, we have:
\begin{align}
    \frac{\partial\bm{T}_{k+i}^z}{\partial T^{out}_{k+j}} &= \sum_{y\in\mathcal{B}}\frac{\partial\bm{T}_{k+i}^z}{\partial\bm{T}_{k+j+1}^y} \frac{\partial\bm{T}_{k+j+1}^y}{\partial T^{out}_{k+j}}, \\
    &= \sum_{y\in\mathcal{B}}\frac{\partial\bm{T}_{k+i}^z}{\partial\bm{T}_{k+j+1}^y} b^y > 0,
\end{align} 
where the strict inequality is respected as long as $b^y>0$, $\forall y \in\mathcal{B}$. Indeed, since $z\in\mathcal{N}(z)$ by definition, Proposition~\ref{prop:heat propagation} then implies that at least one of the terms in the sum is strictly positive, while the others are nonnegative.

    \section{Details on the data processing}
    \label{app:processing}

Once the data had been subsampled and processed as in~\cite[App. C]{di2021physically} and discarding the $23$\% of incomplete measurements, i.e. where at least the information from one sensor is missing, we were left with over \new{\numprint{80000}}\del{$80'000$} data points. Since the proposed PCNN architectures are based on NNs in our implementations (see Section~\ref{sec:implementations}), they are not able to handle missing values, which prompted us to create a data set of time series without missing values. 

As we aimed to design models that are able to predict the temperature dynamics over three day-long horizons, we truncated each sequence to a maximum of three days, and separated the heating and cooling seasons. We allowed the time series to overlap each hour, i.e. each four steps, to increase the data efficiency of the approach. Finally, since we implemented a warm-start period of \SI{3}{\hour} for all the models, %since we implemented a warm start period of three hours to initialize the hidden states of the black-box module of each model, as for the original single-room PCNN architecture~\cite{di2021physically}. When constructing the time series data set
we also made sure the last \SI{3}{\hour} of data exist for each time series. To avoid very short time series, we also ensured they always span at least \SI{12}{\hour}. Altogether, this allowed us to create more than \new{\numprint{11000}}\del{$11'000$} sequences of data without missing values, which were split in a training and a validation set with proportions $80\%$-$20\%$, respectively denoted $\mathcal{D}_{t}$ and $\mathcal{D}_{v}$, and where $\mathcal{D}_{t}\cap\mathcal{D}_{v}=\emptyset$. \new{For all NNs, the validation set is used to select the best set of weights along the training procedure.}

    \section{Solar irradiation preprocessing}
    \label{app:solar preprocessing}
    
To compute the solar irradiation on the windows of a thermal zone $z$ from the measured irradiation on a horizontal surface $Q^{sun}$, we rely on the altitude and azimuth angles, respectively $\phi$ and $\theta$, of the sun. The former captures the elevation of the sun above the horizon while the latter represents its deviation from the north, in the clockwise direction.

    \begin{figure}
    \begin{center}
    \includegraphics[width=0.7\linewidth]{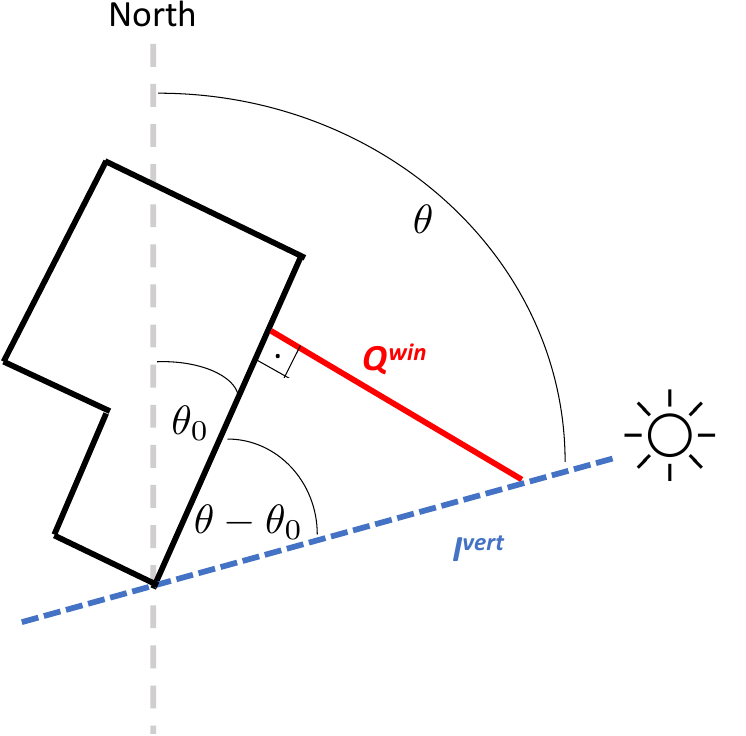}
    \caption{Sketch of the azimuth angles used to compute the solar irradiation on the windows of a building from the irradiation on a fixed vertical surface.}
    \label{fig:Sun}
    \end{center}
    \end{figure}
    
First\del{ly}, using the altitude of the sun and basic trigonometry, one can easily show that the measured irradiation on a horizontal surface corresponds to $Q^{sun} = I\sin\phi$, where $I$ is the global solar irradiation. Similarly, we know that the irradiation on a vertical surface following the sun, i.e.\new{,} tracking its azimuth angle to stay perpendicular to the incoming rays, can be computed as $I^{vert} = I\cos\phi$. We can hence write the solar irradiation on a vertical surface following the sun as follows:
\begin{align}
    I^{vert} &= Q^{sun}\frac{\cos\phi}{\sin\phi}.
\end{align}
Since building facades and windows have a fixed orientation in practice and do not follow the sun azimuth, we again use basic trigonometry to compute the irradiation on a north--south aligned surface facing east as $I^{vert}\sin\theta$. Finally, if the facade is not exactly facing east, we also need to account for its own "azimuth" $\theta_0$, i.e.\new{,}how much it is rotated clockwise starting from an east-facing position (Figure~\ref{fig:Sun}), which leads to:
\begin{align}
    Q^{win} &= I^{vert}\sin(\theta - \theta_0) \\
    &= Q^{sun}\frac{\cos\phi}{\sin\phi}\sin(\theta - \theta_0).
\end{align}
Once this has be done for each zone $z$, we can populate the required vector $\bm{Q}^{win}$ used by gray-box architectures in this work.

As one can readily observed, this processing only requires access to the elevation and azimuth angles of the sun, and to the orientation of the facade of interest. Furthermore, both solar angles solely depend on the geographical position of the building, i.e.\new{,} its latitude and longitude, and the time at which the measurement was taken. The position and orientation of a building can easily be found on plans or Google Maps, and we used the \texttt{Astral} Python library (\url{https://astral.readthedocs.io/en/latest/}) to compute the solar angles corresponding to each time step in our data.

Note that, while this processing works very well for unobstructed facades when its orientation is known, it cannot be used when for example other buildings or trees exist in front of the windows and create shading patterns. %that are not captured by the proposed engineered function. 
In that case, one has to rely on architectures which are able to automatically process horizontal solar irradiation measurements depending on time information, such as the LSTMs used in the black-box module of PCNNs. Nonetheless, we can use it in this paper since UMAR is not obstructed, leading to a very efficient computation of the true solar irradiation patterns on the windows of each zone.

    \section{Linear model identification}
    \label{app:linear}

As is classically done in linear system identification, we first\del{ly} used the least squares method to find the parameters $a^z_h$, $a^z_c$, $b^z$, $c^{zy}$, $e^z$ best fitting the training data for each thermal zone $z$ and neighboring zone $y\in\mathcal{N}(z)$, such as in~\cite[App. A.2]{di2021physically}. However, ensuring none of these parameters is negative, which is necessary to respect the underlying physics, produced $c^{23}=0$. This is clearly not physically meaningful, as it would mean there is no heat transfer from Zone $3$ to Zone $2$. Consequently, we also implemented a BO framework, relying on the \texttt{bayes\_opt} Python library~\cite{bayesopt}. This allowed us to extensively search for the best physically consistent parameters for each zone over a five-step prediction horizon, constraining all the parameters to be positive, for \new{\numprint{2300}}\del{$2300$} iterations starting with $200$ random initial points. 

    \section{Impact of the random seed}
    \label{app:seed}

The mean performance of the five NN-based model architectures, as well as the corresponding standard deviation, is presented in Table~\ref{tab:stds}. While the LSTM and S-PCNN models were run on five seeds due to their slightly higher sensitivity, the other results were obtained over three seeds leading to very consistent performance. As in the the original PCNN paper~\cite{di2021physically}, this hints at the robustness of the proposed approach, which does not seem significantly impacted by the random seed, or at least similarly to classical NN models. On this case study, the proposed X-PCNN and M-PCNN seem more robust to the choice of random seed than the S-PCNN. However, as shown in Table~\ref{tab:accuracy}, the latter sometimes outperforms M-PCNNs. Nonetheless, overall, \text{X-PCNNs} seem to have the upper hand, always attaining state-of-the-art performance even under different random seeds.

\begin{table}[]
    \centering
    \begin{tabular}{l|c|c} \hline
    \textbf{Model} & \textbf{MAE} & \textbf{MAPE} \\ \hline
    \textit{LSTM}                     & $1.33 \pm  0.04$ & $5.7\% \pm 0.2\%$ \\
    \textit{PiNN}                     & $1.38 \pm 0.01$ & $5.9\% \pm 0.1\%$ \\ \hline 
    \textbf{\textit{X-PCNN} (Ours)}   & $\bm{1.18 \pm 0.01}$ & $\bm{5.0\% \pm 0.0\%}$ \\
    \textit{M-PCNN} (Ours)            & $1.26 \pm 0.01$ & $5.4\% \pm 0.0\%$ \\
    \textit{S-PCNN} (Ours)            & $1.27 \pm 0.04$ & $5.4\% \pm 0.2\%$ \\ \hline 
    \end{tabular}
    \caption{MAE and MAPE of the methods investigated in this work, average over the three thermal zone, the three-day long horizon, and more than $750$ time series.}
    \label{tab:stds}
\end{table}

   \section{\new{Visualization of predictions}}
   \label{app:visualization}

    \begin{figure*}
    \begin{center}
    \includegraphics[width=\textwidth]{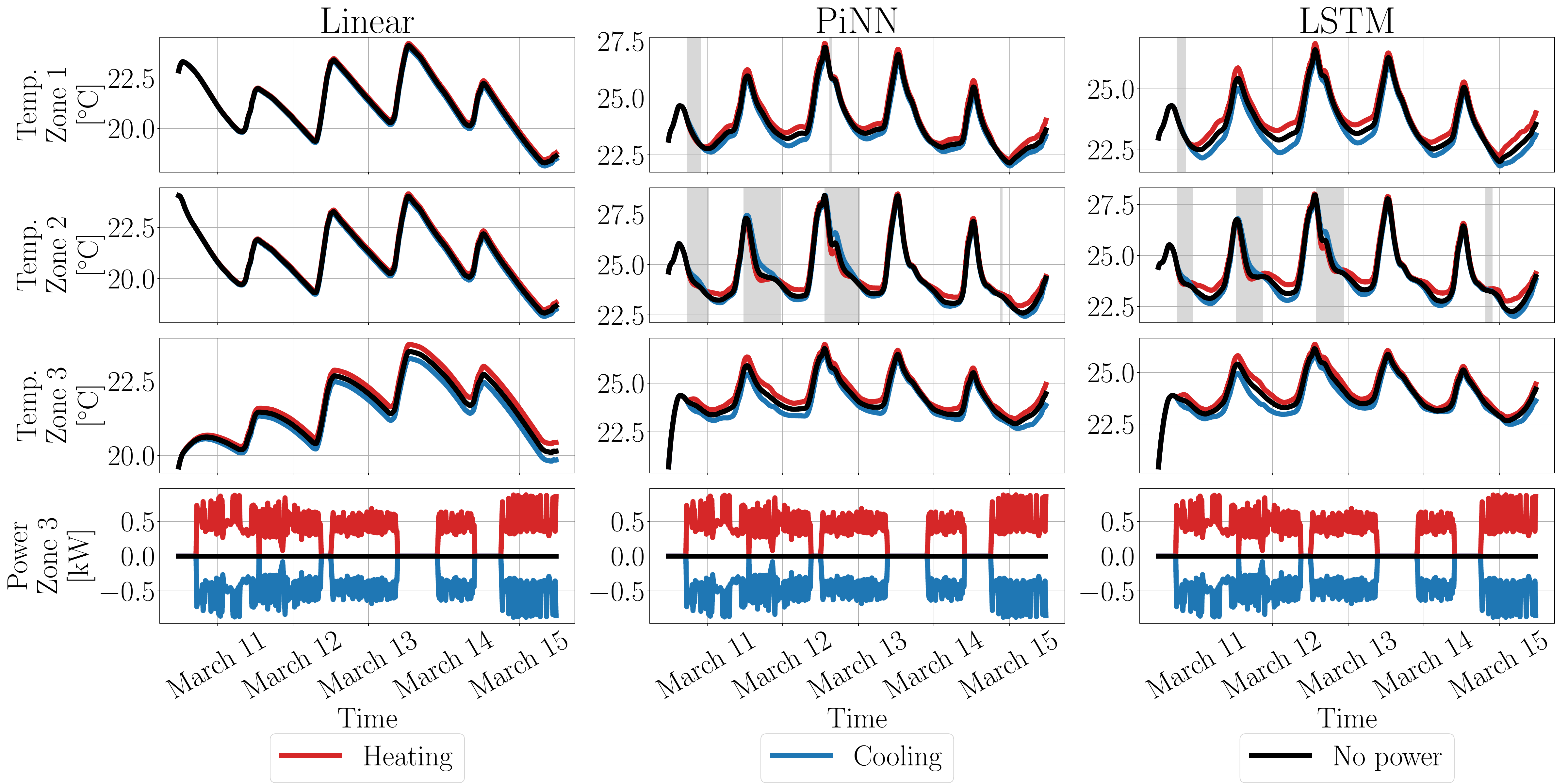}
    \caption{\new{Visualization of heat propagation for the linear on the left compared to a PiNN in the middle and an LSTM on the right. The bottom plots show the heating (red) and cooling (blue) patterns applied to Zone~$3$ while the power is turned off in Zone~$1$ and $2$, compared to the situation when no power is applied (black). The other plots depict the corresponding temperature predictions of each model in each of the three zones. Gray-shaded area mark physical inconsistencies of the PiNN ans LSTM.}}
    \label{fig:propagation2}
    \end{center}
    \end{figure*}

\new{To complement Figure~\ref{fig:propagation}, the same experiment was carried on with the linear model, and the corresponding predictions can be found in Figure~\ref{fig:propagation2} (left). Note that each subplot is using a custom scale to better visualize the impact of different power inputs. We additionally shaded physically inconsistent behaviors in each subplot in gray, i.e., whenever the predicted temperature when cooling is applied is higher than when heating is applied or no power input is used, or when the temperature when heating is applied is lower than when no power is used. This confirms that the identified linear model failed to fully capture the impact of heating and cooling but still behaves in a physically consistent manner, e.g., with heating leading to higher temperatures than cooling, similar to the behavior that can be observed for the S-PCNN in Figure~\ref{fig:propagation}. On the other hand, both the PiNN and LSTM show inconsistent behaviors, especially in Zone $2$ around the beginning of the prediction horizon.}

    \section{X-PCNN gradients}
    \label{app:x-pcnn gradients}

%The situation is indeed slightly different for the X-PCNN s
%Since several models are learned and ran separately 
In the case of X-PCNNs, at inference time, we use each single-zone PCNN to predict the next temperature in the corresponding zone. The new temperatures in the building are then updated in the data of all the single-zone PCNNs % overwrite it in the data, and then move to the next step.
so they can predict the next step. This is required because the single-zone PCNNs cannot evolve independently over the prediction horizon since they depend on temperatures in neighboring zones at teach step. % and then overwrite the temperature in all the zones in the data. % and use it to predict the next step. 
However, overwriting the data at each step breaks the automatic backpropagation of Python, and we cannot automatically compute the gradient of the temperature in zone $z$ with respect to power inputs or temperatures in another zone $z'$ without implementation overhead. We can only retrieve gradients with respect to each single-zone PCNN's inputs, i.e.\new{,} the power $u^z$, and the ambient temperature. \new{Note that, intuitively, these available gradients are expected to be larger in magnitude than the gradients with respect to power inputs in other zones since they have a direct impact on the zone of interest. This explains the absence of low gradient values ($<10^{-3}$) in Figure~\ref{fig:gradients} for X-PCNNs compared to M- and S-PCNNs.} Even if we can only compute parts of the gradients automatically, we still show them in Figure~\ref{fig:gradients} for reference. Note that as we already know X-PCNNs are physically consistent since they satisfies the criteria of Corollary~\ref{cor:consistency}, these implementation considerations do not put the architecture in jeopardy.

    \section{\new{Number of numerical gradient values}}
    \label{app:gradients}

\new{The numerical investigation of NN-based model gradients in Section~\ref{sec:gradients} is carried out on the validation data set of more than $750$ three-day long sequences ($288$ steps). Following Remark~\ref{rem:complexity}, for each of the three zones, we compute the gradients of its last temperature predictions with respect to power inputs in all the zones ($3$~values) and the ambient temperature ($1$~value) at each step, giving rise to more than $750\times3\times288\times(3+1)=\numprint{2592000}$ values. In the case of X-PCNNs, we only have access to half of these values since we do not compute gradients with respect to power inputs in other zones (Appendix~\ref{app:x-pcnn gradients}), which still leaves us with more than $1$ million values.}

\begin{comment}
    \section{Open loop experiment}
    \label{app:ol}
    
To have an indication of the capacity of the NN-based models to predict open loop data and decorrelate their predictions from temperature controllers, we sent random cooling inputs in UMAR over four days, from September $8$ to September $12$. The MAE of the NN-based models over this data is presented in Table~\ref{tab:ol}, where one can observe the better performance of the proposed PCNNs in general, but also large variations between the random seeds. This underlines the need to analyze the performance of the models on large data sets, as investigating the behavior of the model on a particular time series is heavily dependent on the chosen data. Nonetheless, these results seem to support the fact that LSTMs and PiNNs have a tendency to learn closed-loop system dynamics, at least on the case study analysed in this paper.
    
\begin{table}[]
    \centering
    \begin{tabular}{l|c|c|c} \hline
    \textbf{Model} & \textbf{Seed 1} & \textbf{Seed 2} & \textbf{Seed 3}
    \\ \hline
    \textit{LSTM}                     & $2.53$ & $1.63$ & $1.82$ \\
    \textit{PiNN}                     & $1.99$ & $1.31$ & $2.06$ \\ \hline 
    \textbf{\textit{X-PCNN} (Ours)}   & $1.48$ & $1.62$ & $1.28$ \\
    \textit{M-PCNN} (Ours)            & $1.26$ & $1.14$ & $0.97$ \\
    \textit{S-PCNN} (Ours)            & $0.99$ & $1.64$ & $0.97$ \\ \hline 
    \end{tabular}
    \caption{MAE over the open loop experiment.}
    \label{tab:ol}
\end{table}
\end{comment}

%    \section{MISC}
%Maybe put someting on DFAB here?

\end{document}